\numberwithin{equation}{section}
\providecommand{\keywords}[1]
{
  \small	
  \textit{Some key words---} #1
}
\newsavebox{\hangingbox}
\newcommand{\newhanging}{\par\parindent=\wd\hangingbox}
\newcommand{\SetKwHanging}[2]{%
  \algocf@newcommand{#1}[1]{%
    \sbox\hangingbox{\hbox{\KwSty{#2}\algocf@typo\ }}%
    {\let\\\newhanging\hangindent=\wd\hangingbox\hangafter=1\unhbox\hangingbox##1}%
  }%
}%
\DeclareSIUnit{\octet}{o}
\newcounter{thm}[section]
\newcounter{appen}[section]
\newtheorem{lemma}{Lemma}
\newcounter{rem}
\newtheorem{remark}[rem]{Remark}
\newenvironment{proof}[1][Proof]{\noindent \textbf{#1.}
}{\rule{0.5em}{0.5em}}
\newcounter{scenario}[section]
\newenvironment{scenario}[1][]{\refstepcounter{scenario}\par\medskip
   \textbf{Scenario~\thescenario. #1} \rmfamily}{\medskip}
\newcommand\given[1][]{\:#1\vert\:} 
\newcommand{\EE}{\mathbb{E}}
\newcommand{\PP}{\mathbb{P}}
\newcommand{\RR}{\mathbb{R}}
\newcommand{\dd}{{\rm d}}
\newcommand{\EspCond}[2]{\EE\left( #1 \given #2 \right)}
\newcommand{\VarCond}[2]{\Var\left( #1 \given #2 \right)}
\newcommand{\CovCond}[2]{\Cov\left( #1 \given #2 \right)}
\newcommand{\setT}{\mathcal{T}}
\newcommand{\setH}{\mathcal{H}}
\newcommand{\pointt}{\boldsymbol{t}}
\newcommand{\pointts}{\boldsymbol{s}}
\newcommand{\sLp}[2]{\mathcal{L}^{#1}\left(#2\right)} 
\newcommand{\inLp}[1]{\left\langle#1\right\rangle_2} 
\newcommand{\inH}[1]{\langle\!\langle#1\rangle\!\rangle} 
\newcommand{\normH}[1]{\mid\!\mid\!\mid\!#1\!\mid\!\mid\!\mid} 
\newcommand{\LnormH}[1]{\left|\! \left|\! \left|  #1\right| \! \right|  \! \right| } 
\newcommand{\nLaw}[2]{\mathcal{N}\left(#1, #2\right)} 
\DeclareMathOperator{\Var}{Var}
\DeclareMathOperator{\Cov}{Cov}
\DeclareMathOperator*{\argmax}{arg\,max}
\newcommand{\1}{\mathbf{1}}
\def\BIC{\textsc{bic}}
\def\ARI{\textsc{ari}}
\title{Clustering multivariate functional data using unsupervised binary trees}
\author{%
Steven Golovkine\thanks{Groupe Renault \& CREST - UMR 9194, Rennes, France, \href{mailto:steven.golovkine@ensai.fr}{steven.golovkine@ensai.fr}}
\and
Nicolas Klutchnikoff\thanks{Univ Rennes, CNRS, IRMAR - UMR 6625, F-35000 Rennes, France, \href{mailto:nicolas.klutchnikoff@univ-rennes2.fr}{nicolas.klutchnikoff@univ-rennes2.fr}}
\and
Valentin Patilea\thanks{Ensai, CREST - UMR 9194, Rennes, France, \href{mailto:valentin.patilea@ensai.fr}{valentin.patilea@ensai.fr}}
}
\date{\today}
\begin{document}

\maketitle

\begin{abstract}
We propose a model-based clustering algorithm for a general class of functional data for which the components could be curves or images. The random functional data realizations could be measured with error at discrete, and possibly random, points in the definition domain. The idea is to build a set of binary trees by recursive splitting of the observations. The number of groups are determined in a data-driven way. The new algorithm provides easily interpretable results and fast predictions for online data sets. Results on simulated datasets reveal good performance in various complex settings. The methodology is applied to the analysis of vehicle trajectories on a German roundabout. 
\end{abstract}

\keywords{Gaussian mixtures; Model-based clustering; Multivariate Functional Principal Components.}


\section{Introduction}

Motivated by a large number of applications ranging from sports to the automotive industry and healthcare, there is a great interest in modeling observation entities in the form of a sequence of possibly vector-valued measurements, recorded intermittently at several discrete points in time.  Functional data analysis (FDA) considers such data as being values on the realizations of a stochastic process, recorded with some error, at discrete random times. The purpose of FDA is to study such trajectories, also called curves or functions. See, \emph{e.g.}, \cite{ramsay_functional_2005,wang_functional_2016,horvath_inference_2012} for some recent references. The amount of such data collected grows rapidly as does the cost of their labeling. Thus, there is an increasing interest in methods that aim to identify homogeneous groups within functional datasets.

Clustering procedures for functional data have been widely studied in the last two decades, see, for instance,  \cite{hall_achieving_2012,hall_classification_2013,del_hall_19,carroll_unexpected_2013,jacques_functional_2014} and references therein. See also \cite{bouveyron_model_based_2019} for a recent textbook. In particular, for Gaussian processes, \cite{tarpey_clustering_2003} show that the cluster centers found with $k$-means are linear combinations of the eigenfunctions from the functional Principal Components Analysis (fPCA). A discriminative functional mixture model is developed by \cite{bouveyron_discriminative_2015} for the analysis of a bike sharing system from cities around the world.

Algorithms built to handle multivariate functional data have gained much attention in the last few years. Some of these methods are based on $k$-means algorithm with a specific distance function adapted to multivariate functional data. See \emph{e.g} \cite{tokushige_crisp_2007,ieva_multivariate_2013}. \cite{kayano_functional_2010} consider Self-Organizing Maps built on the coefficients of the curves into orthonormalized Gaussian basis expansion. \cite{jacques_model-based_2014} developed model-based clustering for multivariate functional data. In their model, they assume a cluster-specific Gaussian distribution for the principal component scores. The eigen-elements are estimated using an approximation of the curves into a finite dimensional functional space and are cluster specific. \cite{schmutz_clustering_2020} have recently extended the previous model by modeling all principal components whose estimated variances are non-null. The underlying model for these methods usually consider only amplitude variations. Unlike others, \cite{park_clustering_2017} present a specific model for functional data to consider phase variations. Finally, \cite{traore_clustering_2019} propose a mix between dimension reduction and non-parametric approaches by deriving the envelope and the spectrum from the curves and have applied it to nuclear safety experiments.

We propose a clustering algorithm based on the recursive construction of binary trees to recover the groups. Our idea extends the CUBT method \cite{fraiman_interpretable_2013} to the functional setting, and we therefore call it $\mathtt{fCUBT}$. At each node of the tree, a model selection test is performed, after expanding the multivariate functional data into a well chosen basis. Similarly to \cite{pelleg_x-means_2000} and \cite{cheng_model-selection-based_2004}, using the Bayesian Information Criterion (\BIC), we test whether there is evidence that the data structure is a mixture model or not. A significant advantage of our procedure compared to \cite{jacques_model-based_2014, schmutz_clustering_2020}, is its ability to estimate the number of groups within the data while this number have to be pre-specified in the other methods. Moreover, the tree structure allows us to consider only a small number of principal components at each node of the tree and not to estimate a global number of components for the clustering. Considering tree methods designed for functional data, \cite{staerman_functional_2019} extend the popular Isolation Forest algorithm used for anomaly detection, to the functional context. Our $\mathtt{fCUBT}$ algorithm also allows for classes defined by certain types of phase variations.

The remainder of the paper is organized as follows. In Section \ref{sec:model}, we define a model for a mixture of curves for multivariate functional data with the coordinates having possibly different definition domains. Given a dataset, that is a set of, possibly noisy, intermittent measures of an independent sample of realizations of the stochastic process,  in Section \ref{sec:estimation}, we explain how compute the different quantities that are required in the clustering procedure. In Section \ref{sec:fcubt}, we develop the construction of our clustering algorithm, named $\mathtt{fCUBT}$. In Section \ref{sec:emp_analysis}, we study the behavior of $\mathtt{fCUBT}$ and compare its performance with competing methods both on simulated and real datasets. Our algorithm performs well to estimate the number of groups in the data as well as grouping similar objects together. Once the tree has been grown, it can be used to predict the labels given new observations. The prediction accuracy is compared with the ones derived from supervised methods, and exhibits good performance. A real data application on vehicle trajectories analysis illustrates the effectiveness of our approach. Section \ref{sec:extensions} presents an extension of the method to images data based on the eigendecomposition of the image observations using the FCP-TPA algorithm \cite{allen_multi-way_2013}. The proofs are relegated to the Supplementary Material.

\section{Model and methodology}\label{sec:model}

\subsection{Notion of multivariate functional data}

The structure of our data, referred to as \emph{multivariate functional data}, is very similar to that presented in \cite{happ_multivariate_2018}. The data consist of independent trajectories  of a  vector-valued stochastic process $X= (X^{(1)}, \dots, X^{(P)})^\top$, $P\geq 1$. (Here and in the following, for any matrix $A$, $A^\top$ denotes its transpose.) For each $1\leq p \leq P$, let $\mathcal T_p$ be a rectangle in some Euclidean space $\mathbb R ^{d_p}$ with $d_p\geq 1$, as for instance, $\mathcal T_p = [0,1]^{d_p}$. Each coordinate $X^{(p)}:\setT_p \rightarrow \RR$ is assumed to belong to  $\sLp{2}{\setT_p}$, the Hilbert space of squared-integrable real-valued functions defined on $\setT_p$, endowed with the usual inner product that we denote by $\inLp{\cdot, \cdot}$. Thus $X$ is a stochastic process indexed by $\pointt = (t_1,\ldots,t_P)$ belonging to the $P-$fold Cartesian product $\setT:=\setT_1 \times \cdots\times \setT_P$ and taking values in the $P-$fold Cartesian product space $\setH \coloneqq \sLp{2}{\setT_1} \times \dots \times \sLp{2}{\setT_P}$. 

We consider the function $\inH{\cdot, \cdot} : \setH \times \setH \rightarrow \RR$,
\begin{equation}\label{eq:innerprodH}
\inH{f, g} \coloneqq \sum_{p=1}^{P} \inLp{f^{(p)}, g^{(p)}}= \sum_{p=1}^{P}\int_{\setT _p} f^{(p)}(t_p)g^{(p)}(t_p)\dd t_p, \quad f, g \in \setH.
\end{equation}  
Then, $\setH$ is a Hilbert space with respect to the inner product $\inH{\cdot, \cdot}$ \cite{happ_multivariate_2018}. We denote by $\normH{\cdot}$ the norm induced by $\inH{\cdot, \cdot}$. Let $\mu : \setT \rightarrow \setH$ denote the mean function of the process $X$, $\mu(\pointt) \coloneqq \EE(X(\pointt)), \pointt \in \setT.$

\subsection{A mixture model for curves}\label{subsec:mixture_curves}

The standard model-based clustering approaches consider that data is sampled from a mixture of probability densities on a finite dimensional space. As pointed out by \cite{jacques_model-based_2014}, this approach is not directly applicable to functional data since the notion of probability density generally does not exist for functional random variables. See also \cite{delaigle_defining_2010}. Consequently, in general, model-based clustering approaches assume a mixture of parametric distributions on the coefficients of the representation of the process $X$ in a basis. This is also the way we proceed in the following. 

Let $K \geq 1$ be an integer, and let $Z$ be a random variable taking values in $\{1, \dots, K\}$ such that $\PP(Z = k) = p_k$ with $p_k > 0$ and $\sum_{k=1}^{K}p_k = 1.$ The variable $Z$ is a latent variable, also called the class label, representing the cluster membership of the realizations of the process.

Although $X = \{X(\pointt)\}_{\pointt \in \setT}$ could be defined on a set of vectors and could be vector-valued, we shall call \emph{curves} its independent realizations which generate the data. We consider that the stochastic process $X$ follows a \emph{functional mixture model with $K$ components}:
\begin{equation}\label{eq:mixture_curve}
X(\pointt) = \sum_{k=1}^{K}\mu_k(\pointt)\1_{\{Z = k\}} + \sum_{j \geq 1}^{} \xi_j\phi_j(\pointt), \quad \pointt \in \setT,
\end{equation}
where
\begin{itemize}
	\item $\mu_1, \dots, \mu_K\in \setH$ are the mean curves per cluster.
	\item $\{\phi_j\}_{j \geq 1}$ is an orthonormal basis of $\setH$, $ \inH{\phi_j, \phi_{j^\prime}} = 0$ and  $\normH{\phi_j}^2 = 1$, $\forall 1\leq j\neq j^\prime <\infty$.
	\item $\xi_j$, $j\geq 1$ are real-valued random variables which are conditionally independent given $Z$. For each $1\leq k\leq K$, the conditional distribution of $\xi_j$ given $Z = k$ is a zero-mean Gaussian distribution with variance $\sigma_{kj}^2\geq 0 $, for all $j \geq 1$. Moreover,  $\sum_{k=1}^K \sum_{j \geq 1}^{}\sigma_{kj}^2 < \infty$ and for any $k\neq k^\prime$, $\sum_{j \geq 1}^{}|\sigma_{kj}^2 - \sigma_{k^\prime j}^2|  >0$. 
\end{itemize}

The condition that $\{\phi_j\}_{j \geq 1}$ is an orthonormal basis is not really necessary; it just allows us to write some technical conditions in a simpler way. Our assumptions imply that $\sum_{j \geq 1}^{}\Var(\xi_j) < \infty$ and thus $\EE\left({\normH{X}^2}\right) < \infty$. The definition of the processes $X$ \eqref{eq:mixture_curve} does not require us to know the basis $\{\phi_j\}_{j\geq 1}$. However, for  inference purposes, one needs to consider a representation in a  workable  basis. The following result presents the relationship between the two representations. 

\begin{lemma}\label{lemma:c_j}
Let $X$ be defined as in \eqref{eq:mixture_curve} for some orthonormal basis  $\{\phi_j\}_{j \geq 1}$.
Let $\{\psi_j\}_{j \geq 1}$ be another orthonormal basis in $\setH$ and consider
\begin{equation}\label{eq:c_j}
c_j = \inH{X - \mu, \psi_j}, \quad j \geq 1 \quad\text{where}\quad \mu(\cdot) = \sum_{k=1}^{K}p_k\mu_k(\cdot).
\end{equation}
Then
\begin{equation}\label{eq:c_j_law}
c_j \given Z = k \sim \nLaw{m_{kj}}{\tau_{kj}^2}, \;\;\text{where}\;\; m_{kj} = \inH{\mu_k - \mu, \psi_j} \;\;\text{and}\;\; \tau_{kj}^2 = \sum_{l \geq 1}^{}\inH{\phi_{l}, \psi_j}^2\sigma_{kl}^2.
\end{equation}
Moreover,
\begin{equation*}
\CovCond{c_i, c_j}{Z = k} = \sum_{l \geq 1}^{}\inH{\phi_l, \psi_i}\inH{\phi_l, \psi_j}\sigma_{kl}^2, \quad i, j \geq 1.
\end{equation*}
Each $c_j$ has a centered Gaussian distribution and, for any $i,j\geq 1$, 
$$\Cov(c_i, c_j) = \sum_{k=1}^{K}p_k\left(\sum_{l \geq 1}^{}\inH{\phi_{l}, \psi_i}\inH{\phi_{l}, \psi_j}\sigma_{kl}^2 + \inH{\mu_k - \mu, \psi_i}\inH{\mu_k - \mu, \psi_j}\right).$$
\end{lemma}

\begin{remark}
 Despite the Gaussian assumption for the $\xi_j$, the model \eqref{eq:mixture_curve} is quite general because there is practically no assumption concerning the basis $\{\phi_j\}_{j \geq 1}$.  Lemma \ref{lemma:c_j} shows that, no matter what the user's choice may be for the orthonormal basis $\{\psi_j\}_{j \geq 1}$, the clusters will be preserved after expressing the realizations of the process into this basis. However, depending on the objective, some bases might be more suitable than another.
\end{remark}

\begin{remark}\label{rem:diff_b}
A careful look at the proof of Lemma \ref{lemma:c_j} reveals that it remains true even if  $\{\phi_j\}_{j \geq 1}$ is not an orthonormal basis, which could be appealing for extending our framework. For instance, consider the case where the clusters of curves correspond to representations in different bases. As an illustration, let us consider the case $K = 2$ and let $\{\phi_{1,j}\}_{j\geq 1}$ and $\{\phi_{2,j}\}_{j\geq 1}$ be orthonormal systems. The union of these sets is not necessarily an orthonormal system however. The centered realizations from each cluster corresponds to the representations $\sum_{j \geq 1}^{} \eta_{1,j}\phi_{1,j}(\pointt)$ and  $\sum_{j \geq 1}^{} \eta_{2,j}\phi_{2,j}(\pointt)$, respectively. Here, $ \eta_{1j},  \eta_{2j}$, $j\geq 1$ are independent zero-mean Gaussian variables with variances $\sigma^2_{1,j}$ and $\sigma^2_{2,j}$, respectively, such that  $\sum_{j \geq 1}^{}\{ \sigma^2_{1,j} + \sigma^2_{2,j} \}< \infty$. We can then write the process $X$ in the form \eqref{eq:mixture_curve}~: $X(\pointt) - \mathbb E [X(\pointt)   ] =  \sum_{j \geq 1}^{} \xi_j\phi_j(\pointt)$ with 
$$
\xi_{2j-1} = \1_{\{Z = 1\}} \eta_{1,j} +  \1_{\{Z = 2\}} \delta_{0}  \quad \text{ and } \quad  \xi_{2j} = \1_{\{Z = 1\}} \delta_{0} +  \1_{\{Z = 2\}} \eta_{2,j} ,\quad j\geq 1,
$$
and 
$$
\phi_{2j-1} (\pointt) = \phi_{1,j}(\pointt)  \quad \text{ and } \quad \phi_{2j} (\pointt) = \phi_{2,j} (\pointt), \quad j\geq 1,\;\pointt \in\setT,
$$
 where $\delta_0$ is  the Dirac mass at the origin.
Thus each $\xi_j$ is a mixture between two centered normal variables with positive and zero variance, respectively. Given any orthonormal basis $\{\psi_j\}_{j \geq 1}$ , the $c_j = \inH{X - \mu, \psi_j}$ will still have the properties described in Lemma \ref{lemma:c_j}.
\end{remark}

\begin{remark}
Our framework and Lemma \ref{lemma:c_j} could also capture mixtures induced by some  phase variations. This kind of modeling has received increasing attention recently. See, \emph{e.g.}, \cite{marron_functional_2015,park_clustering_2017}. Indeed,   let $h_k:\setT\rightarrow \setT$, 
$1\leq k \leq K$, be different  maps defined on $\setT$. 
For instance, when $P=1$ and  $\setT_P = [0,1]$,  the maps could be in the form  $h_k(t) = \{b_kt^{a_k}+ (1-b_k)\}^{c_k}$ for some  $a_k,c_k>0$ and $0<b_k\leq 1$.    
We can then define $\{\phi_{k,j}\}_{j\geq 1}$ as $\phi_{k,j}(\pointt) = \overline{\phi}(h_k(\pointt))$, $\pointt\in\setT$,  $k=1,\ldots,K$, where $\{\overline{\phi}_j\}_{j \geq 1}$ is some reference orthonormal basis. We next consider that each cluster corresponds to a representation in a basis $\{\phi_{k,j}\}_{j \geq 1}$, $1\leq k \leq K$. Remark \ref{rem:diff_b} shows that Lemma \ref{lemma:c_j} remains valid.  More generally, each representation in a basis  $\{\phi_{k,j}\}_{j \geq 1}$, $1\leq k \leq K$, could be a mixture of two or more components as in \eqref{eq:mixture_curve}. After gathering the $K$ representations, the result would still be a functional mixture in the sense of \eqref{eq:mixture_curve}.  
\end{remark}

 Remarks 1 to 3 indicate that our modeling approach is quite flexible and can capture many interesting situations.  In applications, one cannot use an infinite number of terms in the representation of $X$, and such a representation must be truncated. We show in Lemma \ref{lemma:approx} in the Supplementary Material that such a truncation could be arbitrarily accurate.

\subsection{Multivariate Karhunen-Lo\`eve representation}

 A convenient basis for representing the realizations of $X$ is that given by the eigenfunctions of the covariance operator. In this section, following \cite{happ_multivariate_2018}, we recall the formal definition of that basis in the context of our model.

Let $C$ denote the $P\times P$ matrix-valued covariance function which, for $\pointts,\pointt \in \setT$, is defined as
\begin{equation*}\label{eq:covariance_function}
C(\pointts, \pointt) \coloneqq \EE\left(\{X(\pointts) - \EE(X(\pointts)\}  \{X(\pointt) - \EE(X(\pointt))\}^\top\right), \quad \pointts, \pointt \in \setT.
\end{equation*}
More precisely, for $1\leq p,q\leq P$, the $(p, q)$th entry of the matrix $C(\pointts, \pointt)$ is the covariance function between the $p$th and the $q$th components of the process $X$:
\begin{equation*}
C_{p, q}(s_p, t_q) \coloneqq \EE\left(\{X^{(p)}(s_p) - \EE(X^{(p)}(s_p)\} \{X^{(q)}(t_q) - \EE(X^{(q)}(t_q))\}\right), \quad s_p\in\setT_p, t_q \in\setT_q.
\end{equation*}
Following \cite{happ_multivariate_2018}, it is assumed herein that 
$$
\max_{1\leq p \leq P} \sup_{s_p\in\setT_p}\int_{\setT_q} C^2_{p, q}(s_p, t_q) \dd t_q <\infty,
$$
and that, for all $1\leq p \leq P$, $C_{p, q}(s_p, \cdot) $ is uniformly continuous in the sense that
$$
\forall \epsilon >0 \;\exists \delta >0 \; : \;|t_q-t_q^\prime|<\delta \;\Rightarrow \; \max_{1\leq p \leq P} \sup_{s_p\in \setT_p}\left| C_{p, q}(s_p, t_q)  - C_{p, q}(s_p, t_q^\prime) \right| <\epsilon.
$$
In particular, $C_{p, q}(\cdot, \cdot) $ belongs to  $\sLp{2}{\setT_p\times \setT_q}$.

Let $\Gamma: \setH \mapsto \setH$ denote the covariance operator of $X$, defined as the integral operator with kernel $C$. That is, for $f \in \setH$ and $\pointt\in\mathcal T$, the $q$th component of $\Gamma f(\pointt)$ is given by
\begin{equation*}
(\Gamma f)^{(q)}(t_q) \coloneqq \inH{C_{\cdot, q}(\cdot, t_q), f(\cdot)}, \quad t_q \in \setT_q. 
\end{equation*}
By the results in \cite{happ_multivariate_2018}, and the theory of Hilbert-Schmidt operators, \emph{e.g.} \cite[Chapter \textsc{VI}]{reed_methods_1980}, there exists a complete orthonormal basis $\{\varphi_j\}_{j \geq 1}\subset \mathcal H$ and a  sequence of real numbers $\lambda_1\geq \lambda_{2}\geq \ldots\geq 0$ such that $\Gamma \varphi_j = \lambda_{j} \varphi_j$ and $\lambda_{j}\rightarrow 0$ as $j\rightarrow \infty.$ 
The $\lambda_{j}$'s are the eigenvalues of the covariance operator $\Gamma$ and the $\varphi_j$'s are the associated eigenfunctions. Then, $X$ as defined in \eqref{eq:mixture_curve} allows for the Karhunen-Lo\`eve representation
\begin{equation}\label{dec_2}
X(\pointt) = \mu (\pointt) + \sum_{j\geq 1}\mathfrak{c}_{j} \varphi_j(\pointt), \quad \pointt\in\mathcal T,
\quad \text{ with } \quad \mathfrak{c}_{j}= \inH{X - \mu,\varphi_j},
\end{equation}
and 
$\Cov(\mathfrak{c}_{j},\mathfrak{c}_{l})=\lambda_{j} \1_{ \{j=l\} }$. Let us call $\{\varphi_j\}_{j \geq 1}$ the multivariate functional principal component analysis (MFPCA) basis. 

Let $J\geq 1$ and assume that $\lambda_{1}> \lambda_{2}\ldots>\lambda_{J}> \lambda_{J+1}$, which in particular, implies that the first $J$ eigenvalues are nonzero. By an easy extension of \cite[Theorem 3.2]{horvath_inference_2012}, we can deduce that, up to a sign, the elements of the MFPCA basis are characterized by the following property~:
\begin{align}
\varphi_1 &= \arg\max_{\varphi} \inH{\Gamma \varphi, \varphi} \quad\text{such that}\quad \normH{\varphi} = 1, \notag \\
\varphi_j &= \arg\max_{\varphi} \inH{\Gamma \varphi, \varphi} \quad\text{such that}\quad \normH{\varphi} = 1 ~\text{and}~ \inH{\varphi, \varphi_l} = 0,\;\;\forall l < j \leq J.\label{property_fpca}
\end{align}
The MFPCA basis is the one which will induce the most accurate truncation for a given $J$ (see the Lemma \ref{lemma:best_basis} for a proof). Therefore, among the workable bases one could use in practice, the MFPCA basis is likely to be a privileged one.

By Lemma \ref{lemma:c_j}, whenever $X$ is defined as in \eqref{eq:mixture_curve}, for any $J\geq 1$, the distribution of the vector $\mathfrak{c} = (\mathfrak{c}_{1},\ldots,\mathfrak{c}_{J})^\top$ defined in \eqref{dec_2} is a centered (multivariate) Gaussian mixture model (GMM) distribution
\begin{equation}\label{GMM_def}
g(\mathfrak{c}) = \sum_{k=1}^K p_k f_k(\mathfrak{c} \vert \mathbf{m}_{J,k}, \Sigma_{J,k}), \quad \mathfrak{c}\in\mathbb R^J,
\end{equation}
where, for each $1\leq k\leq K$, $f_k(\cdot \vert \mathbf{m}_{J, k}, \Sigma_{J, k})$ is the probability density function of a multivariate Gaussian distribution of the $k$th given the values of its parameters $\mathbf{m}_{J, k}$ and $\Sigma_{J, k}$ such that
$$
\mathbf{m}_{J,k} = (\inH{\mu_k - \mu, \varphi_1} ,\ldots,\inH{\mu_k - \mu, \varphi_J}  )^\top,
$$
and the $(i,j)-$entry of the matrix  $\Sigma_{J,k}$ given by 
$$
\CovCond{\mathfrak{c}_i, \mathfrak{c}_j}{Z = k} = \sum_{l \geq 1}^{}\inH{\phi_l, \varphi_i}\inH{\phi_l, \varphi_j}\sigma_{kl}^2, \quad 1\leq  i, j \leq J.
$$
We point out that the GMM defined in \eqref{GMM_def} is consistent with respect to  $J$  in the sense that, for any $J\geq 1$, $(\mathfrak{c}_{1},\ldots,\mathfrak{c}_{J})^\top$ and  $(\mathfrak{c}_{1},\ldots,\mathfrak{c}_{J},\mathfrak{c}_{J+1})^\top$ have the same label $Z$, which also coincides with the label of the curve $X$. In particular, in general, the label $Z$ of a curve in the sample, could be identified by $\mathfrak{c}_{1}$. Moreover, since $\Var(\mathfrak{c}_{1}) > \Var(\mathfrak{c}_{j})$, $\forall j>1$, it is likely that the first coefficient in the Karhunen-Lo\`eve representation is informative for identifying the mixture structure. 

Let 
\begin{equation}\label{eq:MKL_truncated}
X_{\lceil J \rceil}(\pointt) = \mu(\pointt) + \sum_{j = 1}^{J}\mathfrak{c}_j\varphi_j(\pointt), \quad \pointt \in \mathcal{T}, \quad J \geq 1,
\end{equation}
be the truncated Karhunen-Lo\`eve expansion of the process $X$ and
\begin{equation}\label{eq:KL_truncated}
X_{\lceil J^{(p)} \rceil}^{(p)}(t_p) = \mu^{(p)}(t_p) + \sum_{j = 1}^{J^{(p)}}\mathbf{c}_j^{(p)}\rho_j^{(p)}(t_p), \quad t \in \setT_p, \quad J^{(p)} \geq 1, \quad 1 \leq p \leq P,
\end{equation}
be the truncated Karhunen-Lo\`eve expansion of the components of the process $X$. \cite{happ_multivariate_2018} derive a direct relationship between the truncated representations \eqref{eq:KL_truncated} of the single elements $X^{(p)}$ and the truncated representation \eqref{eq:MKL_truncated} of the multivariate functional data $X$. We recall this result in the following lemma.

\begin{lemma}[\cite{happ_multivariate_2018}, Proposition 5]\label{lemma:KLtoMKL}
Let $X_{\lceil J \rceil}$ be the truncation of $X$ as in \eqref{eq:MKL_truncated} and $X_{\lceil J^{(p)} \rceil}^{(p)}$ be the truncation of $X^{(p)}$ for each $1 \leq p \leq P$ as in \eqref{eq:KL_truncated}.

\begin{enumerate}
\item Let $\Gamma^{(p)}$ be the univariate covariance operator associated with $X^{(p)}$. The positive eigenvalues of $\Gamma^{(p)}$, $\lambda^{(p)}_1\geq \ldots \geq \lambda^{(p)}_{J^{(p)}} > 0, J^{(p)} < J$ correspond the positive eigenvalues of the matrix $\mathbf{A}^{(p)} \in \mathbb{R}^{J \times J}$ with entries
$$\mathbf{A}_{jj^\prime}^{(p)} =  (\lambda_j \lambda_{j^\prime})^{1/2} \inLp{\varphi_j^{(p)}, \varphi_{j^\prime}^{(p)}}, \quad j,j^\prime = 1, \ldots, J.$$
The eigenfunctions of $\Gamma^{(p)}$ are given by
$$\rho_j^{(p)}(t_p) = \left(\lambda_j^{(p)}\right)^{-1/2} \sum\nolimits_{j^\prime=1}^J \lambda_{j^\prime}^{1/2}  [\boldsymbol{u}_j^{(p)}]_{j^\prime} \varphi_{j^\prime}^{(p)}(t_p), \quad t_p \in \setT_,~ j = 1, \ldots, J^{(p)},$$
where $\boldsymbol{u}_j^{(p)}$ denotes an (orthonormal) eigenvector of $\mathbf{A}^{(p)}$ associated with  eigenvalue $\lambda_j^{(p)}$ and $[\boldsymbol{u}_j^{(p)}]_{j^\prime}$ denotes the $j^\prime$-th entry of this vector. Finally, the univariates scores are
$$\mathbf{c}_j^{(p)} 
= \inLp{X^{(p)}, \rho_j^{(p)}}
= \left(\lambda_j^{(p)}\right)^{-1/2} \sum\nolimits_{j^\prime=1}^J \lambda_{j^\prime}^{1/2}  [\boldsymbol{u}_j^{(p)}]_{j^\prime} \sum\nolimits_{j^{\prime\prime} = 1}^J \mathfrak{c}_{j^{\prime\prime}} \inLp{\varphi_{j^\prime}^{(p)}, \varphi_{j^{\prime\prime}}^{(p)}}.$$

\item The positive eigenvalues of $\Gamma$, $\lambda_1, \ldots, \lambda_J > 0$, with $J \leq \sum_{p = 1}^{P} J^{(p)} \eqqcolon J_+$ are the positive eigenvalues of the matrix $\mathbf{Z} \in \mathbb{R}^{J_+ \times J_+}$ consisting of blocks $\mathbf{Z}^{(pp^\prime)} \in \mathbb{R}^{J^{(p)} \times J^{(p^\prime)}}$ with entries
$$\mathbf{Z}_{jj^\prime}^{(pp^\prime)} = \Cov \left(\mathbf{c}_j^{(p)}, \mathbf{c}_{j^\prime}^{(p^\prime)}\right), \quad j = 1, \ldots ,  J^{(p)},~ j^\prime = 1, \ldots ,  J^{(p^\prime)},~ p,p^\prime = 1, \ldots, P.$$
The eigenfunctions of $\Gamma$ are given by $\varphi_j^{(p)}(t_p) = \sum\nolimits _{j^\prime = 1}^{J^{(p)}} [\boldsymbol{v}_j]_{j^\prime}^{(p)} \rho_j^{(p)}(t_p), t_p \in \setT_p, j = 1, \ldots, J,$ where $[\boldsymbol{v}_j]^{(p)} \in \mathbb{R}^{J^{(p)}}$ denotes the $p$-th block of an (orthonormal) eigenvector $\boldsymbol{v}_j$ of $\mathbf{Z}$  associated with eigenvalue $\lambda_j$. The scores $\mathfrak{c}_j \in\mathbb R $ are given by
$$\mathfrak{c}_j = \sum\nolimits _{p = 1}^P \sum\nolimits _{j^\prime = 1}^{J^{(p)}}  [\boldsymbol{v}_j]_{j^\prime}^{(p)} \mathbf{c}_{j^\prime}^{(p)}.$$
\end{enumerate}
\end{lemma}

Lemma \ref{lemma:KLtoMKL} allows us to compute the scores of the $P$-dimensional stochastic process $X$ using the scores of each of the $P$ components $X^{(p)}$  as univariate building bloks.  However, when $P$ grows, it might not be suitable to perform $P$ univariate fPCA from a computational point of view. Recently, \cite{hu_sparse_2020} extend this result to large-dimensional processes by imposing a sparsity assumption. This possible extension will be investigated in future work. 

\section{Learning the parameters}\label{sec:estimation}

In real data applications, the realizations of $X$ are usually measured with error at discrete, and possibly random, points in the definition domain. Therefore, let us consider $N$ curves $X_1, \dots, X_n, \dots, X_N$ generated as a random sample of the $P$-dimensional stochastic process $X$ with continuous trajectories. For each $1 \leq n \leq N$, and given a vector of positive integers $\boldsymbol{M}_n = (M_n^{(1)}, \dots, M_n^{(P)}) \in \mathbb{R}^P$, let $T_{n, \boldsymbol{m}} = (T_{n, m_1}^{(1)}, \dots, T_{n, m_P}^{(P)}), 1 \leq m_p \leq M_n^{(p)}, 1 \leq p \leq P$, be the random observation times for the curve $X_n$. These times are obtained as independent copies of a variable $\boldsymbol{T}$ taking values in $\mathcal{T}$. The vectors $\boldsymbol{M}_1, \dots, \boldsymbol{M}_N$ represent an independent sample of an integer-valued random vector $\boldsymbol{M}$ with expectation $\boldsymbol{\mu}_{\boldsymbol{M} }$ which increases with $N$. We assume that the realizations of $X$, $\boldsymbol{M}$ and $\boldsymbol{T}$ are mutually independent. The observations associated with a curve, or trajectory, $X_n$ consist of the pairs $(Y_{n, \boldsymbol{m}}, T_{n, \boldsymbol{m}}) \in \mathbb{R}^P \times \mathcal{T}$, where $\boldsymbol{m} = (m_1, \dots, m_P), 1 \leq m_p \leq M_n^{(p)}$, $1 \leq p \leq P$, and $Y_{n, \boldsymbol{m}}$ is defined as
\begin{equation}\label{data_real}
Y_{n, \boldsymbol{m}} = X_n(T_{n, \boldsymbol{m}}) + \varepsilon_{n, \boldsymbol{m}}, \quad  1 \leq n \leq N,
\end{equation}
with the $\varepsilon_{n, \boldsymbol{m}}$ being independent copies of a centered error random vector $\varepsilon\in \mathbb{R}^P$ with finite variance. We use the notation $X_n(\boldsymbol{t})$ for the value at $\boldsymbol{t}$ of the realization $X_n$ of $X$. The $N$-sample of $X$ is composed of two sub-populations: a \textit{learning set} of $N_0$ curves to estimate the mixture components of the process $X$ and a set of $N_1$ curves to be classified using the previous grouping as a classifier that we call the \textit{online set}. Thus, $1 \leq N_0, N_1 < N$ and $N_0 + N_1 = N$. Let $X_1, \dots, X_{N_0}$ denote the curves corresponding to the \textit{learning set}. The learning sample will be used to estimate the mean and covariance functions, as well as the eigencomponents, of the process $X$. Our first objective is to construct a partition $\mathcal{U}$ of the space $\setH$ using the learning sample. Then, the second aim is to use the partition $\mathcal{U}$ as a classifier for a possibly very large set of $N_1$ new curves. Let $X_{[1]} = X_{N_0 + 1}, \dots, X_{[N_1]} = X_N,$ denote the curves from the \textit{online set} to be classified using the partition $\mathcal{U}$. 

\subsection{Estimation of mean and covariance}

We develop estimators for the mean and the covariance functions of a component $X^{(p)}, 1 \leq p \leq P$ from the process $X$. These estimators are used to compute estimators of eigenvalues and eigenfunctions of $X^{(p)}$ for the  expansion \eqref{eq:KL_truncated}. It is worthwhile to notice that, because of Lemma \ref{lemma:KLtoMKL}, we do not need to estimate the covariance between $X^{(p)}$ and $X^{(q)}$ for $p \neq q$.

Let $\widehat{X}_n^{(p)}$ be a suitable nonparametric estimator of the curve $X_n^{(p)}$ applied with the $M_n^{(p)}$ pairs $(Y_{n, m_p}^{(p)}, T_{n, m_p}^{(p)}), n = 1, \dots, N_0$, as for instance a local polynomial estimator such as the one defined in \cite{golovkine_learning_2020}. With at hand, the $\widehat{X}_n$'s tuned for the mean estimation, let
\begin{equation}\label{eq:est_mean}
\widehat \mu_{N_0}^{(p)}(t_p) = \frac{1}{N_0} \sum_{n=1}^{N_0} \widehat{X}_n^{(p)}(t_p), \quad t_p \in \setT_p. 
\end{equation}

For the covariance function, following \cite{yao_functional_2005-1}, we distinguish the diagonal from the non-diagonal points. With at hand, the $\widehat{X}_n^{(p)}$'s tuned for the covariance function estimation,
\begin{equation}\label{eq:est_cov1}
\widehat{C}_{p, p}(s_p,t_p) = \frac{1}{N_0} \sum_{n=1}^{N_0} \widehat{X}_n^{(p)}(s_p)\widehat{X}_n^{(p)}(t_p) - \widehat{\mu}_{N_0}^{(p)}(s_p)\widehat{\mu}_{N_0}^{(p)}(t_p) ,\quad s_p, t_p \in \setT_p, \;\;s_p \neq t_p.
\end{equation}
The diagonal of the covariance is then estimated using two-dimensional kernel smoothing with $\widehat{C}_{p, p}(s_p,t_p), s_p \neq t_p$ as input data. See \cite{yao_functional_2005-1} for the details.

\subsection{Derivation of the MFPCA components}\label{subsec:mfpca_prac}

Following \cite{happ_multivariate_2018}, using Lemma \ref{lemma:KLtoMKL}, we estimate the multivariate components for $X$ by plugging the univariate components computed from each $X^{(p)}$. These estimations are done as  follows. First, we perform an univariate fPCA on each of the components of $X$ separately. For a component $X^{(p)}$, the eigenfunctions and eigenvectors are computed as a matrix analysis of the estimated covariance $\widehat{C}_{p, p}$, from \eqref{eq:est_cov1}. This results in a set of eigenfunctions $(\widehat{\rho}_1^{(p)}, \ldots, \widehat{\rho}_{J^{(p)}}^{(p)})$ associated with a set of eigenvalues $(\widehat{\lambda}_1^{(p)}, \ldots, \widehat{\lambda}_{J^{(p)}}^{(p)})$ for a given truncation integer $J^{(p)}$. Then, the univariate scores for a realization $X_n^{(p)}$ of $X^{(p)}$ are given by $\widehat{\mathbf{c}}_{j, n}^{(p)} = \langle\widehat{X}_n^{(p)}, \widehat{\rho}_j^{(p)}\rangle_2, ~1 \leq j \leq J^{(p)}$. These scores might be estimated by numerical integration. However, in some cases, \emph{e.g.} for sparse data, it may be more suitable to use the PACE method (see \cite{yao_functional_2005-1}). We then define the matrix $\mathcal{Z} \in \mathbb{R}^{N_0 \times J_+}$, where on each row we concatenate the scores obtained for the $P$ components of  the $n$th observation~: 
$(\widehat{\mathbf{c}}_{1, n}^{(1)}, \ldots, \widehat{\mathbf{c}}_{J^{(1)}, n}^{(1)}, \ldots, \widehat{\mathbf{c}}_{1, n}^{(P)}, \ldots, \widehat{\mathbf{c}}_{J^{(p)}, n}^{(P)})$. An estimate $\widehat{\mathbf{Z}} \in \mathbb{R}^{J_+ \times J_+}$ of the matrix $\mathbf{Z}$, from Lemma \ref{lemma:KLtoMKL}, is given by $\widehat{\mathbf{Z}} = (N_0 - 1)^{-1}\mathcal{Z}^\top\mathcal{Z}$. An eigenanalysis of the matrix $\widehat{\mathbf{Z}}$ is done to estimate the eigenvectors $\widehat{\boldsymbol{v}}_j$ and eigenvalues $\widehat{\lambda}_j$. The multivariate eigenfunctions are estimated with
$\widehat{\varphi}_j^{(p)}(t_p) = \sum\nolimits_{j^\prime = 1}^{J^{(p)}}[\widehat{\boldsymbol{v}}_j]_{j^\prime}^{(p)}\widehat{\rho}_{j^\prime}^{(p)}(t_p), t_p \in \setT_p, 1 \leq j \leq J_+, 1 \leq p \leq P.$
and the multivariate scores with
$\widehat{\mathfrak{c}}_{j, n} = \mathcal{Z}_{{n,\cdot}}\widehat{\boldsymbol{v}}_j, 1 \leq n \leq N_0, 1 \leq j \leq J_+$.
The multivariate Karhunen-Lo\`eve expansion of the process $X$ is thus 
\begin{equation}\label{eq:KL_estim}
\widehat{X}_{n}(\pointt) = \widehat{\mu}_{N_0}(\pointt) + \sum_{j = 1}^J \widehat{\mathfrak{c}}_{j, n}\widehat{\varphi}_j(\pointt), \quad \pointt \in \mathcal{T}.
\end{equation} 
where $\widehat{\mu}_{N_0}(\cdot) = (\widehat \mu_{N_0}^{(1)}(\cdot), \ldots, \widehat \mu_{N_0}^{(P)}(\cdot))$ is the vector of the estimated mean functions.

\section{Multivariate functional clustering}\label{sec:fcubt}

Let $\mathcal{S}$ be a sample of realizations of the process $X$, defined in \eqref{eq:mixture_curve}. We consider the problem of learning a partition $\mathcal{U}$ such that every element $U$ of $\mathcal{U}$ gathers similar elements of $\mathcal{S}$. Our clustering procedure follows the idea of clustering using unsupervised regression trees (CUBT), considered by \cite{fraiman_interpretable_2013}, which we adapt to functional data. In the following, we describe in detail the Functional Clustering Using Unsupervised Binary Trees (\texttt{fCUBT}) algorithm.

\subsection{Building the maximal tree}

%
%
%
%
%
%

Let $\mathcal{S}_{N_0} = \{X_1, \dots, X_{N_0}\}$ be a training sample composed of $N_0$ independent realizations of the stochastic process $X \in \setH$ defined in \eqref{eq:mixture_curve}. In the following, a tree $\mathfrak{T}$ is a full binary tree, meaning every node has zero or two children, which represents a nested partition of the sample  $\mathcal{S}_{N_0}$. We will denote  the depth of a tree $\mathfrak{T}$ by $\mathfrak{D} \geq 1$. 

A tree $\mathfrak{T}$ starts with the root node $\mathfrak{S}_{0, 0}$ to which we assign the whole space sample  $\mathcal{S}_{N_0}$. Next,
every node $\mathfrak{S}_{\mathfrak{d, j}} \subset \mathcal{S}_{N_0} $ 
is indexed by the pair $(\mathfrak{d, j})$ where $\mathfrak{d}$ is the depth index of the node, with $0 \leq \mathfrak{d} < \mathfrak{D}$, and $\mathfrak{j}$ is the node index, with $0 \leq \mathfrak{j} < 2^\mathfrak{d}$. A non-terminal node $(\mathfrak{d, j})$ has two children, corresponding to disjoint subsets $\mathfrak{S}_{\mathfrak{d} + 1, 2\mathfrak{j}}$ and $\mathfrak{S}_{\mathfrak{d} + 1, 2\mathfrak{j} + 1}$ of $\mathcal{S}_{N_0}$ such that $\mathfrak{S}_{\mathfrak{d}, \mathfrak{j}} = \mathfrak{S}_{\mathfrak{d} + 1, 2\mathfrak{j}} \cup \mathfrak{S}_{\mathfrak{d} + 1, 2\mathfrak{j} + 1}.$ A terminal node $(\mathfrak{d, j})$ has no children.

A tree $\mathfrak{T}$ is thus defined using a top-down procedure by recursively splitting. At each stage, a node $(\mathfrak{d, j})$ is possibly split into two subnodes, namely the left and right child, with indices $(\mathfrak{d} + 1, 2\mathfrak{j})$ and $(\mathfrak{d} + 1, 2\mathfrak{j} + 1)$, respectively, provided it fulfills some condition. A multivariate functional principal components analysis as presented in Section \ref{subsec:mfpca_prac},  with $J$ components,   is then conducted on the elements of $\mathfrak{S}_{\mathfrak{d, j}}$. This results in a set of eigenvalues  $\Lambda_{\mathfrak{d, j}} = (\lambda_{\mathfrak{d, j}}^1, \dots, \lambda_{\mathfrak{d, j}}^{J}  )$ associated with a set of eigenfunctions  $\Phi_{\mathfrak{d, j}} = (\varphi_{\mathfrak{d, j}}^1, \dots, \varphi_{\mathfrak{d, j}}^{J})$. The matrix of scores $C_{\mathfrak{d, j}}$ is then defined with the columns built with the projections of the elements of $\mathcal{S}_{\mathfrak{d, j}}$ onto the elements of $\Phi_{\mathfrak{d, j}}$. More precisely, to each $X_n \in \mathfrak{S}_{\mathfrak{d, j}}$, there is a corresponding column   of size $J$  defined as
\begin{equation}\label{eq:proj_matrix}
\mathfrak{c}_{n, \mathfrak{d, j}} = 
\begin{pmatrix}
	\inH{X_n - \mu_{\mathfrak{d, j}}, \varphi_{\mathfrak{d, j}}^{1}},\ldots ,\inH{X_n - \mu_{\mathfrak{d, j}}, \varphi_{\mathfrak{d, j}}^{J}}
\end{pmatrix}^\top, 
\end{equation}
where $\mu_{\mathfrak{d, j}}$ is the mean curve within the node $\mathfrak{S}_{\mathfrak{d, j}}$. 

We can retrieve the groups (clusters) of curves considering a mixture model as in Section \ref{subsec:mixture_curves} for the columns of the matrix of scores. 
At each node $\mathfrak{S}_{\mathfrak{d, j}}$,
for each $K = 1, \dots, K_{max}$, we fit a GMM as in \eqref{GMM_def} to the columns of the matrix $C_{\mathfrak{d, j}}$.
The resulting models are denoted as $\{\mathcal{M}_1, \dots, \mathcal{M}_{K_{max}}\}$. To fit $\mathcal M_1$, we use the standard mean and variance matrix estimation of Gaussian distributions. To fit each of the models $\{\mathcal{M}_2, \dots, \mathcal{M}_{K_{max}}\}$, we use an EM algorithm \cite{dempster_maximum_1977}. In particular, the EM algorithm assigns a label to each curve in the node.  We next consider the \BIC{}, defined in \cite{schwarz_estimating_1978}, and determine 
\begin{equation}\label{eq:K_hat}
\widehat{K}_{\mathfrak{d, j}} = \argmax_{K = 1, \dots, K_{max}} \BIC(\mathcal{M}_K) = \argmax_{K = 1, \dots, K_{max}} \{2\log(\mathcal{L}_K) - \kappa\log{\lvert \mathfrak{S}_{\mathfrak{d, j}}\rvert} \},
\end{equation}
where $\mathcal{L}_K$ is the likelihood function of the $K-$components multivariate Gaussian mixture model $\mathcal{M}_K$ for the data at the node $\mathfrak{S}_{\mathfrak{d, j}}$, that is 
$$
\mathcal{L}_K = \prod_{n = 1}^{\lvert \mathfrak{S}_{\mathfrak{d, j}} \rvert}\sum_{k = 1}^{K}p_kf_k(\mathfrak{c}_{n, \mathfrak{d, j}} \mid \mathbf{m}_{ J,k  }, \Sigma_{ J,k  }),
$$
$\kappa = K + K J  + K J ( J + 1) / 2 - 1$ is the dimension of the model $\mathcal{M}_K$ and $\lvert \mathfrak{S}_{\mathfrak{d, j}}\rvert$ is the cardinality of the set $\mathfrak{S}_{\mathfrak{d, j}}$.
If $\widehat{K}_{\mathfrak{d, j}} > 1$, we split $\mathfrak{S}_{\mathfrak{d, j}}$ using the model $\mathcal{M}_2$, that is a mixture of two Gaussian vectors. Otherwise, the node is considered to be a terminal node and the construction of the tree is stopped for this node.

The recursive procedure continues downward until one of the following stopping rules are satisfied: there are less than $\mathtt{minsize}$ observations in the node or the estimation $\widehat{K}_{\mathfrak{d, j}}$ of the number of clusters in the mode $\mathfrak{S}_{\mathfrak{d, j}}$ is equal to $1$. The value of the positive integer $\mathtt{minsize}$ is set by the user. When the algorithm ends, a label is assigned to each leaf (terminal node). The resulting tree is referred to as the maximal binary tree. A quasi-formal description of Algorithm \ref{alg:tree_construction}, closer to the high-level computer language, is provided in the Section \ref{app:algo} in the Supplementary Material.  

Our algorithm provides a partition of the sample. Each observation belongs to a leaf which is associated with a unique label. In a perfect case, this tree will have the same number of leaves as the number of mixture components of $X$. In practice, it is rarely the case, and the number of leaves may be much larger than the number of clusters. That is why an agglomerative step, that we call the joining step, should also be considered. Note that we do not have to pre-specify the number of clusters before performing the joining step. Moreover, if the number of clusters in the maximal tree is that wanted by the user, it is not necessary to run the joining step. 

The original procedure, developed in \cite{fraiman_interpretable_2013}, included a pruning step. For each sibling node, this step computes a measure of dissimilarities between them and collapses them if this measure is lower than a predefined threshold. They need this step because their splitting criteria is based on the deviance reduction between a node and its children. The algorithm is stopped when this deviance reduction is less than a predefined threshold. The pruning step is possibly used therefore to revise an undesirable split. In our case, the splitting criteria is not based on how much deviance we gain at each node of the tree but on an estimation of the number of modes of a Gaussian mixture model. Thus, the pruning step is not useful in our case.

This method has three hyperparameters that should be set by the user. The first one is $J$, the number of components kept when an MFPCA is run.  In our implementation, $J=\sum_{p=1}^P J^{(p)} $, and the user chooses the values $J^{(p)} $. According to the notation in Lemma \ref{lemma:KLtoMKL}, for each $1\leq p \leq P$, $J^{(p)} $ represents the numbers of scores for each coordinate (or variable) $X^{(p)}$. 
The default  value for each $J^{(p)} $ is set to explain $95\%$ of the variance of the variable $X^{(p)}$ within the data. 
Let us point out that the $ J^{(p)} $ need not be large in order to detect a mixture structure. In fact, as explained in Section \ref{nb_sco_sm} in the Supplementary material, the first score could already detect the mixture in many situations. 
Therefore, when the rule of percentage of explained variance applied for each $p$, leads to large $J$, we recommend considering only one or two scores for each, or at least some coordinates $X^{(p)}$. One can thus keep $  J  $ reasonably low. Higher dimension  $  J  $ for the vector of scores for which we consider a Gaussian mixture model could result  in less performance for detecting a mixture structure in the nodes, especially for those with few data. 
The second hyperparameter is the minimum number of elements within a node to be considered to be split ($\mathtt{minsize}$).  In practice, it should be larger than $2$, 
 the default value is set to $10$. The last one, $K_{max}$, refers to the number of Gaussian mixture models to try in order to decide if there is at least two clusters in the data. Its default value is set to $5$ but, in practice, to reduce computation, it could be set to $2$. Thus, we will just compare the model with two modes against that with one group.  In some rare cases, we noticed that $K_{max} = 2$ did not allow a mixture structure to be detected while this was possible with slightly larger $K_{max}$. This was the reason for proposing a default $K_{max}$ larger than 2, even if, as pointed out by a reviewer, $K_{max} = 2$ would be the natural default value with a binary splitting, as we propose.

\subsection{Joining step}

In this step, the idea is to join terminal nodes which do not necessarily share the same direct ascendant. Let $\mathcal{G} = \left(V, E\right)$ be a graph where $V = \{\mathfrak{S}_{\mathfrak{d, j}}, 0 \leq j < 2^\mathfrak{d}, 0 \leq \mathfrak{d} < \mathfrak{D} \mathbin{\mid} \mathfrak{S}_{\mathfrak{d, j}} \;\text{is a terminal node}\}$ is a set of vertices and $E \subseteq \mathbf{E}$ is a set of edges with $\mathbf{E}$ the complete set of unordered pairs of vertices $\left\{(\mathfrak{S}_{\mathfrak{d, j}}, \mathfrak{S}_{\mathfrak{d^\prime, j^\prime}}) \mathbin{\mid} \mathfrak{S}_{\mathfrak{d, j}}, \mathfrak{S}_{\mathfrak{d^\prime, j^\prime}} \in V \;\text{and}\; \mathfrak{S}_{\mathfrak{d, j}} \neq \mathfrak{S}_{\mathfrak{d^\prime, j^\prime}}\right\}$. Let us clarify the definition of the set of edges $E$. Consider $(\mathfrak{S}_{\mathfrak{d, j}}, \mathfrak{S}_{\mathfrak{d^\prime, j^\prime}}) \in \mathbf{E}$. We are interested in the union of the nodes in the pair, $\mathfrak{S}_{\mathfrak{d, j}} \cup \mathfrak{S}_{\mathfrak{d^\prime, j^\prime}}$. After performing an MFPCA on the set $\mathfrak{S}_{\mathfrak{d, j}} \cup \mathfrak{S}_{\mathfrak{d^\prime, j^\prime}}$, we compute the matrix $C_{(\mathfrak{d, j}) \cup (\mathfrak{d^\prime, j^\prime})}$ using \eqref{eq:proj_matrix}. For $K = 1, \dots, K_{max}$, we fit a $K$-component Gaussian mixture model using an EM algorithm to $C_{(\mathfrak{d, j}) \cup (\mathfrak{d^\prime, j^\prime})}$. Then, if the estimated number of clusters $\widehat{K}_{(\mathfrak{d, j}) \cup (\mathfrak{d^\prime, j^\prime})}$ using \eqref{eq:K_hat}, is equal to $1$, we add the pair to $E$. Finally, we have 
\begin{equation}\label{eq:set_edges}
E = \left\{(\mathfrak{S}_{\mathfrak{d, j}}, \mathfrak{S}_{\mathfrak{d^\prime, j^\prime}}) \mathbin{\mid} \mathfrak{S}_{\mathfrak{d, j}}, \mathfrak{S}_{\mathfrak{d^\prime, j^\prime}} \in V,\; \mathfrak{S}_{\mathfrak{d, j}} \neq \mathfrak{S}_{\mathfrak{d^\prime, j^\prime}} \;\text{and}\; \widehat{K}_{(\mathfrak{d, j}) \cup (\mathfrak{d^\prime, j^\prime})} = 1\right\}.
\end{equation}
We associate with each element $(\mathfrak{S}_{\mathfrak{d, j}}, \mathfrak{S}_{\mathfrak{d^\prime, j^\prime}})$ of $E$, the value of the \BIC{} that corresponds to $\widehat{K}_{(\mathfrak{d, j}) \cup (\mathfrak{d^\prime, j^\prime})}$. The edge of $\mathcal{G}$ that corresponds to the maximum of the \BIC{} is then removed and the associated vertices are joined. Thus, there is one cluster less. This procedure is run recursively until no pair of nodes can be joined according to the \BIC{} or  only one node in the tree remains. Finally, unique labels are associated with  each remaining element of $V$ and the partition of $\mathfrak{S}_{0, 0}$, denoted as $\mathcal{U}$, is returned. A quasi-formal description of Algorithm \ref{alg:join_nodes}, closer to a high-level computer language, is provided in the Section \ref{app:algo} in the Supplementary Material.

Compared to other clustering algorithms, our approach inherits 
the interpretability of the trees algorithm. The joining step we propose  likely reduces the number of spurious clusters, which is supposed to improve the interpretability. Moreover, when two clusters are joined, one can still investigate them and interpret the fact that the algorithm merges the two clusters.

\subsection{Classification of new observations}\label{class_new_obs}

With at hand, a partition $\mathcal{U}$ of $\mathcal{S}_{N_0}$ obtained from a learning set of $N_0$ realizations of the process $X$, we aim to classify $N_1$ new trajectories of $X$ from what we call, the online dataset. Denote $\mathcal{S}_{N_1}$ this set of new trajectories. 

Let $\mathfrak{X}$ be an element of the set $\mathcal{S}_{N_1}$. The descent of the tree $\mathfrak{T}$ is performed as  follows. Let $\mathfrak{S}_{\mathfrak{d, j}}, 0 \leq j < 2^\mathfrak{d}, 0 \leq \mathfrak{d}< \mathfrak{D}$ be the node at hand such that $\mathfrak{S}_{\mathfrak{d, j}}$ is not a terminal node. We compute the projection $\mathfrak{X}$ onto the eigenfunctions $\Phi_{\mathfrak{d, j}}$. This results in the vector
$$\left(\inH{\mathfrak{X} - \mu_{\mathfrak{d, j}}, \varphi^1_{\mathfrak{d, j}}}, \dots, \inH{\mathfrak{X} - \mu_{\mathfrak{d, j}},  \varphi^{J }_{\mathfrak{d, j}}} \right)^\top.$$
Using the $2$-component GMM fitted on this node, we then compute the posterior probability of $\mathfrak{X}$  belonging to each of the components. At this point, we have the probability for $\mathfrak{X}$ belonging to each node given that it belongs to the parent node. We write $\mathtt{Pa}(\mathfrak{S})$, the set of parent nodes of the node $\mathfrak{S}$ which includes the node itself. We denote $\PP^\star(\cdot) \coloneqq \PP(\cdot \mid X_1, \dots, X_{N_0})$. Then, the probability for $\mathfrak{X}$ to be in the node $\mathfrak{S}_{\mathfrak{d, j}}$ is then given by
$$\PP^\star(\mathfrak{X} \in \mathfrak{S}_\mathfrak{d, j}) = \prod_{\mathfrak{S} \in \mathtt{Pa}(\mathfrak{S}_{\mathfrak{d, j}})}^{} \PP^\star(\mathfrak{X} \in \mathfrak{S} \mid \mathfrak{X} \in \mathtt{Pa}(\mathfrak{S})).$$
We have $\PP^\star(\mathfrak{X} \in \mathfrak{S}_{0, 0}) = 1$. In order to compute the probabilities of belonging to each element of the partition $\mathcal{U}$, let $\mathfrak{S}_{\mathfrak{d, j}}$ and $\mathfrak{S}_{\mathfrak{d^\prime, j^\prime}}$ be two terminal nodes that have been joined in the joining step into the node $\mathfrak{S}_{\mathfrak{d, j}} \cup \mathfrak{S}_{\mathfrak{d^\prime, j^\prime}}$. Because the sets $\mathfrak{S}_{\mathfrak{d, j}}$ and $\mathfrak{S}_{\mathfrak{d^\prime, j^\prime}}$ are disjoint,
$$\PP^\star(\mathfrak{X} \in \mathfrak{S}_{\mathfrak{d, j}} \cup \mathfrak{S}_{\mathfrak{d^\prime, j^\prime}}) = \PP^\star(\mathfrak{X} \in \mathfrak{S}_{\mathfrak{d, j}}) + \PP^\star(\mathfrak{X} \in \mathfrak{S}_{\mathfrak{d^\prime, j^\prime}}).$$
Finally, each cluster is associated with a probability of the observation $\mathfrak{X}$ such that $\sum_{U \in \mathcal{U}}^{} \PP^\star(\mathfrak{X} \in U) = 1.$
We assign $\mathfrak{X}$ to the cluster with the largest probability. 

This procedure is run for each observation within $\mathcal{S}_{N_1}$ and results in a partition $\mathcal{V}$ of $\mathcal{S}_{N_1}$. A quasi-formal description of Algorithm \ref{alg:classify_one_obs}, closer to a high-level computer language, is provided in the Section \ref{app:algo} in the Supplementary Material.

\section{Empirical analysis}\label{sec:emp_analysis}

Using simulated data, in this section, we illustrate the behavior of our clustering algorithm and compare it with some competitors. A real data application on a vehicle trajectory dataset is also carried out.

Our \texttt{fCUBT} procedure is compared to diverse competitors in the literature that are both designed for univariate and multivariate functional data: \texttt{FunHDDC} (\cite{bouveyron_model-based_2011,schmutz_clustering_2020} and \cite{funHDDC_2019} for the \textbf{\textsf{R}} implementation) and \texttt{Funclust} (\cite{jacques_funclust_2013,jacques_model-based_2014} and \cite{Funclustering_2014} for the \textbf{\textsf{R}} implementation).   We use the model  $[a_{kj}b_kQ_kD_k]$ for \texttt{FunHDDC}. We also considered the model $[ab_kQ_kD_k]$, which yields similar results.  
Moreover, our approach competes with the methodology described in \cite{ieva_multivariate_2013}, which corresponds to the $k$-means algorithm with a suitable distance for functional data. In particular, the methodology in \cite{ieva_multivariate_2013} uses the following distances:
\begin{align*}
d_1(X, Y) &= \left(\sum_{p=1}^{P}\int_{\mathcal{T}_p} \left(X^{(p)}(t_p) - Y^{(p)}(t_p)\right)^2dt_p\right)^{1/2} \quad\text{and}\\ 
d_2(X, Y) &= \left(\sum_{p=1}^{P}\int_{\mathcal{T}_p} \left(\frac{dX^{(p)}(t_p)}{dt_p} - \frac{dY^{(p)}(t_p)}{dt_p}\right)^2dt_p\right)^{1/2},
\end{align*}
where $dX^{(p)}(t_p)/dt_p$ is the first derivative of $X^{(p)}(t_p)$. These two methods are denoted as $k$\texttt{-means-}$d_1$ and $k$\texttt{-means-}$d_2$ in the following. We use the implementation developed in a 2019 Madrid university report by A. Hernando Bernab\'e for both univariate and multivariate functional data. We also compare our algorithm with a Gaussian Mixture Model, fitted using an EM algorithm, on the coefficients of a functional principal components analysis on the dataset with a fixed number of components, quoted as \texttt{FPCA+GMM} in the following. Finally, we consider only the first step of the \texttt{fCUBT} algorithm, which is the growth of the tree, to point out the usefulness of the joining step. The method will be referred as \texttt{Growing} in the following.

We are greatly interested in the ability of the algorithms to retrieve the true number of clusters $K$. When the true labels are available, the estimated partitions are compared with the true partition using the Adjusted Rand Index (\ARI) (\cite{hubert_comparing_1985}, see Section \ref{app:numerical_illustation} in the Supplementary Material for a definition).

\subsection{Simulation experiments}\label{subsec:simu}

We consider three simulations scenarios with varying degrees of difficulty. Each experiment is repeated 500 times. 

\begin{scenario}\label{scenario:1}
In our first scenario, we consider that the random curves are observed without noise. The number of clusters is fixed at $K = 5$, $P=1$, $\setT_1=[0,1]$. An independent  sample of $N = 1000$ univariate curves is simulated according to the following model: for $t \in [0, 1]$:
$$\begin{array}{ll}
\text{Cluster 1:} & X(t) = \mu_1(t) + c_{11}\phi_1(t) + c_{12}\phi_2(t) + c_{13}\phi_3(t),\\
\text{Cluster 2:} & X(t) = \mu_1(t) + c_{21}\phi_1(t) + c_{22}\phi_2(t) + c_{23}\phi_3(t),\\
\text{Cluster 3:} & X(t) = \mu_2(t) + c_{11}\phi_1(t) + c_{12}\phi_2(t) + c_{13}\phi_3(t),\\
\text{Cluster 4:} & X(t) = \mu_2(t) + c_{21}\phi_1(t) + c_{22}\phi_2(t) + c_{23}\phi_3(t),\\
\text{Cluster 5:} & X(t) = \mu_2(t) + c_{21}\phi_1(t) + c_{22}\phi_2(t) + c_{23}\phi_3(t) - 15t,\\
\end{array}$$
where $\phi_k$'s are the eigenfunctions of the Wiener process which are defined by $\phi_k(t) = \sqrt{2}\sin\{(k - 0.5)\pi t\}, k = 1, 2, 3,$
and the mean functions by $\mu_1(t) = 20 / \{1 + \exp(-t)\}$ and $\mu_2(t) = -25 / \{1 + \exp(-t)\}$.
The $c_{ij}$'s are random normal variables defined by
\begin{align}\label{c_law_s1}
c_{11} &\sim \mathcal{N}(0, 16), & c_{12} &\sim \mathcal{N}\left(0, 64 / 9\right), & c_{13} &\sim \mathcal{N}\left(0, 16 / 9\right), \\
c_{21} &\sim \mathcal{N}(0, 1), & c_{22} &\sim \mathcal{N}\left(0, 4 / 9\right), & c_{23} &\sim \mathcal{N}\left(0, 1 / 9\right). 
\end{align}
The mixing proportions are equal, and the curves are observed on $101$ equidistant points. As shown in Figure \ref{fig:scenario_1}, the five clusters cannot be retrieved using only the mean curve per cluster: cluster 1 (blue) and 2 (orange) share the same mean function $\mu_1$ and similarly cluster 3 (green) and 4 (pink) share the same mean function $\mu_2$. As a consequence, clustering algorithms based on distances to the mean of the clusters, such as $k$-means, are not expected to perform well in this case. For \texttt{FunHDDC} and \texttt{Funclust}, the functional form of the data is reconstructed using a cubic B-spline basis, smoothing with $25$ basis functions.

\begin{figure}
    \centering
    \includegraphics[scale=0.5]{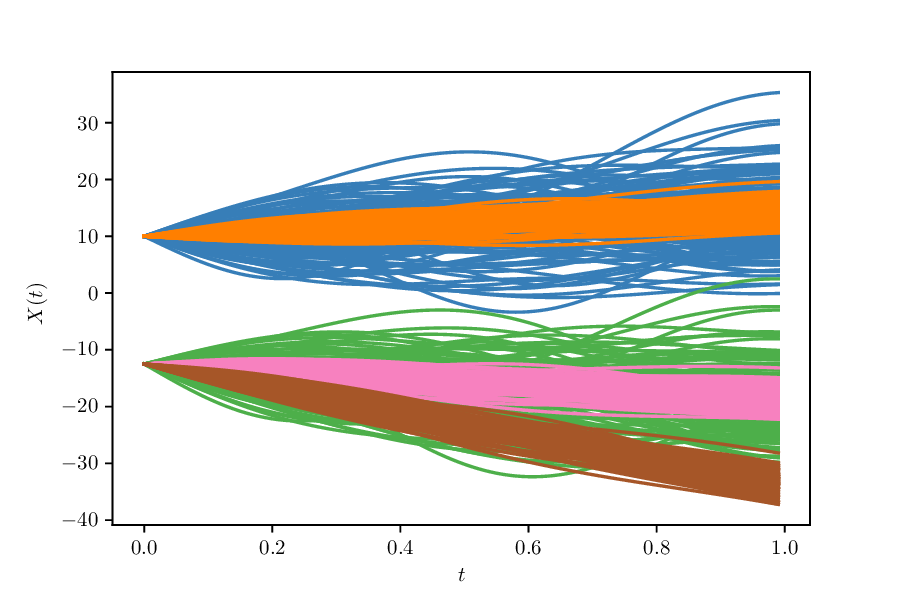}
    \caption{Simulated data for Scenario 1.}
    \label{fig:scenario_1}
\end{figure}
\end{scenario}

\begin{scenario}\label{scenario:2}
The second simulation is a modification of the data simulation process of \cite[scenario C]{schmutz_clustering_2020}. 
Here, we consider that the measurements of the random curves are noisy. Thus, for this scenario, the number of clusters is fixed at $K = 5$, $P=2$, $\setT_1=\setT_2 =[0,1]$. An independent sample of $N = 1000$ bivariate curves is simulated according to the following model~: for $t_1,t_2  \in [0, 1]$,
$$\begin{array}{llcl}
\text{Cluster 1:} & X^{(1)}(t_1) = h_1(t_1) + b_{0.9}(t_1), & & X^{(2)}(t_2) = h_3(t_2) + 1.5 \times b_{0.8}(t_2), \\
\text{Cluster 2:} & X^{(1)}(t_1) = h_2(t_1) + b_{0.9}(t_1), & & X^{(2)}(t_2) = h_3(t_2) + 0.8 \times b_{0.8}(t_2), \\
\text{Cluster 3:} & X^{(1)}(t_1) = h_1(t_1) + b_{0.9}(t_1), & & X^{(2)}(t_2) = h_3(t_2) + 0.2 \times b_{0.8}(t_2), \\
\text{Cluster 4:} & X^{(1)}(t_1) = h_2(t_1) + 0.1 \times b_{0.9}(t_1), & & X^{(2)}(t_2) = h_2(t_2) + 0.2 \times b_{0.8}(t_2), \\
\text{Cluster 5:} & X^{(1)}(t_1) = h_3(t_1) + b_{0.9}(t_1), & & X^{(2)}(t_2) = h_1(t_2) + 0.2 \times b_{0.8}(t_2).\\
\end{array}$$
The functions $h$ are defined, by 
$h_1(t) = \left(6 - \lvert 20t - 6\rvert\right)_+ / 4$, $h_2(t) = \left(6 - \lvert 20t - 14\rvert\right)_+ / 4$ and $h_3(t) = \left(6 - \lvert 20t - 10\rvert\right)_+ / 4$, for $t \in [0, 1]$. (Here, $(\,\cdot\,)_+$ denotes the positive part of the expression between the brackets.) The functions $b_H$ are defined, for $t \in [0, 1]$, by $b_H(t) = (1 + t)^{-H}B_H(1 + t)$ where $B_H(\cdot)$ is a fractional Brownian motion with Hurst parameter $H$. 
The mixing proportions are set to be equal.

The data to which we apply the clustering are obtained as in \eqref{data_real}.  Each component curve is observed at $101$ equidistant points over $[0,1]$. The bivariate error vectors have zero-mean Gaussian independent components with variance 1/2. Figure \ref{fig:scenario_2} presents  the smoothed curves from the simulated data. 
Smoothing was done using the methodology of \cite{golovkine_learning_2020}.  
As pointed out by \cite{schmutz_clustering_2020}, the different clusters cannot be identified using only one variable: cluster 1 (blue) is similar to cluster 3 (green) for variable $X^{(1)}(t)$ and in like manner, cluster 1 (blue) is like cluster 2 (orange) and cluster 3 (green) for variable $X^{(2)}(t)$. The brown and pink groups might be considered as ``noise'' clusters that aim to make  discrimination between the other groups harder. Hence, clustering methods that are specialized for univariate data, should fail to retrieve true membership using only $X^{(1)}(t)$ or $X^{(2)}(t)$. For \texttt{FunHDDC} and \texttt{Funclust}, the functional form of the data is reconstructed using a cubic B-spline basis, smoothing with $25$ basis functions.

\begin{figure}
    \begin{subfigure}{.5\textwidth}
    \centering
    \includegraphics[scale=0.5]{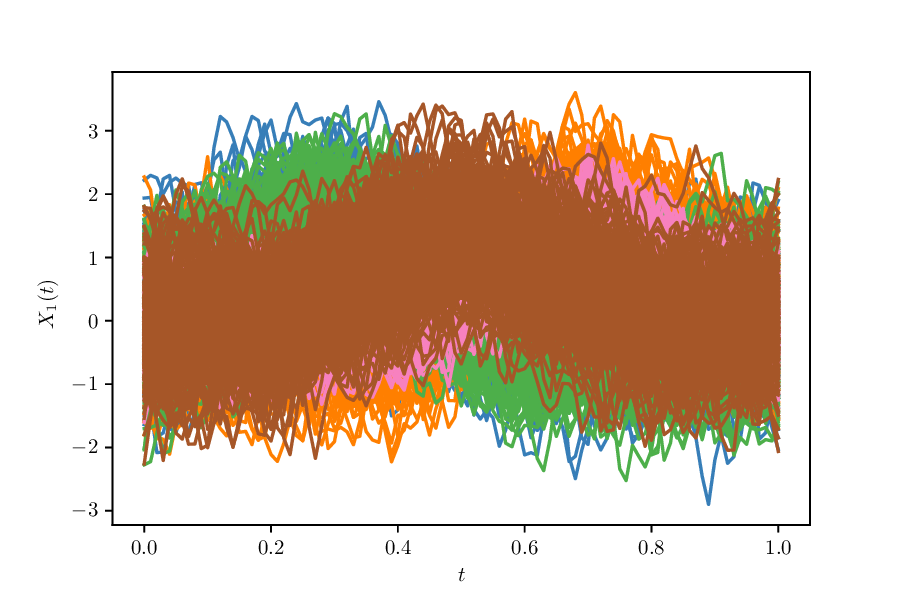}
    \end{subfigure}
    \begin{subfigure}{.5\textwidth}
    \centering
    \includegraphics[scale=0.5]{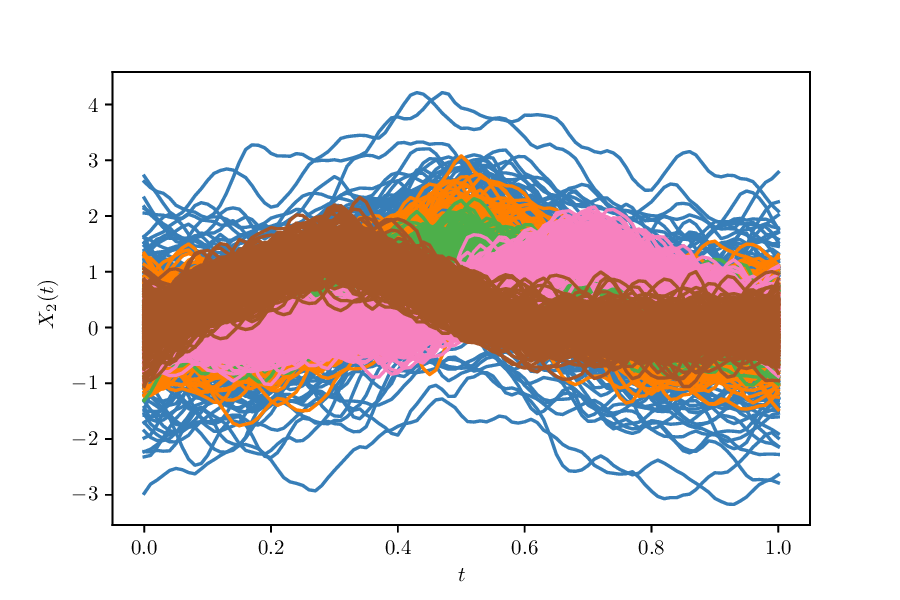}
    \end{subfigure}
    \caption{Simulated data for Scenario 2.}
    \label{fig:scenario_2}
\end{figure}
\end{scenario}

\begin{scenario}\label{scenario:3}
The last simulation is the same as the second one, except we add some correlation between the components. So, for each $n \in \{1, \dots, N\}$, we observe a realization of the vector $X = \left(X^{(1)} + \alpha X^{(2)}, X^{(2)}\right)^\top$, where $\alpha = 0.4$. Figure \ref{fig:scenario_3} presents the smoothed version, using the methodology of \cite{golovkine_learning_2020}, of the simulated data. For \texttt{FunHDDC} and \texttt{Funclust}, the functional form of the data is reconstructed using a cubic B-spline basis, smoothing with $25$ basis functions.

\begin{figure}
    \begin{subfigure}{.5\textwidth}
    \centering
    \includegraphics[scale=0.5]{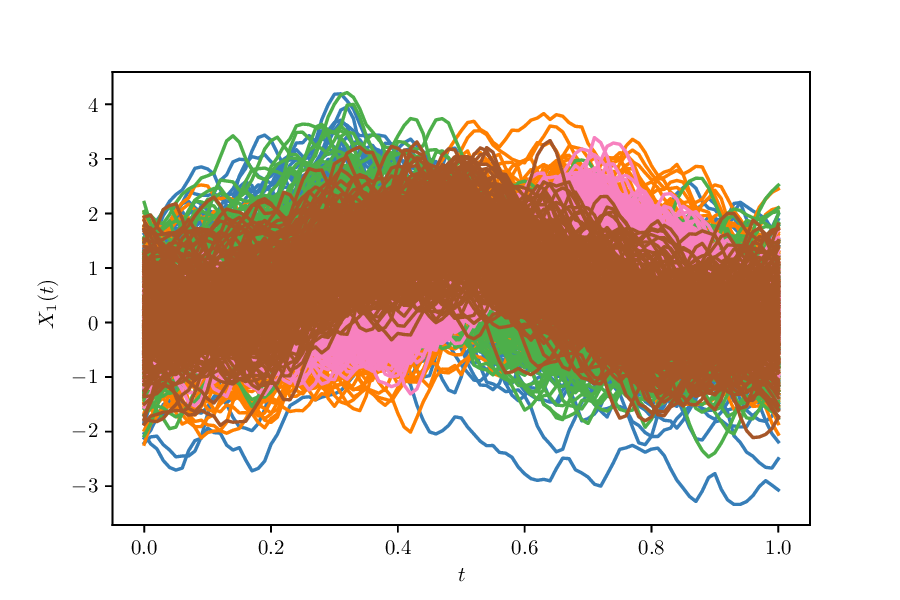}
    \end{subfigure}
    \begin{subfigure}{.5\textwidth}
    \centering
    \includegraphics[scale=0.5]{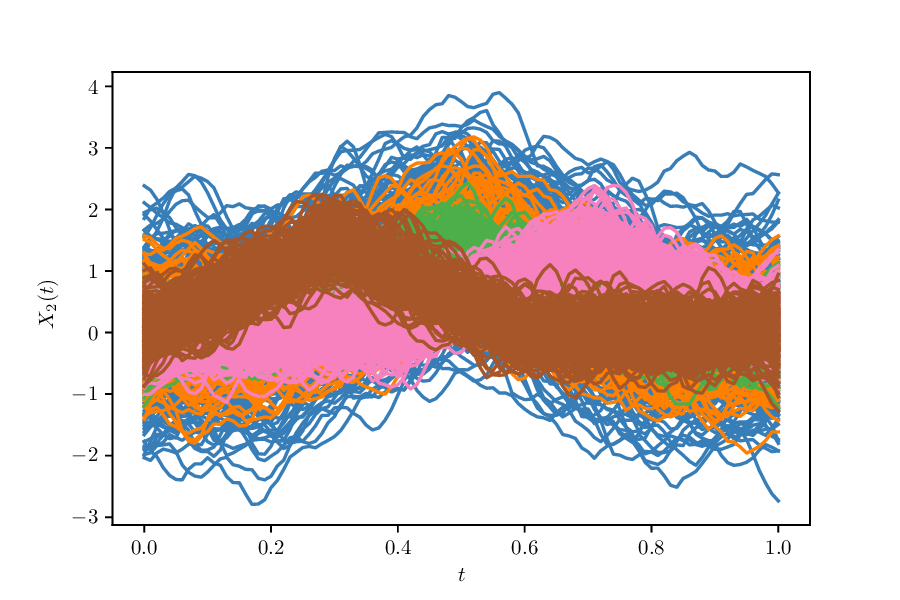}
    \end{subfigure}
    \caption{Simulated data for Scenario 3.}
    \label{fig:scenario_3}
\end{figure}

\end{scenario}

\subsubsection{Model selection}

We investigate the selection of the number of clusters for each of methods on each of the simulations. The way to return the number of clusters in a dataset depends on the algorithm. Thus, the selection for the model \texttt{fCUBT} and \texttt{Growing} is based on the BIC. Similarly, the BIC is also used for the \texttt{FunHDDC} algorithm. For all the other methods, we test all the models between $K = 1$ and $K = 8$, and return the model that maximizes the ARI criteria as the selected model. The simulation settings have been repeated $500$ times and the model $[a_{kj}b_kQ_kD_k]$ is used for the \texttt{FunHDDC} algorithm.

Table \ref{tab:n_clusters} summarizes the results of the $500$ simulations for each scenario. We remark that the \texttt{fCUBT} algorithm performs well in retrieving the right number of clusters in all the scenarios, being the first or second method in terms of retrieving percentage. Quite surprisingly, the $k$\texttt{-means-}$d_2$ algorithm performs very well for the second and third scenarios. It indicates that the distances between the derivatives of the curves are much more informative than the distance between the original ones. The accuracy of the selection of the number of clusters in competitors, designed for functional data, \texttt{FunHDDC} and \texttt{Funclust}, is very poor. This result has also been pointed out in \cite{zambom_selection_2019} where the simulated data are much  simpler. Finally, the results on \texttt{Growing} show the usefulness of the joining step. Thus, at the end of the growing step, we may have a large number clusters (even greater than $10$) but with very few data in some of them, and so the joining step allows us to get rid of them, and thus have more relevant clusters.

\begin{table}
\centering
\begin{tabular}{crrrrrrrrrrr}
 & Method & \multicolumn{10}{c}{Number of clusters $K$} \\
    &   & 1 & 2 & 3 & 4 & 5 & 6 & 7 & 8 & 9 & 10+ \\
\midrule
&\texttt{fCUBT} & - & - &  -  & - & 98 & 2 & - &  - &  - & - \\
&\texttt{Growing} & - & - & - & - & 59 & 23 &  8 & 3 & 3 & 4 \\
&\texttt{FPCA+GMM} & - & - & - & 1 & 53 & 32 & 11 & 3 & - & - \\
Scenario 1 &\texttt{FunHDDC} & - & 35 & 46 & 16 & 2 & 1 & - & - & - & - \\
&\texttt{Funclust} & - & 44 & 44 & 11 & 1 & - & - & - &  - & - \\
&$k$\texttt{-means-}$d_1$ & - & - & 15 & 15 & 60 & 2 & 7 & 7 & - & - \\
&$k$\texttt{-means-}$d_2$ & - & - & - & - & 5 & 29 & 36 & 30 & - & - \\
\midrule
&\texttt{fCUBT} & - & - & - & - & 70 & 18 & 10 & 2 & - & - \\
&\texttt{Growing} & - & - & - & - & 52 & 18 & 12 & 9 & 3 & 6 \\
&\texttt{FPCA+GMM} & - & - & - & - & 27 & 45 & 25 & 3 & - & - \\
Scenario 2 &\texttt{FunHDDC} & - & 1 & 1 & - & 2 & 4 & 5 & 4 & 25 & 59 \\
&\texttt{Funclust} & - & 28 & 17 & 15 & 16 & 14 & 7 & 3 & - & - \\
&$k$\texttt{-means-}$d_1$ & - & - & - & 2 & 5 & 8 & 18 & 67 & - & - \\
&$k$\texttt{-means-}$d_2$ & - & - & 5 & 12 & 82 & 1 & - & - & - & - \\
\midrule
&\texttt{fCUBT} & - & - & - & - & 67 & 24 & 7 & 2 & - & - \\
&\texttt{Growing} & - & - & - & - & 60 & 18 & 8 & 6 & 3 & 5 \\
&\texttt{FPCA+GMM} & - & - & - & - & 41 & 40 & 16 & 3 & - & - \\
Scenario 3 &\texttt{FunHDDC} & - & 1 & - & 3 & 7 & 11 & 11 & 15 & 20 & 32\\
&\texttt{Funclust} & - & 7 & 18 & 19 & 20 & 20 & 13 & 3 & - &  - \\
&$k$\texttt{-means-}$d_1$ & - & - & - & - & 3 & 14 & 21 & 62 & - & - \\
&$k$\texttt{-means-}$d_2$ & - & 1 & 1 &  9 & 87 & 1 & - & 1 & - & - \\
\end{tabular}
\caption{Number of clusters selected for each model, expressed  as a  percentage   over $500$ simulations}
\label{tab:n_clusters}
\end{table}

\subsubsection{Benchmark with existing methods}

Our algorithm is compared to competitors in the literature with respect to the \ARI{} criterion on the three scenario settings. All the competitors are applied for $K = 1$ to $K = 8$ groups for each of the scenarios and we return the best \ARI{} found regardless of the number of clusters. 

Figure \ref{fig:simu_ARI} presents clustering results for all the tested models for the \ARI{} criterion. We see that our algorithm performs well in all the scenarios. On the contrary, the \texttt{FunHDDC} and \texttt{Funclust} competitors do not perform well on this simulated data. Both $k$\texttt{-means-}$d_1$ and $k$\texttt{-means-}$d_2$ demonstrate acceptable \ARI{} although not as good as \texttt{fCUBT}. The \texttt{FPCA+GMM} algorithm has similar results as \texttt{fCUBT} in terms of \ARI{} because \ARI{} is not penalized when the number of clusters is not the true one. So, as long as the clusters are not mixed, the \ARI{} will be good, even if a large cluster is split into multiple small ones. The same phenomenon appears in the case of \texttt{Growing} compare to \texttt{fCUBT}.

\begin{figure}
     \centering
     \begin{subfigure}[b]{0.35\textwidth}
         \centering
         \includegraphics[scale=0.35]{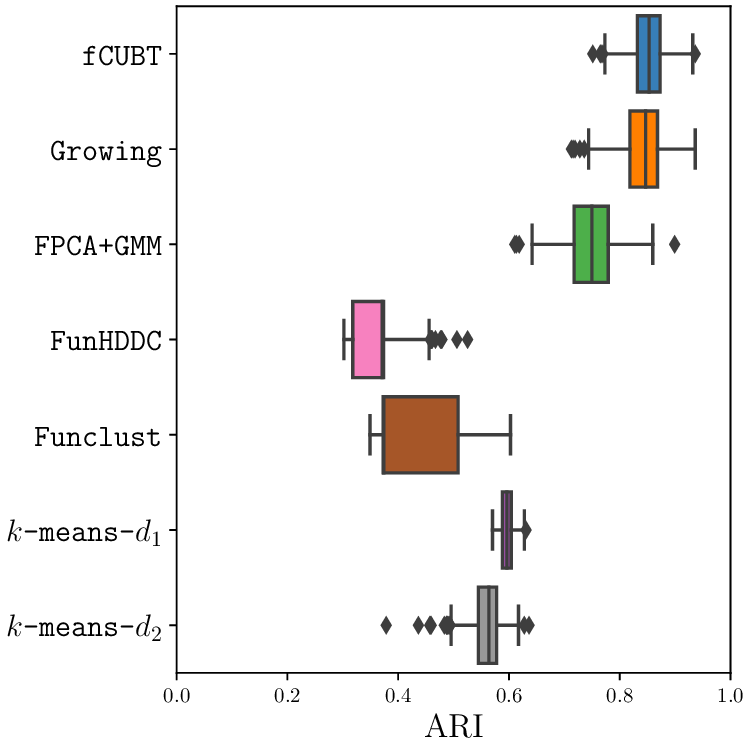}
         \caption{Scenario 1}
         \label{fig:ARI_scenario_1}
     \end{subfigure}
     \begin{subfigure}[b]{0.3\textwidth}
         \centering
         \includegraphics[scale=0.35]{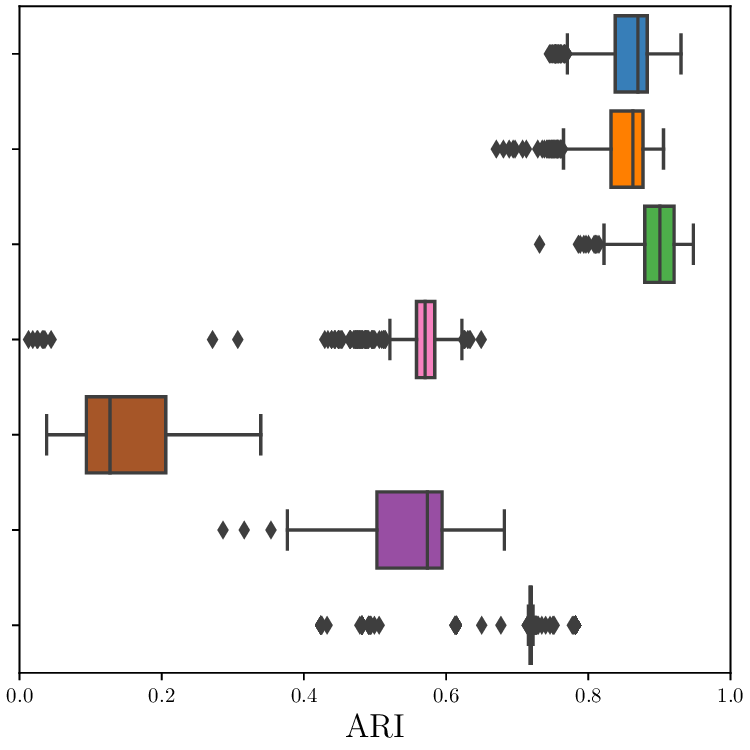}
         \caption{Scenario 2}
         \label{fig:ARI_scenario_2}
     \end{subfigure}
     \begin{subfigure}[b]{0.3\textwidth}
         \centering
         \includegraphics[scale=0.35]{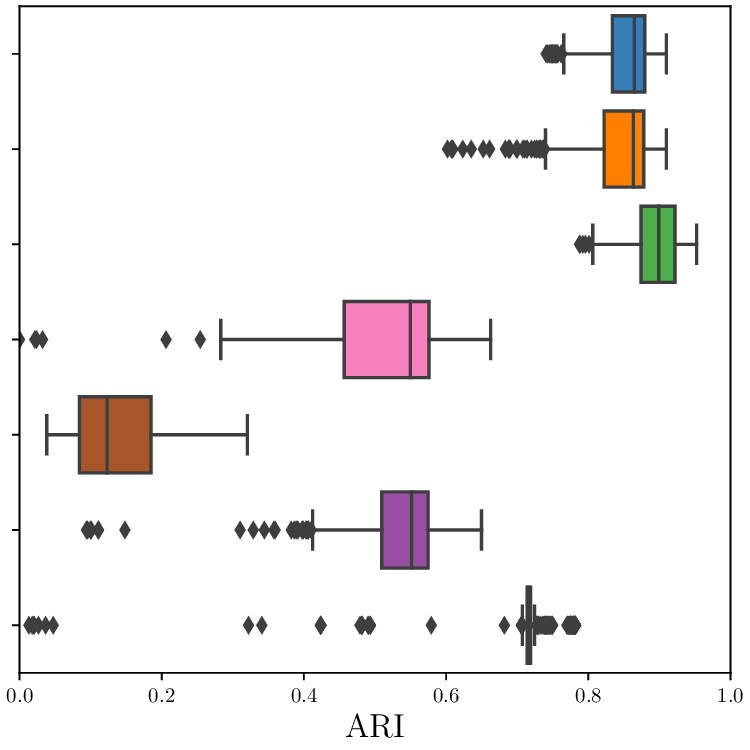}
         \caption{Scenario 3}
         \label{fig:ARI_scenario_3}
     \end{subfigure}
     \caption{Estimation of \ARI{} for all tested models on $500$ simulations.}
    \label{fig:simu_ARI}
\end{figure}

\subsubsection{Comments on the classification of new set of curves}

We now analyze the performance of the algorithm for the classification of a new set of curves. For each of the simulated scenarios, we apply the following process. We learn a tree $\mathfrak{T}$ as well as the partition $\mathcal{U}$ from the learning set $\mathcal{S}_{N_0}$. Different sizes of learning sets  are considered, $N_0 = 200$, $500$ or $1000$. We generate a new set of data, $\mathcal{S}_{N_1}$, referred to as the online dataset, of size $N_1 = 1000$. As the data are simulated, we know the true labels of each observations within $\mathcal{S}_{N_1}$. We denote the true partition of $\mathcal{S}_{N_1}$ by $\mathcal{V}$. We then classify new observations from the online set $\mathcal{S}_{N_1}$ and denote the obtained partition by $\mathcal{V}^\prime$. Partitions are compared using $\ARI(\mathcal{V}, \mathcal{V}^\prime)$. 

The simulations are performed $500$ times, and the results are plotted in Figure \ref{fig:simu_prediction}. The three scenarios present similar patterns. Thus, when $N_0 = 200$, the partition $\mathcal{U}$ obtained using the $\mathtt{fCUBT}$ algorithm is not able to capture all elements within each cluster. In this case, the \ARI{} is less than $0.8$. When $N_0 = 500$, the partition $\mathcal{U}$ is now sufficiently accurate to represents the clusters (\ARI{} $> 0.8$). However, the stability of the clusters is not guaranteed, given the large variance in the estimation of the \ARI{}. Finally, when $N_0 = 1000$, we have both accurate partitioning $\mathcal{U}$ (\ARI{} $> 0.8$) and stable clusters (low variance).
 
\begin{figure}
     \centering
     \begin{subfigure}[b]{0.3\textwidth}
         \centering
         \includegraphics[scale=0.35]{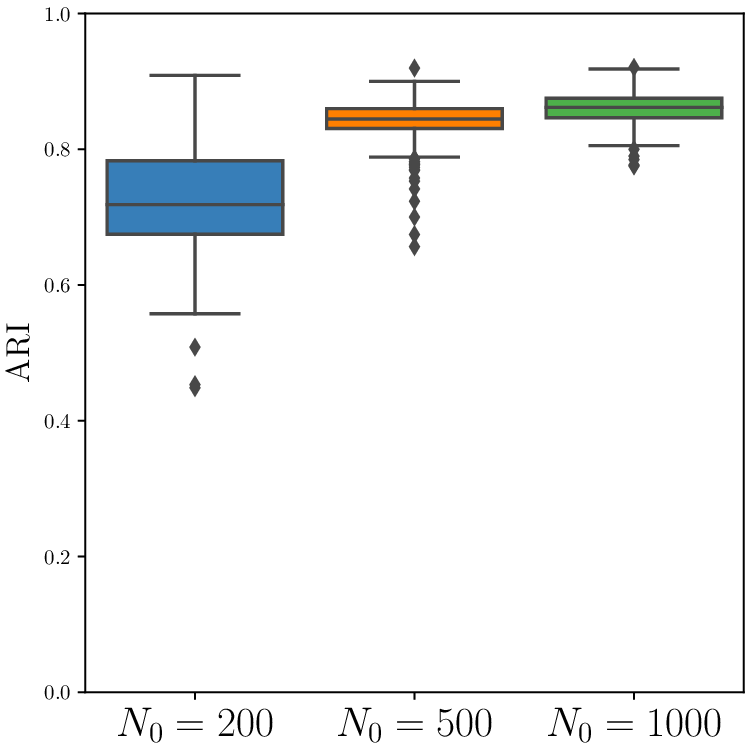}
         \caption{Scenario 1}
         \label{fig:prediction_scenario_1}
     \end{subfigure}
     \begin{subfigure}[b]{0.3\textwidth}
         \centering
         \includegraphics[scale=0.35]{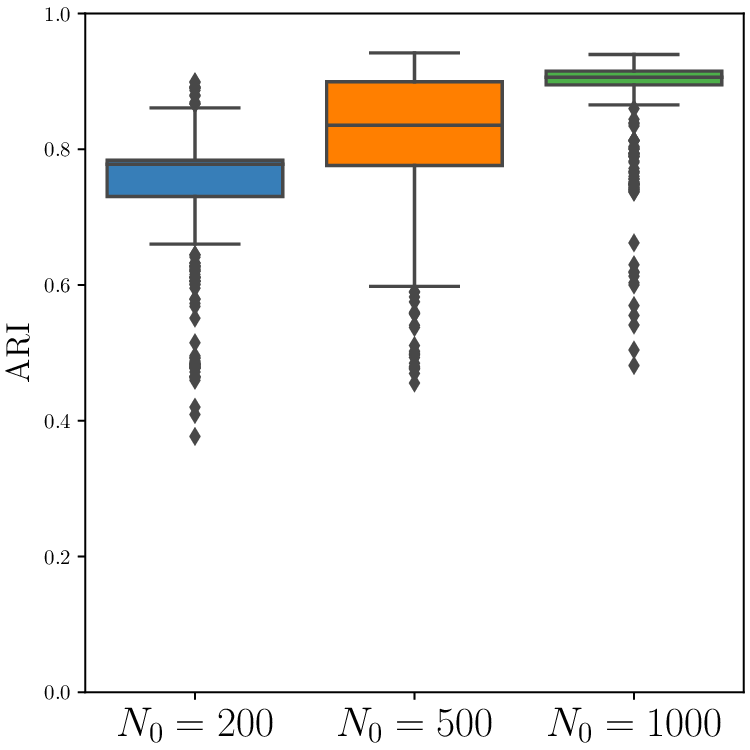}
         \caption{Scenario 2}
         \label{fig:prediction_scenario_2}
     \end{subfigure}
     \begin{subfigure}[b]{0.3\textwidth}
         \centering
         \includegraphics[scale=0.35]{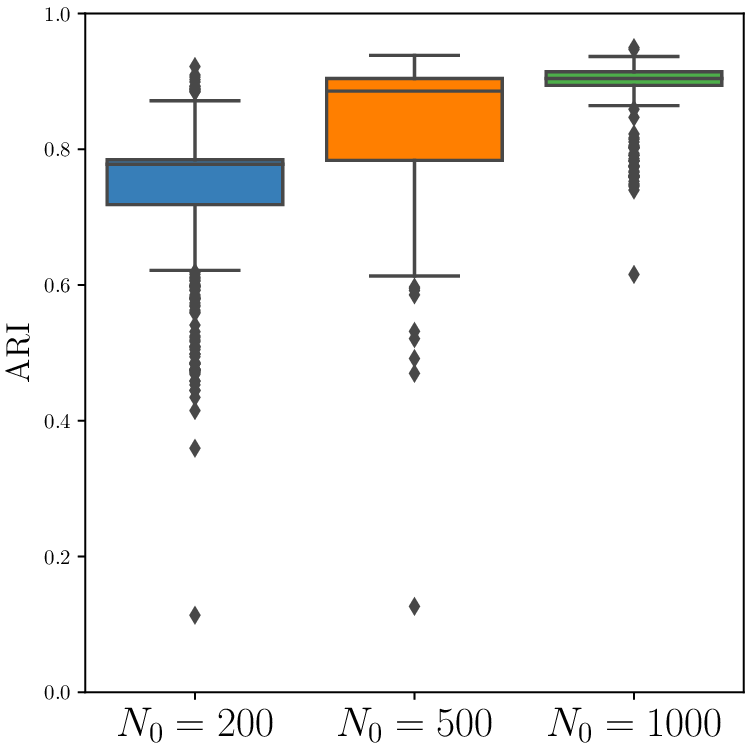}
         \caption{Scenario 3}
         \label{fig:prediction_scenario_3}
     \end{subfigure}
     \caption{ Estimation of \ARI{} with respect to the size of the learning dataset when the tree is used as a supervised classifier.}
    \label{fig:simu_prediction}
\end{figure}

\subsubsection{A comparison with supervised methods}\label{sec:compa_sup}

We also compare our functional clustering procedure with two supervised approaches available in \cite{scikit-learn}, that are a Gaussian Process Classifier (\texttt{GPC}) and a Random Forest Classifier (\texttt{Random Forest}). See, for instance, \cite{3569} for the description of \texttt{GPC}. 
We first perform an MFPCA to extract features that explain $99\%$ of the variance for each component $X^{(p)}$ within the data. We then fit  \texttt{GPC} and  \texttt{Random Forest} to the extracted features. 

For each  scenario, we generate samples of $N = 1000$ curves. 
We then randomly sample $2/3$ of the $N$ curves to build the training subset and the remaining ones are gathered in the test subset. The supervised models are trained on the training subset, using the true value of $K$,  and we predict the outcome on the test subset. The \ARI{} is finally computed using the true labels and the prediction. 
For the \texttt{fCUBT} method, we consider the training subset to learn the clusters. We next predict the outcome on the test subset considering it as a set of new observations, and using the procedure proposed in Section \ref{class_new_obs}.  For comparison purposes, the \ARI{} for \texttt{fCUBT} is computed using only the replications where our approach detected the correct number of clusters $K$. 
We repeated the experiments $500$ times, and the results are plotted in Figure \ref{fig:simu_comparison_review}. 
We remark that our unsupervised method is as good as the supervised ones when the true number of clusters is correctly estimated. The good performance of \texttt{fCUBT} compared to \texttt{Random Forest} could be explained by the model-based construction of our approach, while the random forest is nonparametric. The good performance of \texttt{fCUBT}
compared to \texttt{GPC}, especially in Scenario 1, is somehow more surprising. 
An additional simulation experiment where  \texttt{fCUBT} is used directly on the test subset, is reported in the Supplementary Material. Our approach performs well in that case also.

\begin{figure}
     \centering
     \begin{subfigure}[b]{0.3\textwidth}
         \centering
         \includegraphics[scale=0.35]{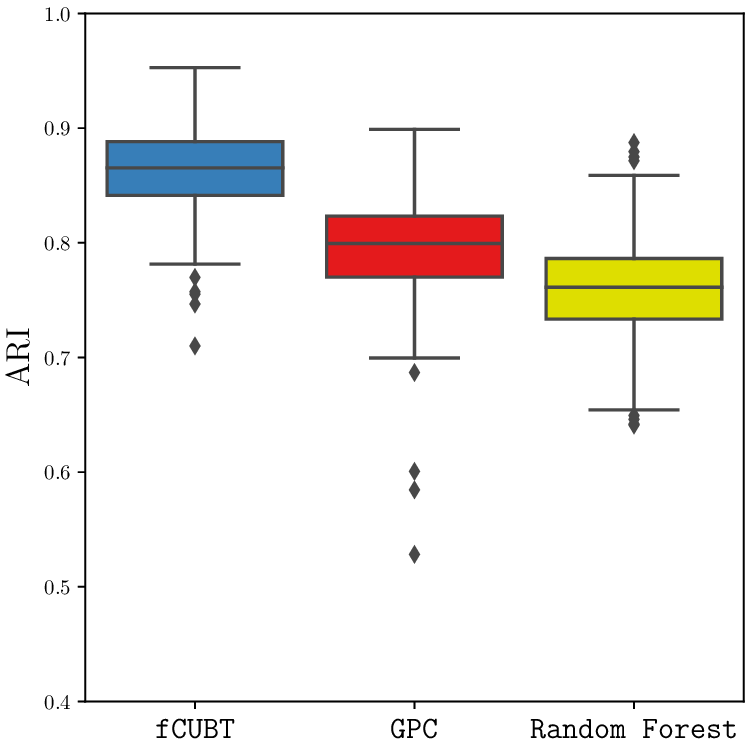}
         \caption{Scenario 1 $(491 / 500)$}
         \label{fig:comparison_scenario_1}
     \end{subfigure}
     \begin{subfigure}[b]{0.3\textwidth}
         \centering
         \includegraphics[scale=0.35]{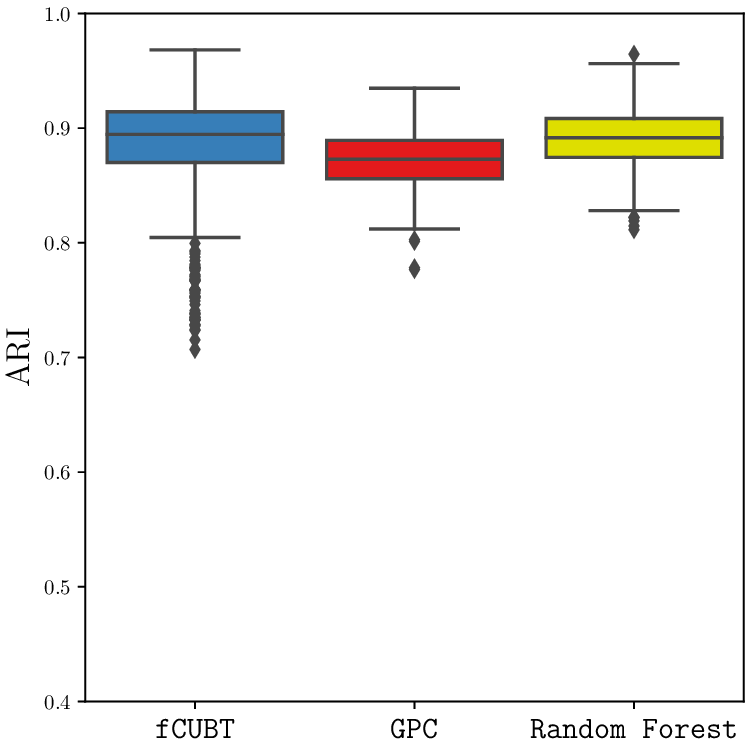}
         \caption{Scenario 2 $(436 / 500)$}
         \label{fig:comparison_scenario_2}
     \end{subfigure}
     \begin{subfigure}[b]{0.3\textwidth}
         \centering
         \includegraphics[scale=0.35]{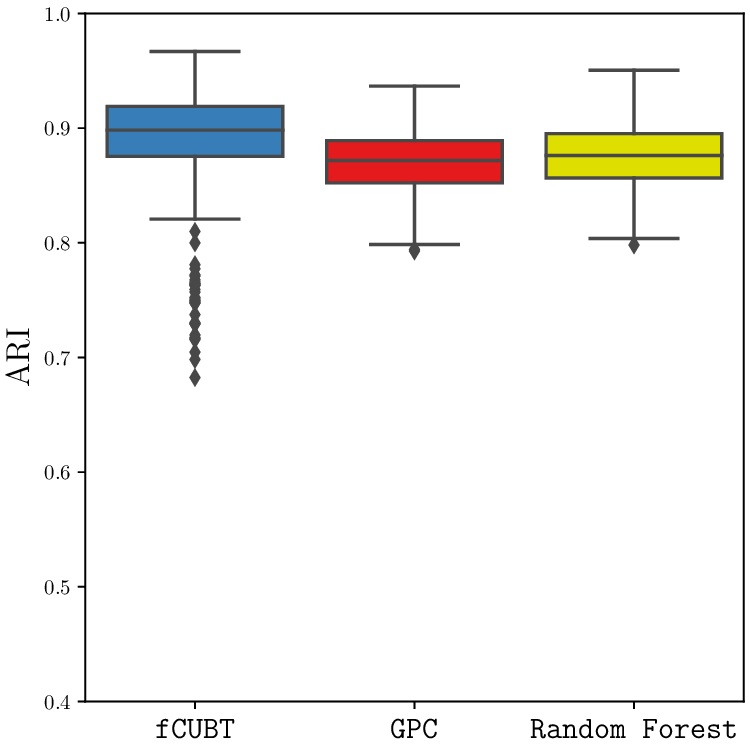}
         \caption{Scenario 3 $(431 / 500)$}
         \label{fig:comparison_scenario_3}
     \end{subfigure}
     \caption{Estimation of \ARI{} for the comparison with supervised models using $500$ replications. The number of replications among the 500 experiments where  \texttt{fCUBT} recovers the correct number of clusters $K$, is given between parenthesis}
    \label{fig:simu_comparison_review}
\end{figure}

\subsection{Real data analysis: the rounD dataset}

In this section, our method is applied to a part of the rounD dataset \cite{rounDdataset}, which are ``naturalistic road user trajectories recorded at German roundabouts''. For our illustration, we consider a subset of the rounD dataset, that corresponds to one particular roundabout  with four exits. Details are provided in the Supplementary Material. 

 The dataset we use contains $18$ minutes  of trajectories for road users at $7$a.m. To describe the motion of the vehicles, we use six coordinates $X^{(p)}$ given by the position, the speed and the acceleration, each of them decomposed into a two-dimensional coordinate. 
The speed limit is $50$\si{\kilo\meter\per\hour}.  In total, the dataset contains trajectories, velocities and accelerations for $N_0 = 348$ individual road users that passed through this roundabout during this period, recorded every $0.04$s. The number of measurements for each curve varies from $131$ to $1265$. We rescale the measurement times for each of the $348$ curves such that the first measurement corresponds to $t = 0$ and the last one to $t = 1$. Figure \ref{fig:sample} presents a random sample of five observations extracted from the data. In order to have comparable results, we remove the pedestrians and the bicycles from the data. Moreover, curves with less than $200$ or more than $800$ measurements are also removed. The curves with less than $200$ points are probably the road users that are present when the recording starts (or ends), and thus, their trajectory will not be complete. The curves with more than $800$ are likely to be trajectories with inconsistency. Finally, there are $311$ remaining observations in the dataset.  We aim to provide a clustering and give some physical interpretation of the clusters. Thus, our clustering procedure \texttt{fCUBT} is run on the cleaned data.  We chose $J^{(p)} = 1$ for both the growing and joining step,  and we set $K_{max} = 3$ and $\mathtt{minsize} = 20$. It returns $23$ groups. Figure \ref{fig:cluster_4} presents an example of a clusters we obtain.  The trajectories with different entries and exits in the roundabout are well split into different clusters. Several clusters correspond to the same entries and exists and they are distinguished by different velocity and acceleration profiles. In particular, the algorithm differentiates the vehicles which stopped before entering the roundabout  from those which did not (see Figure \ref{fig:cluster_11}). Vehicles with atypical trajectories, such as those making a complete additional loop before taking the exit are also weel separated. The Gaussian assumption used in our model-based clustering seems reasonable in this application, according to the results from several normality tests we performed following \cite{norm_test}. The details  are given in the Supplementary Material.

\begin{figure}
    \centering
    \begin{subfigure}[b]{\textwidth}
        \centering
        \includegraphics[scale=0.4]{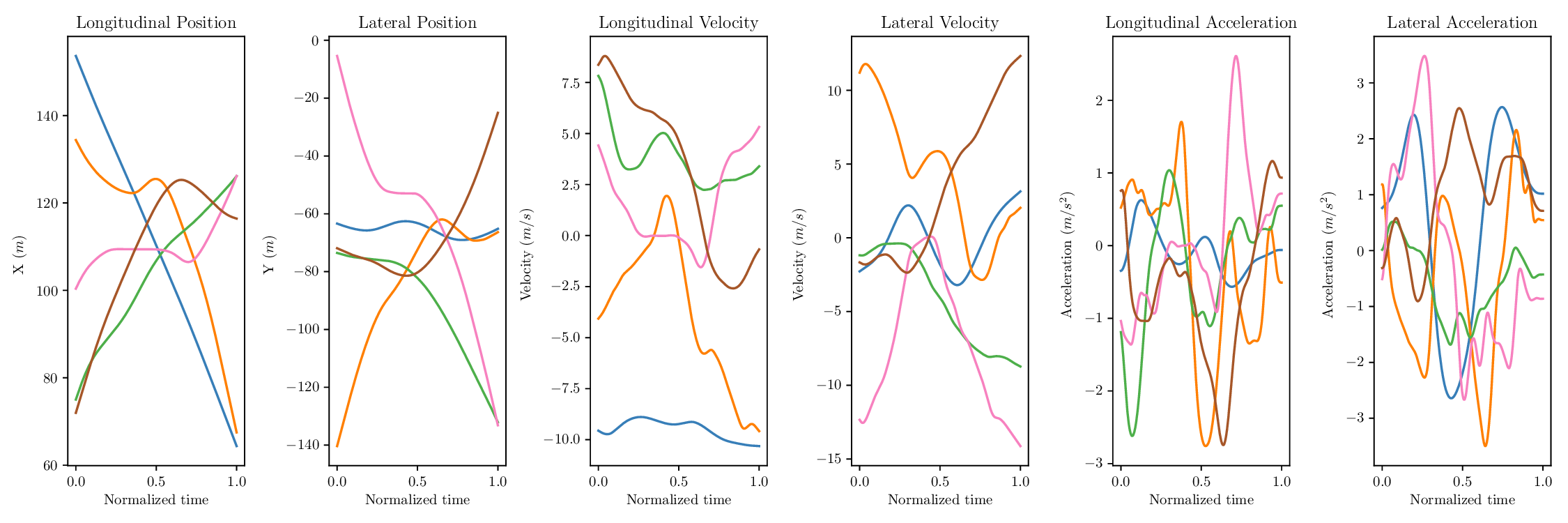}
        \caption{}
        \label{fig:sample_roundabout}
    \end{subfigure}
    \vskip\baselineskip
    \begin{subfigure}[b]{0.49\textwidth}
        \centering
        \includegraphics[scale=0.4]{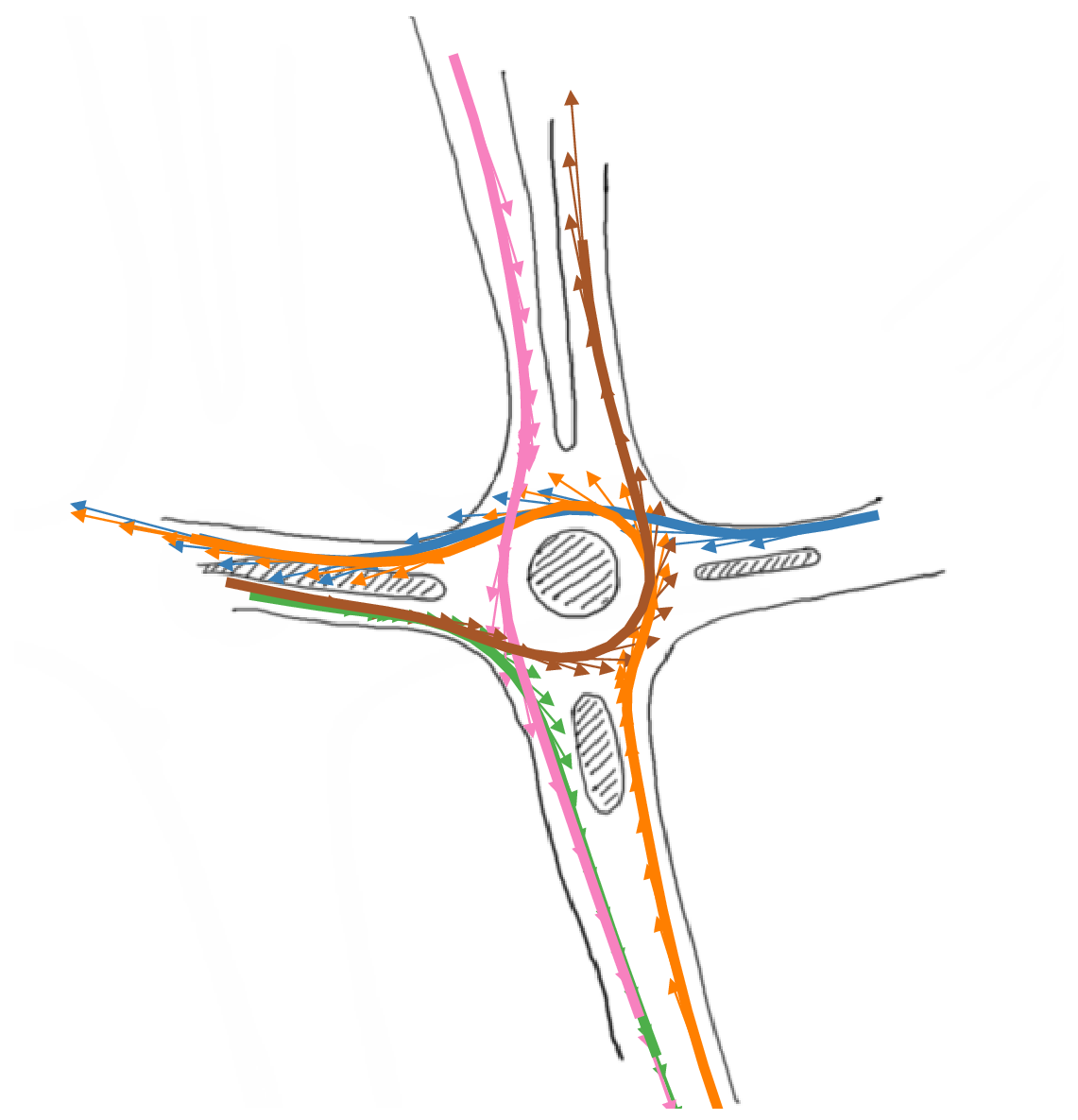}
        \caption{}
        \label{fig:sample_roundabout_vel}
    \end{subfigure}
    \hfill
    \begin{subfigure}[b]{0.49\textwidth}
    \centering
        \includegraphics[scale=0.4]{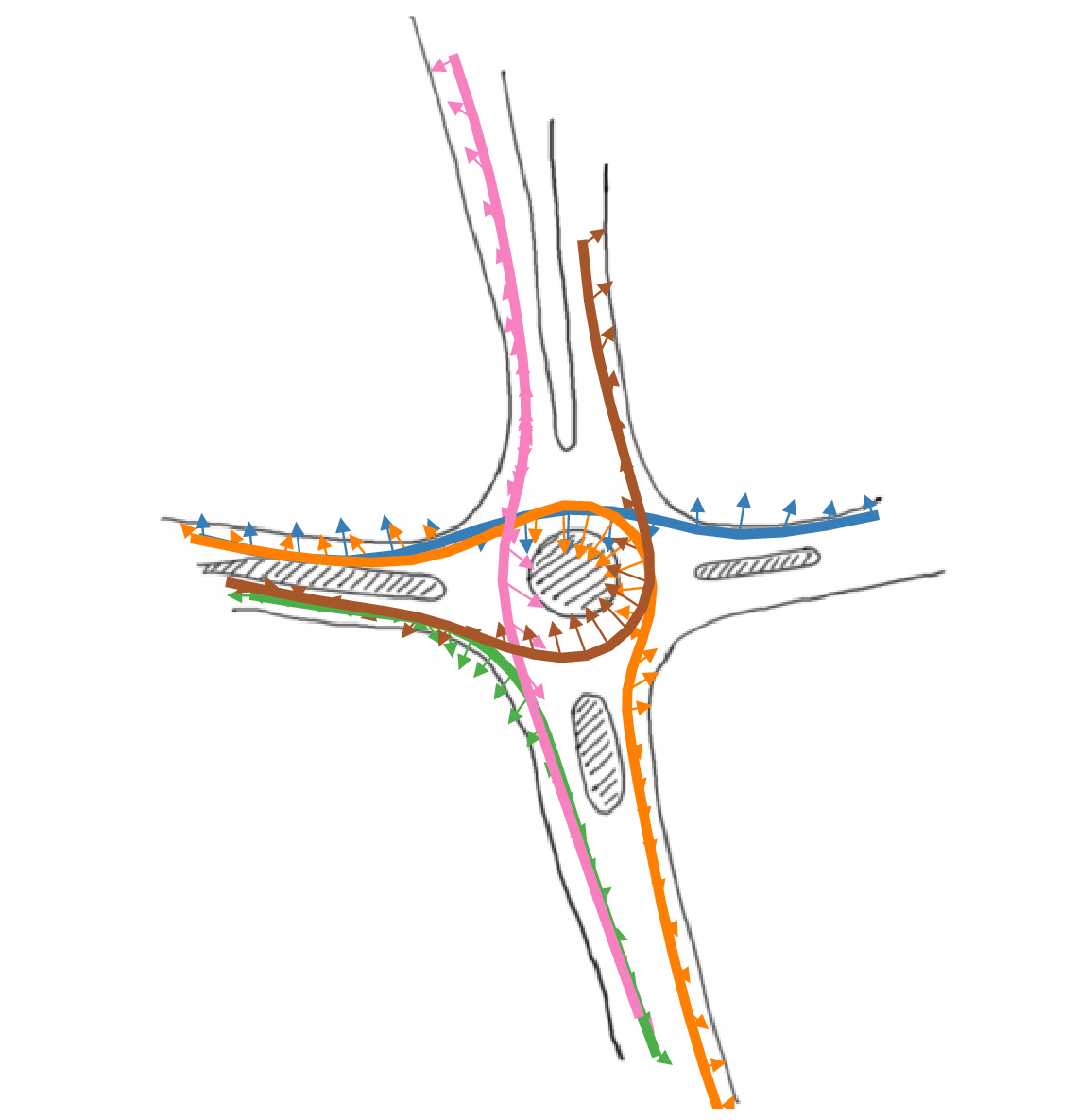}
        \caption{}
        \label{fig:sample_roundabout_acc}
    \end{subfigure}
    \caption{rounD dataset illustration -- a sample of five trajectories (a). The trajectories in the roundabout reference frame where the arrows represent the magnitude and direction of the velocity (b) and acceleration (c).}
    \label{fig:sample}
\end{figure}

\begin{figure}
    \centering
    \begin{subfigure}[b]{0.49\textwidth}
        \centering
        \includegraphics[scale=0.30]{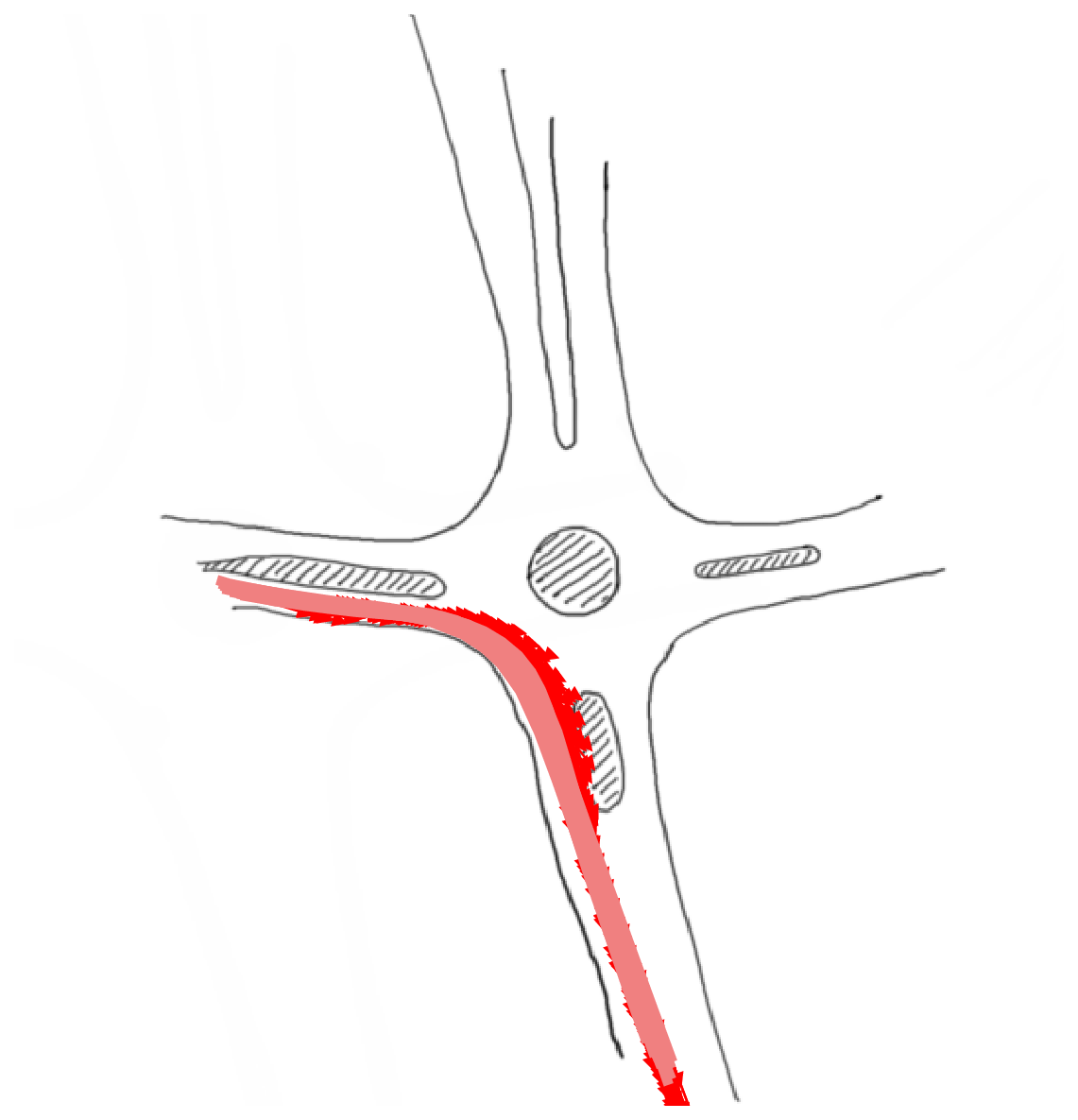}
        \caption{}
        \label{fig:cluster_vel}
    \end{subfigure}
    \hfill
    \begin{subfigure}[b]{0.49\textwidth}
    \centering
        \includegraphics[scale=0.30]{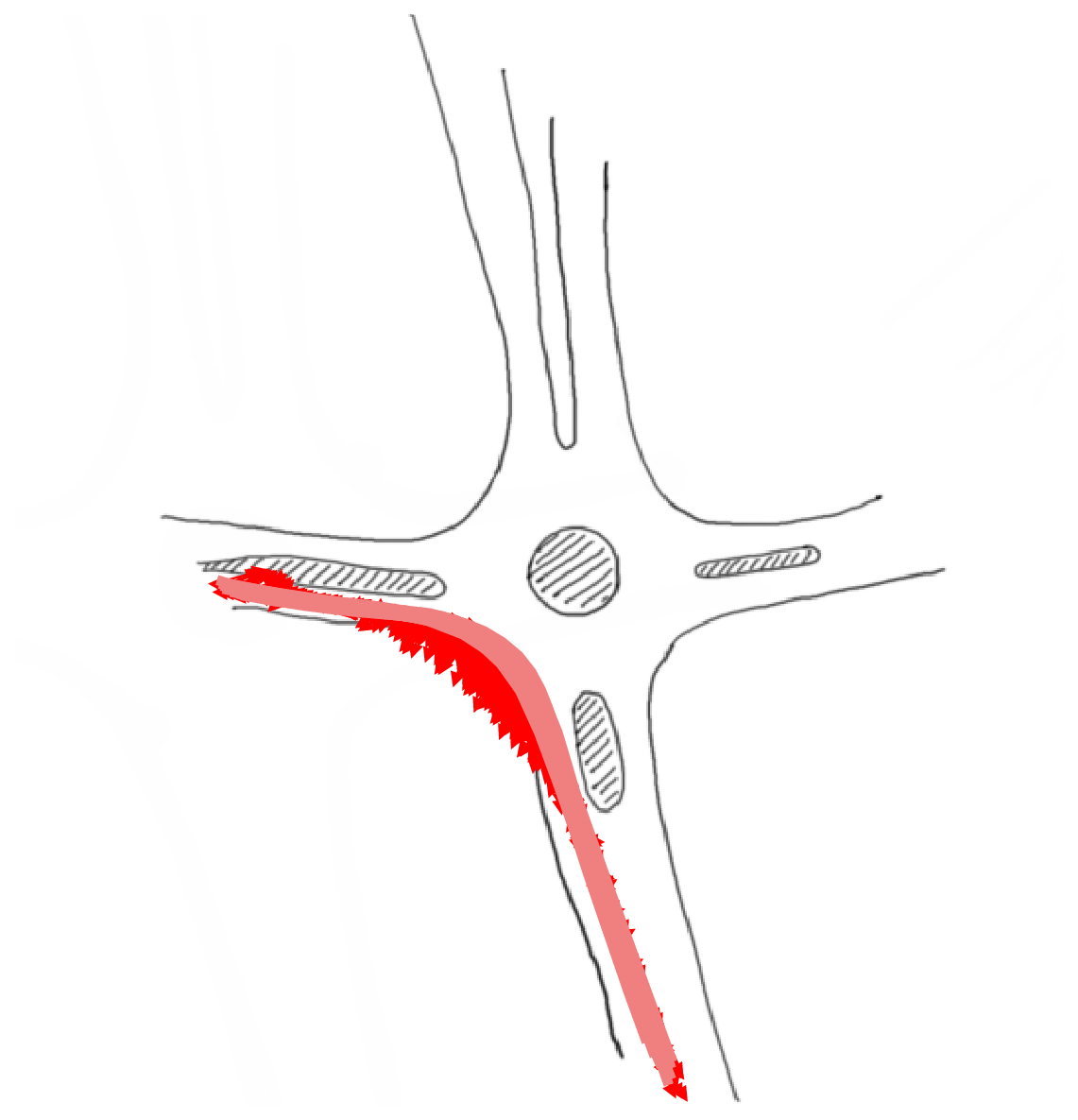}
        \caption{}
        \label{fig:cluster_acc}
    \end{subfigure}
    \caption{rounD dataset: An example of a cluster found using the $\mathtt{fCUBT}$ method. The trajectories are plotted in the roundabout reference frame. Each red curve represents a trajectory of an observation and the red arrows represent the magnitude and direction of the velocity (a) and acceleration (b).}
    \label{fig:cluster_4}
\end{figure}

\begin{figure}
    \centering
    \begin{subfigure}[b]{0.49\textwidth}
        \centering
        \includegraphics[scale=0.35]{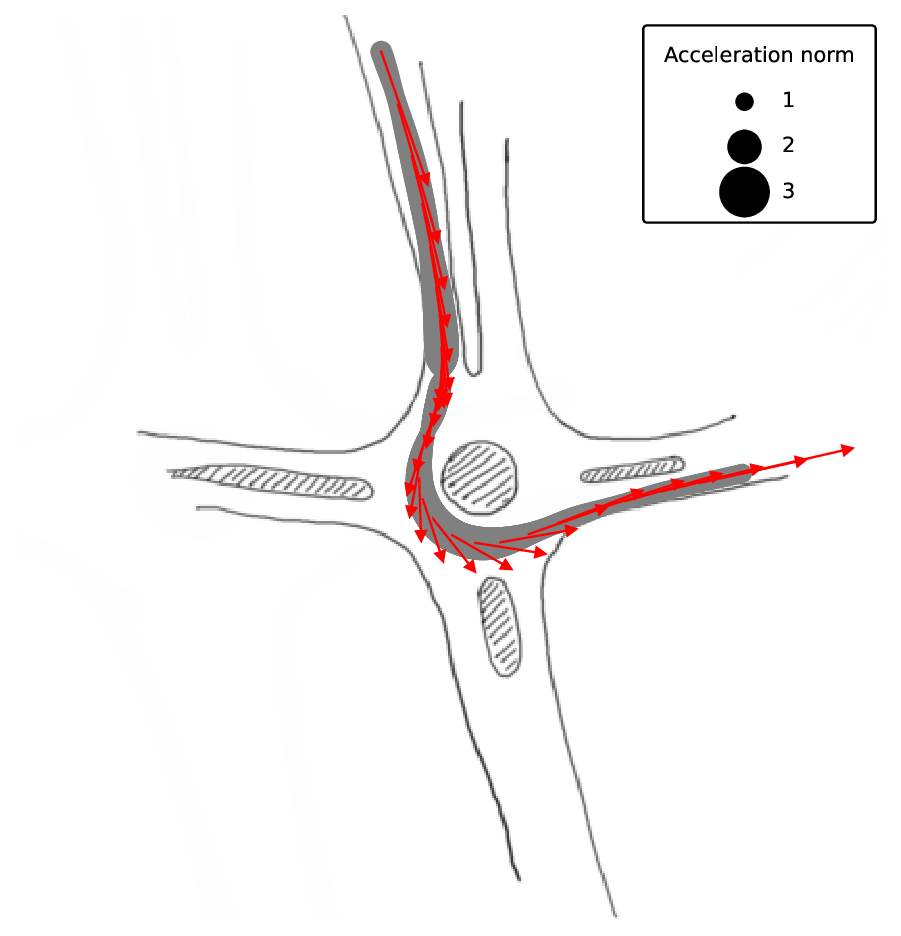}
        \caption{}
        \label{fig:cluster_vel}
    \end{subfigure}
    \hfill
    \begin{subfigure}[b]{0.49\textwidth}
    \centering
        \includegraphics[scale=0.35]{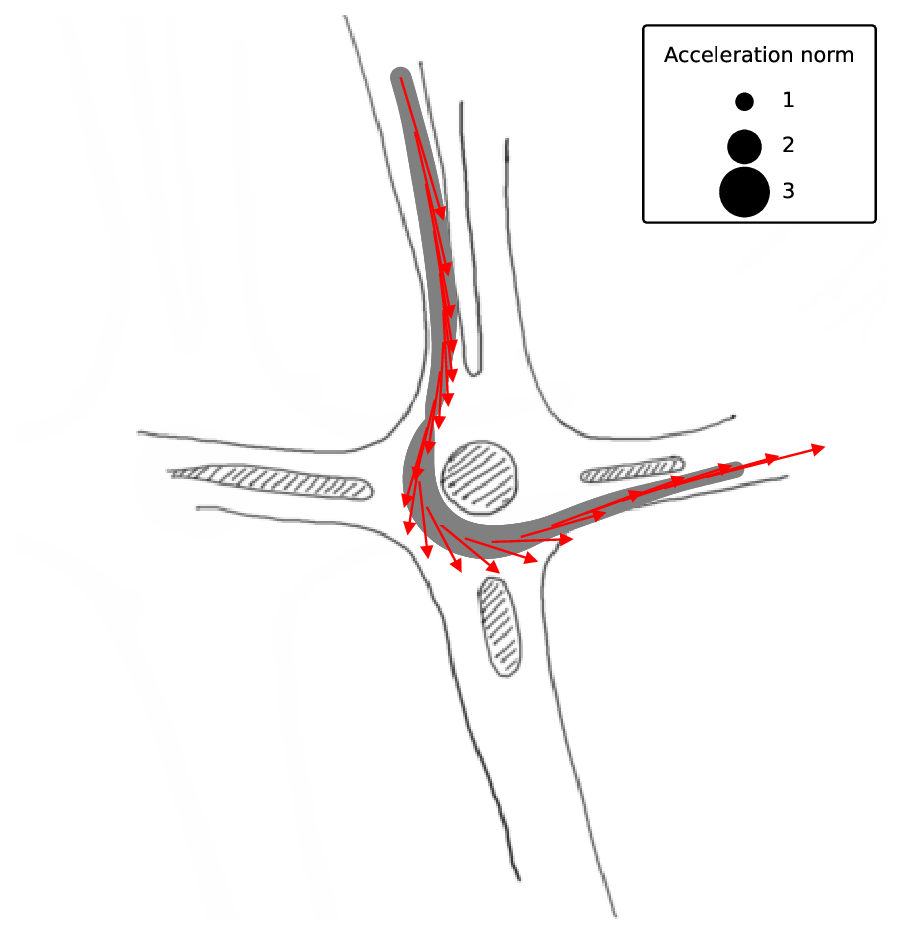}
        \caption{}
        \label{fig:cluster_acc}
    \end{subfigure}
    \caption{rounD dataset: Two different clusters with similar trajectory shape but with different velocity and acceleration profiles: (a) Stop when arriving at the roundabout. (b) Do not stop when arriving at the roundabout.}
    \label{fig:cluster_11}
\end{figure}

\section{Extension to images}\label{sec:extensions}

The $\mathtt{fCUBT}$ algorithm was introduced above for multivariate functional data which could be defined on different domains, possibly of different dimensions. In this section, we present the results of a simulation experiment with a process $X$ with two components, one defined over a compact interval on the real line, the other one defined over a square in the plane. In such situations, the univariate fPCA, performed for each component for the computation of the MFPCA basis, is replaced by a suitable basis expansion for higher dimensional functions. In particular, the eigendecomposition of image data can be performed using the FCP-TPA algorithm for regularized tensor decomposition \cite{allen_multi-way_2013}.  See also \cite{happ_multivariate_2018} and \cite{wang2020}. 

\setcounter{scenario}{3}
\begin{scenario}\label{scenario:4} As in the previous scenarios, the number of clusters is fixed at $K = 5$. Moreover, $P=2$, $\setT_1=[0,1]$ and $\setT_2=[0,1]\times [0,1]$. A sample of $N = 500$ curves is simulated according to the following model, for $s, t \in [0, 1]$:
$$\begin{array}{ll}
X^{(1)}(t) &= a\phi_1(t) + b\phi_2(t) + c\phi_3(t) 
, \\
X^{(2)}(s, t) &= d\phi_1(s) 
\phi_1(t) + e\phi_1(s) 
\phi_2(t) + f\phi_2(s) 
\phi_1(t) + g\phi_2(s) 
\phi_2(t),
\end{array}$$
where $\phi_k$'s are the eigenfunctions of the Wiener process. The coefficients $a, b, c, d, e, f, g$ are random normal variables with parameters defined in Table \ref{tab:coef_scenerio4}. The mixing proportions are taken to be equal. The noisy curves are observed over $100$ equidistant points and the noisy  images are observed over a 2-D grid of $100 \times 100$ points. The measurement errors are introduced as in \eqref{data_real}.  The errors for $X^{(1)}$ are independent, zero-mean, Gaussian variables of variance $\sigma^2 = 0.05$, while the errors for $X^{(2)}$ are bivariate, zero-mean, Gaussian vectors with independent components of  variance $\sigma^2 = 0.05$. The experiment was repeated 500 times. 

\begin{table}
\centering
\begin{tabular}{lrrrrrrr} 
            & \multicolumn{7}{c}{Coefficients (mean / std)} \\
            & $a$ & $b$ & $c$ & $d$ & $e$ & $f$ & $g$ \\
\midrule
 Cluster 1  & $3, 0.5$  & $2, 1.66$ & $1, 1.33$ & $4, 1$ & $0, 0.5$ & $0, 0.1
 $ & $-2, 0.05$     \\
 Cluster 2  & $1, 0.5$  & $-2, 1$   & $0, 1$    & $4, 0.8$ & $0, 0.7$ & $0, 0.08$ & $-2, 0.07$  \\
 Cluster 3  & $1, 0.4$  & $-2, 0.8$ & $0, 0.8$  & $-3, 1$ & $-4, 0.5$ & $0, 0.1$ & $0, 0.05$    \\
 Cluster 4  & $-2, 1$   & $0, 2$    & $-1, 2$   & $0, 0.1$ & $2, 0.1$ & $0, 0.05$ & $0, 0.025$  \\
 Cluster 5  & $-2, 0.2$ & $0, 0.5$  & $-1, 0.5$ & $0, 2$ & $2, 1$ & $0, 0.2$ & $1, 0.1$ \\  
\end{tabular}
\caption{Scenario 4 -- Coefficient for $X^{(1)}(t)$ and $X^{(2)}(s, t)$.}
\label{tab:coef_scenerio4}
\end{table}

By construction, the clusters cannot be retrieved only using only the noisy curves or the noisy images. Thus, the clustering algorithm has to considered these features for the grouping. Examples of realizations from this simulation experiment are shown in Figure \ref{fig:scenario_4_simulated_data}.

\begin{figure}
    \centering
    \begin{subfigure}[b]{\textwidth}
        \centering
        \includegraphics[scale=0.5]{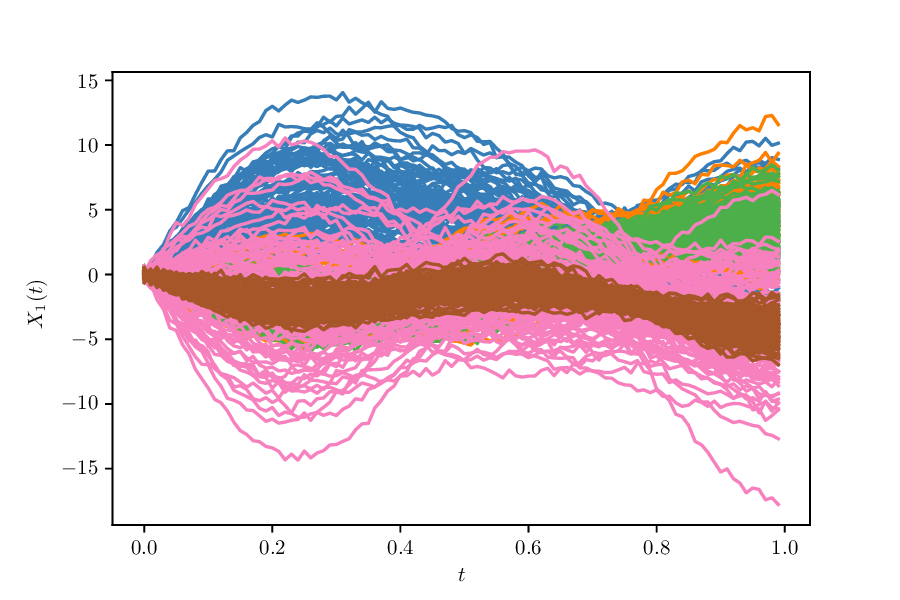}
        \caption{}
        \label{fig:scenario-4_curves}
    \end{subfigure}
    \\
    \begin{subfigure}[b]{0.19\textwidth}
        \centering
        \includegraphics[scale=0.22]{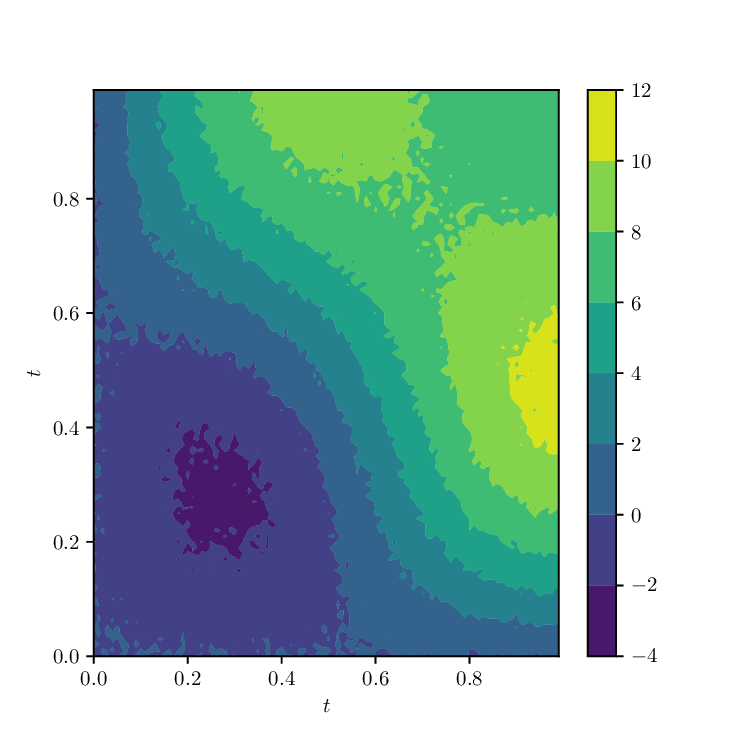}
        \caption{}
        \label{fig:scenario-41}
    \end{subfigure}
    \hfill
    \begin{subfigure}[b]{0.19\textwidth}
        \centering
        \includegraphics[scale=0.22]{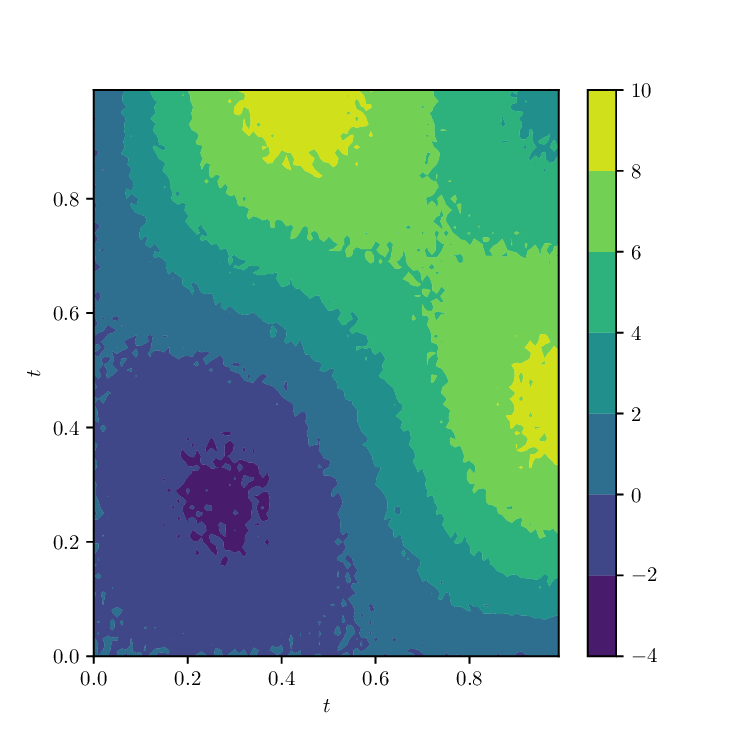}
        \caption{}
        \label{fig:scenario-42}
    \end{subfigure}
    \hfill
    \begin{subfigure}[b]{0.19\textwidth}
        \centering
        \includegraphics[scale=0.22]{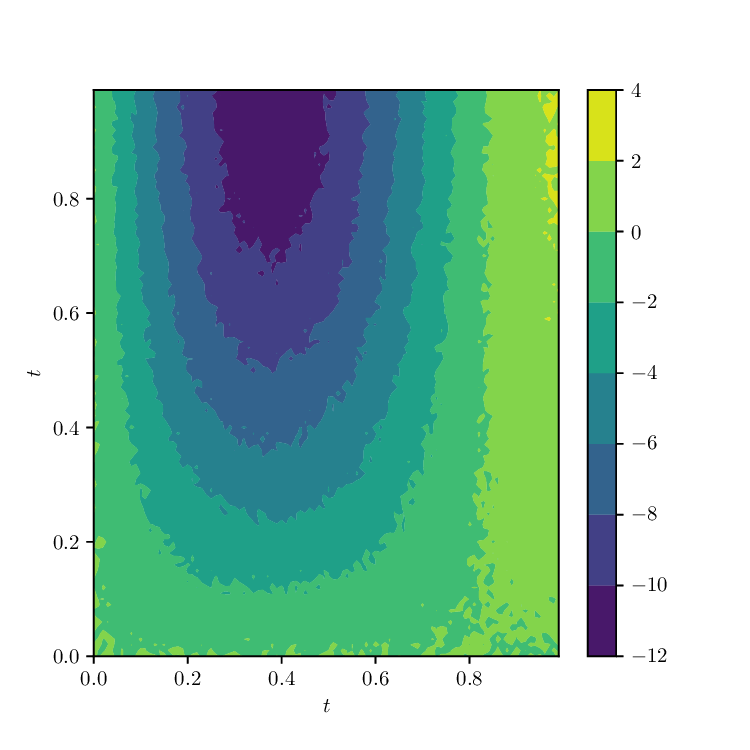}
        \caption{}
        \label{fig:scenario-43}
    \end{subfigure}
    \hfill
    \begin{subfigure}[b]{0.19\textwidth}
        \centering
        \includegraphics[scale=0.22]{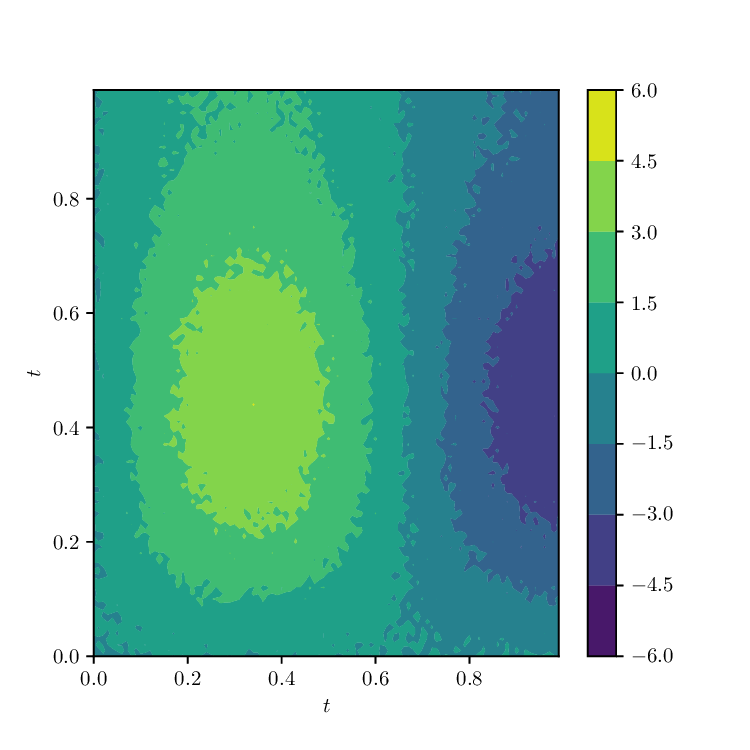}
        \caption{}
        \label{fig:scenario-44}
    \end{subfigure}
    \hfill
    \begin{subfigure}[b]{0.19\textwidth}
        \centering
        \includegraphics[scale=0.22]{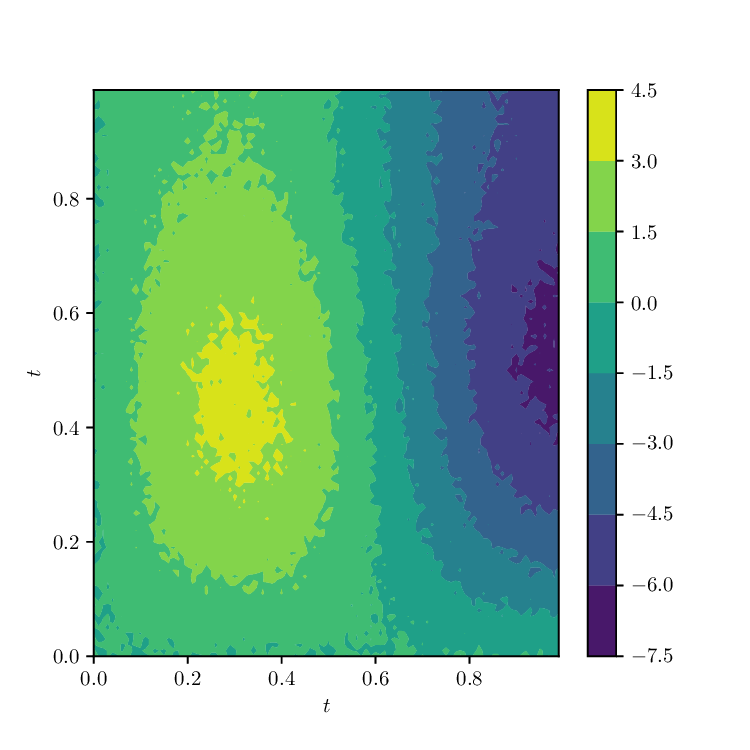}
        \caption{}
        \label{fig:scenario-45}
    \end{subfigure}

    \caption{Examples of simulated data for  Scenario 4~: (a)  $500$ realizations of the process $X^{(1)}$. (b)--(f) One realization of each of the clusters from the process $X^{(2)}$.}
    \label{fig:scenario_4_simulated_data}
\end{figure}

The results of the \texttt{fCUBT} procedure are given in the Table \ref{tab:n_cluster_4} and \ref{tab:n_cluster_4_ari} for both the estimated number of clusters and the \ARI. We remark that in more than half of the cases, our algorithm estimates the number of clusters correctly. However, as  cluster $4$ and $5$ are hard to discriminate, it returns four estimated clusters. This phenomenon is also reflected in the \ARI{} results. In fact, the \ARI{} presents a bimodal distribution where one mode is centered around $0.98$ and the other one around $0.77$. It appears that when the \ARI{} is around $0.98$, the number of clusters is well estimated, while when the \ARI{} is close to $0.77$, it is not. Nevertheless, even if in some replications, the number of clusters is wrongly estimated, overall the results are good. This simulation experiment provides strong evidence that our algorithm could  also be used in such complex situations to which, apparently,  the existing clustering algorithms have not yet been extended. 

\begin{table}
\begin{subtable}{\textwidth}
\centering
\begin{tabular}{rrrrr}
Number of clusters $K$ &    4 &      5  &      6 &      9 \\
\midrule
\texttt{fCUBT} &  43 &  52 &  4 &  1 \\
\end{tabular}
\caption{Scenario 4 -- Number of clusters selected for \texttt{fCUBT} for $500$ simulations as a percentage.}
\label{tab:n_cluster_4}
\end{subtable}%

\begin{subtable}{\textwidth}
\centering
\begin{tabular}{rrrrrrrrrrrr}
Quantile & 0.0 & 0.1 & 0.2 & 0.3 & 0.4 & 0.5 & 0.6 & 0.7 & 0.8 & 0.9 &  1.0 \\
\midrule
\texttt{fCUBT} &  0.70 &  0.76 &  0.77 &  0.77 &  0.78 &  0.78 &  0.96 &  0.98 &  0.99 &  0.99 &  1.0 \\
\end{tabular}
\caption{Scenario 4 -- Quantile of the ARI for $500$ simulations.}
\label{tab:n_cluster_4_ari}
\end{subtable}

\caption{Results for the Scenario 4.}
\label{tab:results_scenario4}
\end{table}

\end{scenario}

\section{Conclusion}\label{sec:conclusion}

The $\mathtt{fCUBT}$ algorithm has been proposed which is a model-based clustering method for functional data based on unsupervised binary trees. It works both on univariate and multivariate functional data defined in possibly different, multidimensional domains. The method is particularly suitable for finding the correct number of clusters within the data with respect to the model assumption. When the complete tree has been grown, $\mathtt{fCUBT}$ can be used for supervised classification. The open-source implementation can be accessed at \url{https://github.com/StevenGolovkine/FDApy} while scripts to reproduce the simulation and real-data analysis are at \url{https://github.com/StevenGolovkine/fcubt}.

A Reviewer asked about whether the Gaussian assumption imposed in model \eqref{eq:mixture_curve} is strict. First, one can use, for instance, the tests proposed in \cite{norm_test} to check this assumption using the terminal nodes of the tree, before or after the joining step. Some evidence of the effectiveness of these tests is provided in the Supplement. On the other hand, we also provide in the Supplement some simulation-based evidence on the robustness of our algorithm to departures from the Gaussian assumption. The conclusion of our simulation experiments is that, in general, the performance of our algorithm diminishes when the Gaussianity condition is not met, but remains good and comparable to that of the competitors, at least for some types of departures.

\section*{Acknowledgment}

The authors wish to thank Groupe Renault and the ANRT (French National Association for Research and Technology) for their financial support via the CIFRE convention no.~2017/1116. Valentin Patilea gratefully acknowledges support from the Joint Research Initiative “Models and
mathematical processing of very large data” under the aegis of Risk Foundation, in partnership
with MEDIAMETRIE and GENES, France.

\appendix

\section{Lemmas and Proofs}\label{app:proof_lemma}


\begin{proof}[Proof of the Lemma \ref{lemma:c_j}]
Using the linearity of the inner product, we may rewrite for each $j \geq 1, c_j$ as
\begin{equation*}
c_j = \inH{X - \mu, \psi_j} = \sum_{k=1}^{K}\inH{\mu_k, \psi_j}\1_{\{Z = k\}} - \inH{\mu, \psi_j} + \sum_{l \geq 1}^{} \xi_l\inH{\phi_l, \psi_j}.
\end{equation*}
Since a linear combination of independant Gaussian distributions is still Gaussian, the conditional distribution $c_j \given Z = k$ has a Gaussian distribution for all $k \in \{1, \dots, K\}$, $j \geq 1$. Moreover, by the definition of $X$ and the linearity of the inner product, for any $i, j\geq 1$ and $ k \in \{1, \dots, K\}$,
 \begin{align*}
\EspCond{c_j}{Z = k} 
	   &= \sum_{k^\prime=1}^{K}\inH{\mu_{k^\prime}, \psi_j}\EspCond{\1_{\{Z = k^\prime\}}}{Z = k} - \inH{\mu, \psi_j} + \sum_{l \geq 1}^{} \EspCond{\xi_l}{Z = k}\inH{\phi_l, \psi_j} \\
	   &= \inH{\mu_k - \mu, \psi_j}.
\end{align*}
Next, for any $i, j \geq 1$ and $ k \in \{1, \dots, K\}$,
\begin{multline*}
\CovCond{c_i, c_j}{Z = k} = \EspCond{c_ic_j}{Z = k} - \EspCond{c_i}{Z = k}\EspCond{c_j}{Z = k}\\
	= \sum_{p=1}^{P}\sum_{q=1}^{P}\int_{[0,1]}^{}\int_{[0,1]}^{}\EspCond{(X - \mu)^{(p)}(s_p)(X - \mu)^{(q)}(t_q)}{Z = k}\psi_i^{(p)}(s_p)\psi_j^{(q)}(t_q)ds_pdt_q \\
	\qquad - \inH{\mu_k - \mu, \psi_i}\inH{\mu_k - \mu, \psi_j}.
\end{multline*}
By definition, for any $1\leq p,q\leq P$,
\begin{align*}
&\EspCond{(X - \mu)^{(p)}(s_p)(X - \mu)^{(q)}(t_q)}{Z = k}\\
&\quad = \sum_{k^\prime = 1}^K\sum_{k^{\prime\prime} = 1}^K \mu_{k^\prime}^{(p)}(s_p)\mu_{k^{\prime\prime}}^{(q)}(t_q)\EspCond{\1_{\{Z = k^\prime\}}}{Z = k}\EspCond{\1_{\{Z = k^{\prime\prime}\}}}{Z = k} \\
&\qquad\qquad + \sum_{k^\prime = 1}^K \mu_{k^\prime}^{(p)}(s_p)\sum_{j \geq 1}\phi_{j}^{(q)}(t_q)\EspCond{\xi_{j}\1_{\{Z = k^\prime\}}}{Z = k} \\
&\qquad\qquad + \sum_{k^{\prime \prime}= 1}^K \mu_{k^{\prime\prime}}^{(q)}(t_q)\sum_{l \geq 1}\phi_{l}^{(p)}(s_p)\EspCond{\xi_{l}\1_{\{Z = k^{\prime\prime}\}}}{Z = k} \\
&\qquad\qquad + \sum_{j \geq 1}^{}\sum_{l \geq 1}\phi_j^{(p)}(s_p)\phi_{l}^{(q)}(t_q)\EspCond{\xi_j\xi_{l}}{Z = k} \\
&\qquad\qquad - \mu^{(q)}(t_q)\sum_{k^\prime = 1}^K \mu_{k^\prime}^{(p)}(s_p)\EspCond{\1_{\{Z = k^\prime\}}}{Z = k} \\
&\qquad\qquad - \mu^{(p)}(s_p)\sum_{k^{\prime\prime} = 1}^K \mu_{k^{\prime\prime}}^{(q)}(t_q)\EspCond{\1_{\{Z = k^{\prime\prime}\}}}{Z = k} \\
&\qquad\qquad - \mu^{(p)}(s_p)\sum_{l \geq 1} \phi_{l}^{(q)}(t_q)\EspCond{\xi_l}{Z = k} \\
&\qquad\qquad - \mu^{(q)}(t_q)\sum_{l \geq 1} \phi_{l}^{(p)}(s_p)\EspCond{\xi_l}{Z = k} \\
&\qquad\qquad + \mu^{(p)}(s_p)\mu^{(q)}(t_q) \\
&\quad = \mu^{(p)}(s_p)\mu^{(q)}(t_q) + \mu_k^{(p)}(s_p)\mu_k^{(q)}(t_q) - \mu^{(p)}(s_p)\mu_k^{(q)}(t_q) - \mu_k^{(p)}(s_p)\mu^{(q)}(t_q) \\
&\qquad\qquad + \sum_{l \geq 1}^{} \sigma_{kl}^2\phi_l^{(p)}(s_p)\phi_l^{(q)}(t_q) \\
&\quad = \left(\mu_k^{(p)}(s_p) - \mu^{(p)}(s_p)\right)\left(\mu_k^{(q)}(t_q) - \mu^{(q)}(t_q)\right) + \sum_{l \geq 1}^{} \sigma_{kl}^2\phi_l^{(p)}(s_p)\phi_l^{(q)}(t_q). \\
\end{align*}
Thus, 
\begin{align*}
\CovCond{c_i,c_j}{Z = k} &= \inH{\mu_k - \mu, \psi_i}\inH{\mu_k - \mu, \psi_j} +  \sum_{l \geq 1}^{} \sigma_{kl}^2\inH{\phi_l, \psi_i}\inH{\phi_l, \psi_j}  \\
	&\qquad - \inH{\mu_k - \mu, \psi_i}\inH{\mu_k - \mu, \psi_j} \\
	&= \sum_{l \geq 1}^{} \sigma_{kl}^2\inH{\phi_l, \psi_i}\inH{\phi_l, \psi_j}.
\end{align*}
Taking $i=j$ in the conditional covariance, we deduce 
\begin{equation*}
\tau_{kj}^2 = \VarCond{c_j}{Z = k} = \sum_{l\geq 1}^{} \sigma_{kl}^2\inH{\phi_{l}, \psi_j}^2.
\end{equation*}
For the marginal distribution of the $c_j$, the zero-mean  is obtained as follows:
$$\EE(c_j) = \sum_{k=1}^{K}\PP(Z = k)\EspCond{c_j}{Z = k} = \sum_{k=1}^{K}p_k\inH{\mu_k - \mu, \psi_j} = 0.$$
For the marginal covariance, we can write
\begin{flalign*}
\Cov(c_i, c_j) &= \EE(c_ic_j) \\
	&= \sum_{k=1}^{K}p_k\EspCond{c_ic_j}{Z = k} \\
	&= \sum_{k=1}^{K}p_k\left(\CovCond{c_i,c_j}{Z = k} + \EspCond{c_i}{Z = k}\EspCond{c_j}{Z = k}\right) \\
	&= \sum_{k=1}^K p_k\left(\sum_{l\geq 1}\inH{\phi_l, \psi_i}\inH{\phi_l, \psi_j}\sigma_{kl}^2 + \inH{\mu_k - \mu, \psi_i}\inH{\mu_k - \mu, \psi_j}\right).
\end{flalign*}
This concludes the proof.
\end{proof}

In applications, one cannot use an infinite number of terms in the representation of the process $X$, and has to truncate such a representation. The following lemma shows that such a truncation could be arbitrarily accurate.

\begin{lemma}\label{lemma:approx}
Let $X$ be defined as in \eqref{eq:mixture_curve} for some orthonormal basis  $\{\phi_j\}_{j \geq 1}$. 
Let $\{\psi_j\}_{j \geq 1}$ be another orthonormal basis in $\setH$ with to which $X$ has the decomposition 
\begin{equation*}
X(\pointt) = \sum_{j \geq 1}^{} c_j\psi_j(\pointt), \quad \pointt \in \mathcal{T}.
\end{equation*}
Then
\begin{equation*}
\lim_{J \rightarrow \infty} \EE\left(\normH{X - X_{\lceil J \rceil}}^2\right) =0, \quad\text{where}\quad X_{\lceil J \rceil}(t) = \sum_{j=1}^J c_j\psi_j(\pointt), \pointt \in \mathcal{T}.
\end{equation*}
\end{lemma}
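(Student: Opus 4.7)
The plan is to reduce the claim to Parseval's identity for the orthonormal basis $\{\psi_j\}_{j\geq 1}$ in $\setH$, and then to identify the resulting series of expected squared coefficients with $\EE(\normH{X}^2)$, which the mixture model assumptions force to be finite.

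First I would note that, since $\{\psi_j\}_{j\geq 1}$ is orthonormal in $\setH$ and $X_{\lceil J \rceil}$ is the $J$-th partial sum of the expansion, one has pathwise
\begin{equation*}
\normH{X - X_{\lceil J \rceil}}^2 = \sum_{j > J} c_j^2, \quad\text{so that}\quad \EE\bigl(\normH{X - X_{\lceil J \rceil}}^2\bigr) = \sum_{j > J} \EE(c_j^2),
\end{equation*}
using Tonelli to swap the sum and expectation since the terms are nonnegative. In particular, $\sum_{j\geq 1}\EE(c_j^2) = \EE(\normH{X}^2)$, and the conclusion will follow at once from the tail convergence of a convergent series of nonnegative terms, provided I can show $\EE(\normH{X}^2) < \infty$.

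Next I would compute $\EE(\normH{X}^2)$ directly from the defining representation \eqref{eq:mixture_curve}. Conditioning on $Z = k$, the cross-terms between $\mu_k$ and $\sum_j \xi_j \phi_j$ vanish because the $\xi_j$ are zero-mean given $Z$, and the orthonormality of $\{\phi_j\}_{j\geq 1}$ gives
\begin{equation*}
\EE\bigl(\normH{X}^2 \given Z = k\bigr) = \normH{\mu_k}^2 + \sum_{j \geq 1}\sigma_{kj}^2.
\end{equation*}
Averaging in $k$ against the weights $p_k$ yields
\begin{equation*}
\EE(\normH{X}^2) = \sum_{k=1}^{K} p_k \normH{\mu_k}^2 + \sum_{k=1}^{K} p_k \sum_{j \geq 1}\sigma_{kj}^2,
\end{equation*}
which is finite because each $\mu_k \in \setH$ by assumption and $\sum_{k,j}\sigma_{kj}^2 < \infty$.

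Combining the two displays gives $\sum_{j\geq 1}\EE(c_j^2) < \infty$, so the tail $\sum_{j > J}\EE(c_j^2) \to 0$ as $J \to \infty$, which is exactly the desired conclusion. The argument does not use the Gaussian assumption on the $\xi_j$, only the second-moment integrability; this is consistent with the remark preceding the statement. The main (and in fact only) subtle point is justifying the Parseval step for the random element $X$, which is routine given $X \in \setH$ a.s.\ and $\{\psi_j\}$ is an orthonormal basis of $\setH$, so there is no real obstacle in the proof.
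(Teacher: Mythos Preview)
Your proof is correct and follows essentially the same route as the paper: both use the orthonormality of $\{\psi_j\}$ to write $\normH{X-X_{\lceil J\rceil}}^2=\sum_{j>J}c_j^2$, then invoke finiteness of $\EE(\normH{X}^2)$ (equivalently $\EE(\normH{X-\mu}^2)$) so that the tail of the convergent series $\sum_j\EE(c_j^2)$ tends to zero. The only difference is cosmetic: the paper simply cites the finiteness of $\EE(\normH{X}^2)$ (already noted in the text before the lemma), whereas you recompute it explicitly by conditioning on $Z=k$; your use of Tonelli is also a bit more careful than the paper's informal swap of sum and expectation.
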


\begin{proof}[Proof of Lemma \ref{lemma:approx}]
Given  $\{\psi_j\}_{j \geq 1}$  an orthonormal basis of $\setH$, $X$ can be written in the form $X(t) = \sum_{j\geq 1}^{} c_j\psi_j(t), t \in \setT$, with random variables $c_j = \inH{X - \mu, \psi_j}$. Then 
\begin{align*}
\normH{X - X_{\lceil J \rceil}}^2 &= \normH{\sum_{j=J+1}^{\infty}c_j\psi_j}^2 \\
	&= \sum_{p=1}^{P}\int_{\mathcal{T}_p}^{}\left(\sum_{j=J+1}^{\infty}c_j\psi_j^p(t)\right)\left(\sum_{j^\prime=J+1}^{\infty}c_{j^\prime}\psi_{j^\prime}^p(t)\right)dt \\
	&= \sum_{j=J+1}^{\infty}\sum_{j^\prime=J+1}^{\infty}c_jc_{j^\prime}\inH{\psi_j, \psi_{j^\prime}} \\
	&= \sum_{j=J+1}^{\infty}c_j^2.
\end{align*}
Moreover, 
\begin{equation*}
\EE\left(\sum_{j \geq 1} c_j^2\right) = \EE\left(\sum_{j\geq 1} \inH{X - \mu, \psi_j}^2\right) = \EE\left(\inH{X  - \mu}^2\right) < \infty.
\end{equation*}
From this, and the fact that the remainder of a convergent series tends to zero, we have 
\begin{equation*}
\EE\left(\normH{X - X_{\lceil J \rceil}}^2\right) = \EE\left(\sum_{j=J+1}^{\infty}c_j^2\right) \xrightarrow[J \rightarrow \infty]{} 0.
\end{equation*}
This conclude the proof.
\end{proof}

\medskip
\medskip
\medskip
The following lemma shows that the MFPCA basis is the one which will induce the most accurate truncation for a given truncation number $J$. Therefore, among the workable bases one could use in practice, the MFPCA basis is likely to be a privileged one.

\begin{lemma}\label{lemma:best_basis}
Let $X$ be defined as in \eqref{eq:mixture_curve}. 
Let $\{\psi_j\}_{j \geq 1}$ be some orthonormal basis in $\setH$ and  $\{\varphi_j\}_{j \geq 1}$ be the MFPCA basis. Let $\mu$ be the mean curve as defined in \eqref{eq:c_j}.  Then, for any $J\geq 1$ such that $\lambda_{1}> \lambda_{2}\ldots>\lambda_{J}> \lambda_{J+1}$, 
\begin{equation}
\EE\left(\LnormH{X - \mu  - \sum_{j=1}^{J} \inH{X - \mu, \psi_j}  \psi_j  }^2\right) \geq \EE\left(\LnormH{X - \mu -  \sum_{j=1}^{J}\mathfrak{c}_{j}\varphi_j  }^2\right).
\end{equation}
\end{lemma}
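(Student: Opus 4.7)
The plan is to reduce the lemma to the classical optimality of the eigen-basis of $\Gamma$ for finite-rank approximation, i.e.\ a Ky Fan type maximum principle. The overall strategy is to use Parseval's identity to turn the squared $\normH{\cdot}$-error into a sum of squared coefficients, then to recognize the expected squared coefficients as the diagonal entries of $\Gamma$ in the relevant orthonormal system, and finally to compare them with the eigenvalues $\lambda_j$ using the spectral decomposition of $\Gamma$.

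First, since $\{\psi_j\}_{j\geq 1}$ is a complete orthonormal basis of $\setH$, Parseval gives
$\normH{X-\mu}^2 = \sum_{j\geq 1} c_j^2$ where $c_j = \inH{X-\mu,\psi_j}$, and likewise $\normH{X-\mu}^2 = \sum_{j\geq 1} \mathfrak c_j^2$ in the MFPCA basis. Hence
\[
\EE\LnormH{X-\mu - \sum_{j=1}^{J} c_j\psi_j}^2 = \EE\LnormH{X-\mu}^2 - \sum_{j=1}^{J}\EE(c_j^2),
\]
and the analogous identity with $\mathfrak c_j,\varphi_j$ in place of $c_j,\psi_j$. Since $\EE\normH{X-\mu}^2$ does not depend on the chosen basis, the claim reduces to the inequality
\[
\sum_{j=1}^{J}\EE(c_j^2)\;\leq\;\sum_{j=1}^{J}\EE(\mathfrak c_j^2) = \sum_{j=1}^{J}\lambda_j.
\]

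For the second step I would write $\EE(c_j^2)=\Var(c_j)=\inH{\Gamma\psi_j,\psi_j}$, which follows from the definition of $\Gamma$ as the integral operator with kernel $C$ and the fact that $\EE[c_j]=0$. The third step is the Ky Fan maximum principle in Hilbert space: using the spectral decomposition $\Gamma f = \sum_k \lambda_k \inH{f,\varphi_k}\varphi_k$ gives
\[
\sum_{j=1}^{J}\inH{\Gamma\psi_j,\psi_j} = \sum_{j=1}^{J}\sum_{k\geq 1}\lambda_k \inH{\psi_j,\varphi_k}^2 = \sum_{k\geq 1}\lambda_k\, a_k,\qquad a_k:=\sum_{j=1}^{J}\inH{\psi_j,\varphi_k}^2.
\]
By Bessel's inequality each $a_k\in[0,1]$ and, because $a_k$ is the $k$th diagonal entry of the projection onto $\operatorname{span}(\psi_1,\ldots,\psi_J)$ expressed in the basis $\{\varphi_k\}$, we have $\sum_k a_k = J$. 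Since $(\lambda_k)$ is nonincreasing, the linear functional $(a_k)\mapsto \sum_k \lambda_k a_k$ over the polytope $\{0\leq a_k\leq 1,\ \sum_k a_k = J\}$ is maximized at $a_k=\1_{\{k\leq J\}}$, yielding $\sum_{j=1}^{J}\inH{\Gamma\psi_j,\psi_j}\leq \sum_{j=1}^{J}\lambda_j$. The strict spectral gap hypothesis $\lambda_J>\lambda_{J+1}$ is only needed to invoke the characterization \eqref{property_fpca}, but the comparison above works with equalities of consecutive eigenvalues as well.

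The only mildly delicate point is justifying the exchange of expectation and the infinite sum when identifying $\EE(c_j^2)$ with $\inH{\Gamma\psi_j,\psi_j}$, which follows from Fubini because $\EE\normH{X}^2<\infty$ and $C\in\sLp{2}{\setT\times\setT}$. Everything else is essentially Parseval plus a convex-combination argument, so the main conceptual step is the Ky Fan comparison; the rest is bookkeeping.
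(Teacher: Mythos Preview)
Your proof is correct and follows essentially the same route as the paper: both reduce the truncation-error comparison, via Parseval, to the inequality $\sum_{j=1}^{J}\inH{\Gamma\psi_j,\psi_j}\leq\sum_{j=1}^{J}\lambda_j$, after identifying $\EE(c_j^2)=\inH{\Gamma\psi_j,\psi_j}$. The only difference is in justifying this last inequality: the paper simply invokes the variational characterization \eqref{property_fpca} of the MFPCA basis, whereas you supply an explicit Ky Fan--type argument (writing $\sum_j\inH{\Gamma\psi_j,\psi_j}=\sum_k\lambda_k a_k$ with $a_k\in[0,1]$, $\sum_k a_k=J$), which is slightly more self-contained and, as you note, does not actually require the strict gaps $\lambda_1>\cdots>\lambda_{J+1}$.
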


\begin{proof}[Proof of Lemma \ref{lemma:best_basis}]
First, let us note that, since the bases are orthogonal, the minimization of the truncation error for a given $J$ is equivalent to the maximization of the sum of the variances $\inH{\Gamma \psi_j, \psi_j}$, $1\leq j\leq J$. Moreover, for each $j\geq 1$, we have 
\begin{align*}
 & \!\!\!\!\! \!\!\!\!\! \EE\left[\inH{X - \mu,  \varphi_j}^2\right]  \\
	&= \EE\left[\left(\sum_{p=1}^{P}\inLp{(X - \mu)^{(p)},  \varphi_j^{(p)}}\right)\left(\sum_{q=1}^{P}\inLp{(X - \mu)^{(q)},  \varphi_j^{(q)}}\right) \right] \\
	&= \EE\left[\left(\sum_{p=1}^{P}\int_{[0,1]} (X - \mu)^{(p)}(s_p) \varphi_j^{(p)}(s_p)ds_p\right)\left(\sum_{q=1}^{P}\int_{[0,1]} (X - \mu)^{(q)}(t_q) \varphi_j^{(q)}(t_q)dt_q\right) \right] \\
	& = \sum_{p=1}^{P}\int_{[0,1]}\sum_{q=1}^{P}\int_{[0,1]} \EE\left[(X - \mu)^{(p)}(s_p) 
	 (X - \mu)^{(q)}(t_q) 
\right]
 \varphi_j^{(p)}(s_p) \varphi_j^{(q)}(t_q)ds_pdt_q  \\
	&= \sum_{p=1}^{P}\int_{[0,1]}\sum_{q=1}^{P}\int_{[0,1]} C_{p, q}(s_p, t_q) \varphi_j^{(p)}(s_p) \varphi_j^{(q)}(t_q)ds_pdt_q \\
	&= \sum_{q=1}^{P}\int_{[0,1]}\sum_{p=1}^{P} \inLp{C_{p, q}(\cdot, t_q),  \varphi_j^{(p)(\cdot)}} \varphi_j^{(q)}(t_q)dt_q \\
	&= \sum_{q=1}^{P}\int_{[0,1]}\inH{C_{\cdot, q}(\cdot, t_q),  \varphi_j(\cdot)} \varphi_j^{(q)}dt_q \\
	&= \sum_{q=1}^{P}\int_{[0,1]} \left(\Gamma \varphi_j\right)^{(q)}(t_j) \varphi_j^{(q)}(t_q)dt_q \\
	&= \sum_{q=1}^{P} \inLp{\left(\Gamma \varphi_j\right)^{(q)}(t_j),  \varphi_j^{(q)}(t_q)} \\
	&= \inH{\Gamma \varphi_j,  \varphi_j}
\end{align*}
Since  the MfPCA basis is characterized by the property \eqref{property_fpca},
for any orthonormal basis $\{\psi_j\}_{j \geq 1}$, we necessarily have 
\begin{equation*}
\sum_{j=1}^{J} \Var(\inH{X - \mu,  \varphi_j}) = \sum_{j=1}^{J} \inH{\Gamma \varphi_j, \varphi_j} \geq \sum_{j=1}^{J} \inH{\Gamma \psi_j, \psi_j} .
\end{equation*}
This concludes the proof.
\end{proof}

\section{Algorithms}\label{app:algo}

\begin{algorithm*}
\SetAlgoLined
\KwInput{A training sample $\mathcal{S}_{N_0} = \{X_1, \dots, X_{N_0}\}\subset \setH$,  $J^{(p)}$,  
 $K_{max}$ and $\mathtt{minsize}$.}

\KwInitialization{Set $(\mathfrak{d} ,\mathfrak{j} )=(0, 0)$ and $\mathfrak{S}_{0, 0} = \mathcal{S}_{N_0}$. }

\KwFPCA{Perform a MFPCA with  $J$   components on the data  in the node $\mathfrak{S}_{\mathfrak{d, j}}$ and get the set of eigenvalues $\Lambda_{\mathfrak{d, j}}$ associated with a set of eigenfunctions $\Phi_{\mathfrak{d, j}}$. Build the matrix $C_{\mathfrak{d, j}}$ defined in \eqref{eq:proj_matrix}.}

\KwGMM{For each $K = 1, \dots, K_{max}$, fit  $K-$components GMM  using an EM algorithm on the columns of the matrix $C_{\mathfrak{d, j}}$. The models are denoted by $\{\mathcal{M}_1, \dots, \mathcal{M}_{K_{max}}\}$. 
The number of mixture components 
is estimated by $\widehat{K}_{\mathfrak{d, j}}$ defined in \eqref{eq:K_hat} using the \BIC.}

\KwStop{Test if the node indexed by $(\mathfrak{d}, \mathfrak{j})$ is a terminal node, that is if $\widehat{K}_{\mathfrak{d, j}} = 1$ or if there are less than $\mathtt{minsize}$ elements in $\mathcal{S}_{\mathfrak{d, j}}$. If the node is terminal, then stop the construction of the tree for this node, otherwise go to the next step.}

\KwConstruction{A non-terminal node indexed by $(\mathfrak{d}, \mathfrak{j})$ is split into two subnodes as follows:
\begin{enumerate}
	\item Fit a $2-$component GMM using an EM algorithm.
	\item For each element of $\mathfrak{S}_{\mathfrak{d, j}}$,  compute the posterior probability to belong to the first component.
	\item Form the children nodes as $\mathfrak{S}_{\mathfrak{d} + 1, 2\mathfrak{j}} = \{ \text{elements of } \mathfrak{S}_{\mathfrak{d, j}} \text{ with posterior probability } \geq 1/2 \}$ and $\mathfrak{S}_{\mathfrak{d} + 1, 2\mathfrak{j} + 1} = \mathfrak{S}_{\mathfrak{d, j}}  \setminus \mathfrak{S}_{\mathfrak{d} + 1, 2\mathfrak{j}}.$
\end{enumerate}}

\KwRecursion{Continue the procedure by applying the \textbf{Computation of the MFPCA components} step to the nodes $(\mathfrak{d} + 1, 2\mathfrak{j})$ and $(\mathfrak{d} + 1, 2\mathfrak{j} + 1)$.}

\KwOutput{A set of nodes $\{\mathfrak{S}_{\mathfrak{d, j}},~ 0 \leq \mathfrak{j} < 2^{\mathfrak{d}},~ 0 \leq \mathfrak{d} < \mathfrak{D}\}$.}

\caption{Construction of a tree $\mathfrak{T}$}
\label{alg:tree_construction}
\end{algorithm*}

An example of a maximal tree is given in Figure \ref{fig:max_tree}. It corresponds to a simulated dataset defined in Section \ref{sec:emp_analysis}, Scenario $1$, where  $J^{(p)}$  is set to explain $95\%$ of the variance at each node of the tree, $K_{max} = 5$ and $\mathtt{minsize} = 10$. This tree has six leaves, whereas Scenario $1$ contains only five clusters. In this illustration, the joining step will be helpful to join the group with only $4$ curves, corresponding to the node $\mathfrak{S}_{3, 2}$, with another group, hopefully the node $\mathfrak{S}_{3, 1}$. 

\begin{figure}
    \centering
    \includegraphics[scale=0.55]{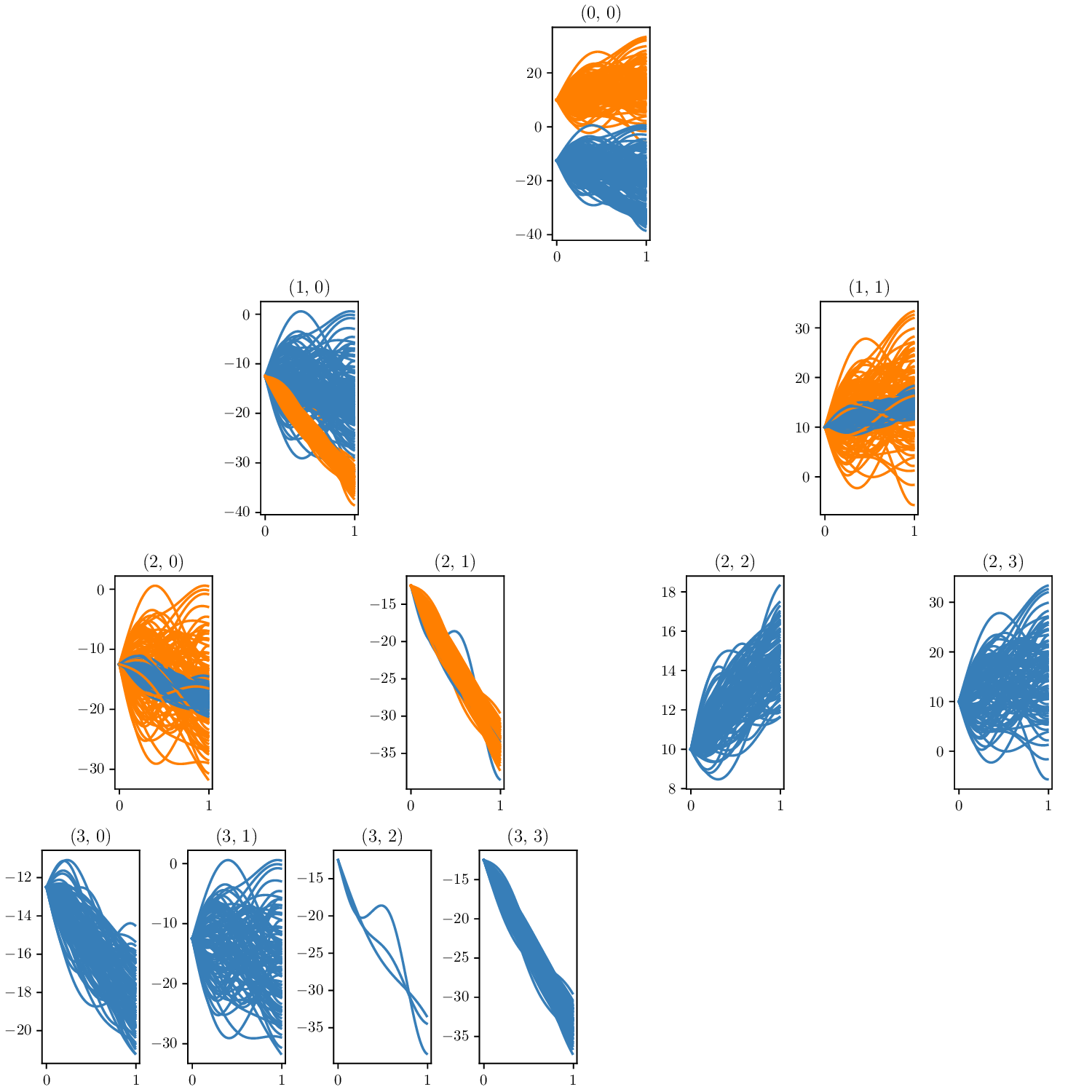}
    \caption{Illustration of maximal tree for Scenario 1 simulated data (different scale for each graph).}
    \label{fig:max_tree}
\end{figure}

\begin{algorithm*}
\SetAlgoLined
\KwInput{A set of nodes $\{\mathfrak{S}_{\mathfrak{d, j}},~ 0 \leq \mathfrak{j} < 2^{\mathfrak{d}},~ 0 \leq \mathfrak{d} < \mathfrak{D}\}$,  $J^{(p)}$,   
and $K_{max}$.}

\KwInitialization{Build the set of terminal nodes 
$$V = \{\mathfrak{S}_{\mathfrak{d, j}}, 0 \leq j < 2^\mathfrak{d}, 0 \leq \mathfrak{d} < \mathfrak{D} \mathbin{\mid} \mathfrak{S}_{\mathfrak{d, j}} \;\text{is a terminal node}\}.$$}
\KwGraph{Build the set $E$ defined in \eqref{eq:set_edges} and denote by $\mathcal{G}$ the graph $(V, E)$. Associate with each edge $(\mathfrak{S}_{\mathfrak{d, j}}, \mathfrak{S}_{\mathfrak{d^\prime, j^\prime}})$ the value of the \BIC{} that corresponds to $\widehat{K}_{(\mathfrak{d, j}) \cup (\mathfrak{d^\prime, j^\prime})}$.}

\KwStop{If $E$ is empty or $V$ is reduced to a unique element, stop the algorithm.}

\KwAggregate{Let $(\mathfrak{S}_{\mathfrak{d, j}}, \mathfrak{S}_{\mathfrak{d^\prime, j^\prime}})$ be the edge with the maximum \BIC{} value. Then, remove this edge and replace the asssociated vertice by $\mathfrak{S}_{\mathfrak{d, j}} \cup \mathfrak{S}_{\mathfrak{d^\prime, j^\prime}}$.}

\KwRecursion{Continue the procedure by applying the \textbf{Creation of the graph} step with $\{V \setminus \{\mathfrak{S}_{\mathfrak{d, j}}, \mathfrak{S}_{\mathfrak{d^\prime, j^\prime}}\}\} \cup \{\mathfrak{S}_{\mathfrak{d, j}} \cup \mathfrak{S}_{\mathfrak{d^\prime, j^\prime}}\}$.}

\KwOutput{A partition $\mathcal{U}$ of $\mathfrak{S}_{0, 0}$ and labels associated to each element of $\mathcal{S}_{N_0}$.}
\caption{Joining step}
\label{alg:join_nodes}
\end{algorithm*}

\begin{algorithm*}
\SetAlgoLined
\KwInput{A new realization $\mathfrak{X}$ of the process $X$, the complete tree $\mathfrak{T}$ and the partition $\mathcal{U}$.}

\For{$\mathfrak{S}_{\mathfrak{d, j}} \in \{\mathfrak{S}_{\mathfrak{d, j}},~ 0 \leq \mathfrak{j} < 2^{\mathfrak{d}},~ 0 \leq \mathfrak{d} < \mathfrak{D}\}$}{
\begin{enumerate}
	\item Compute the vector
	$$\left(\inH{\mathfrak{X} - \mu_{\mathfrak{d, j}}, \varphi^1_{\mathfrak{d, j}}}, \dots, \inH{\mathfrak{X} - \mu_{\mathfrak{d, j}},  \varphi^{J}_{\mathfrak{d, j}}} \right)^\top;$$

	\item Compute the posterior probability to belong to each component of the $2$-components GMM fitted on $\mathfrak{S}_{\mathfrak{d, j}}$;

	\item Compute the probability to be in $\mathfrak{S}_{\mathfrak{d, j}}$ as
	$$\PP^\star(\mathfrak{X} \in \mathfrak{S}_\mathfrak{d, j}) = \prod_{\mathfrak{S} \in \mathtt{Pa}(\mathfrak{S}_{\mathfrak{d, j}})}^{} \PP^\star(\mathfrak{X} \in \mathfrak{S} \mid \mathfrak{X} \in \mathtt{Pa}(\mathfrak{S})).$$
\end{enumerate}
}

\For{$U \in \mathcal{U}$}{Compute $\PP^\star(\mathfrak{X} \in U)$.}

\KwOutput{A label for $\mathfrak{X}$ which is defined as $\arg\max_{U \in \mathcal{U}} \PP^\star(\mathfrak{X} \in U)$.}
\caption{Classify one observation}
\label{alg:classify_one_obs}
\end{algorithm*}

\subsection{On the number of scores $J^{(p)}$ and $J$}\label{nb_sco_sm}

In many situations, considering one score for each coordinate $X^{(p)}$ (\emph{i.e.}, taking $J^{(p)} =1$) suffices to detect a mixture structure. This allows $J$ to be kept small when $P$ is large. We provide here some details on why one score could be sufficient for revealing a mixture structure. For simplicity, let $P=1$ and $K=2$. Let
\begin{equation}
X(t) = \sum_{k=1}^{K}\mu_k(t)\1_{\{Z = k\}} + \sum_{j \geq 1}^{} \xi_j\phi_j(t), \quad t \in \setT,
\end{equation}
with some  basis  $\{\phi_j\}_{j \geq 1}$, and $k\in\{1,2\}$
Let $\{\varphi_j\}_{j \geq 1}$ be the FPCA basis. By Lemma \ref{lemma:c_j}, considering only $\varphi_1$, we have 
\begin{equation}
c_1 = \inH{X - \mu, \varphi_j}, \quad \text{where}\quad \mu(\cdot) = p_1\mu_1(\cdot) + (1-p_1) \mu_2(\cdot), \quad \text{and}\quad p_1 = \mathbb P (Z=1)\in(0,1). 
\end{equation}
Then, for $k\in\{1,2\}$, 
\begin{equation}\label{eq:c_j_law}
c_1 \given Z = k \sim \nLaw{m_{k1}}{\tau_{k1}^2}, \;\;\text{where}\;\; m_{k1} = \inH{\mu_k - \mu, \varphi_1} \;\;\text{and}\;\; \tau_{k1}^2 = \sum_{l \geq 1}^{}\inH{\phi_{l}, \varphi_1}^2\sigma_{kl}^2.
\end{equation}
Let us list the cases where the mixture structure can be detected by the first score. 

\textit{Case 1.} We have $m_{11}\neq m_{21}$, that is 
$
 \inH{\mu_1 - \mu_2, \varphi_1} \neq 0.
$
In particular, this requires $\mu_1 \neq  \mu_2$. 

\textit{Case 2.} We have $m_{11}= m_{21}$, but 
$\tau_{11}^2\neq \tau_{21}^2$, that is 
$$
\sum_{l \geq 1}^{}\inH{\phi_{l}, \varphi_1}^2\{ \sigma_{1l}^2 -  \sigma_{2l}^2\} \neq 0.
$$
In particular, if $\inH{\phi_{l}, \varphi_1}=0$ when $l>1$, we  need $\sigma_{11}^2 \neq  \sigma_{22}^2$.

\section{Numerical illustrations}\label{app:numerical_illustation}

We use the Adjusted Rand Index to compare the different clustering algorithms. In the following, we provide a description of this criterion.
When the true labels are available, the estimated partitions are compared with the true partition using the Adjusted Rand Index (ARI) \cite{hubert_comparing_1985}, which is an ``adjusted for chance'' version of the Rand Index \cite{rand_objective_1971}. Let $\mathcal{U} = \{U_1, \dots, U_r\}$ and $\mathcal{V} = \{V_1, \dots, V_s\}$ be two different partitions of $\mathcal{S}_N$, \emph{i.e}
\begin{gather*}
U_i \subset \mathcal{S}_N, \quad 1 \leq i \leq r, \qquad V_j \subset \mathcal{S}_N, \quad 1 \leq j \leq s, \\
\mathcal{S}_N = \bigcup_{i = 1}^r U_i = \bigcup_{j = 1}^s V_j, \\
\text{and}\quad U_i \cap U_{i^\prime} = \varnothing, \quad 1 \leq i, i^\prime \leq r, \qquad V_j \cap V_{j^\prime} = \varnothing, \quad 1 \leq j, j^\prime \leq s.
\end{gather*}
We denote by $n_{ij} \coloneqq \lvert U_i \cap V_j \rvert, 1 \leq i \leq r, 1 \leq j \leq s,$ the number of elements of $\mathcal{S}_N$ that are common to the sets $U_i$ and $V_j$. $n_{i \cdot} \coloneqq \lvert U_i \rvert$ (or $n_{\cdot j} \coloneqq \lvert V_j \rvert$) then corresponds to the number of elements in $U_i$ (or $V_i$). With these notations, the ARI is defined as 
\begin{equation}\label{eq:ARI}
\text{ARI}(\mathcal{U}, \mathcal{V}) = \frac{\sum_{i = 1}^{r}\sum_{j = 1}^{s}\binom{n_{ij}}{2} - \left[\sum_{i = 1}^{r}\binom{n_{i \cdot}}{2}\sum_{j = 1}^{s}\binom{n_{\cdot j}}{2}\right] / \binom{N}{2}}{\frac{1}{2}\left[\sum_{j = 1}^{s}\binom{n_{\cdot j}}{2} + \sum_{j = 1}^{s}\binom{n_{\cdot j}}{2}\right]- \left[\sum_{i = 1}^{r}\binom{n_{i \cdot}}{2}\sum_{j = 1}^{s}\binom{n_{\cdot j}}{2}\right] / \binom{N}{2}}.
\end{equation}

\subsection{Computation times}

In Figure \ref{fig:comptime} we report the computation times for $R = 100$ replications of each of the scenarios. For the Scenarios $1$, $2$ and $3$, we consider $N = 1000$ and the curves are observed over $101$ equidistant points. The $J^{(p)}$ parameter is set to represent $95\%$ of the variance in the data, and $K_{max}$ and $minsize$ are also set to their default values. For Scenario 4, where $P=2$, we consider $N = 500$ and the curves are observed over $100$ equidistant points and the images are observed over a $2$-D grid of $100 \times 100$ points. We choose $J^{(1)}$ that represents $95\%$ of the variance for the curves, and $J^{(2)} = 2$ for the images. We keep $K_{max}$ and $minsize$ as default. In all situations considered, the computation time is inferior to one minute most of the time.  All the computations were performed on a MacBook Pro mid-2014 with $2.6$GHz Inter Core i5 with $8$Go RAM.

\begin{figure}
	\centering
	\includegraphics[scale=0.45]{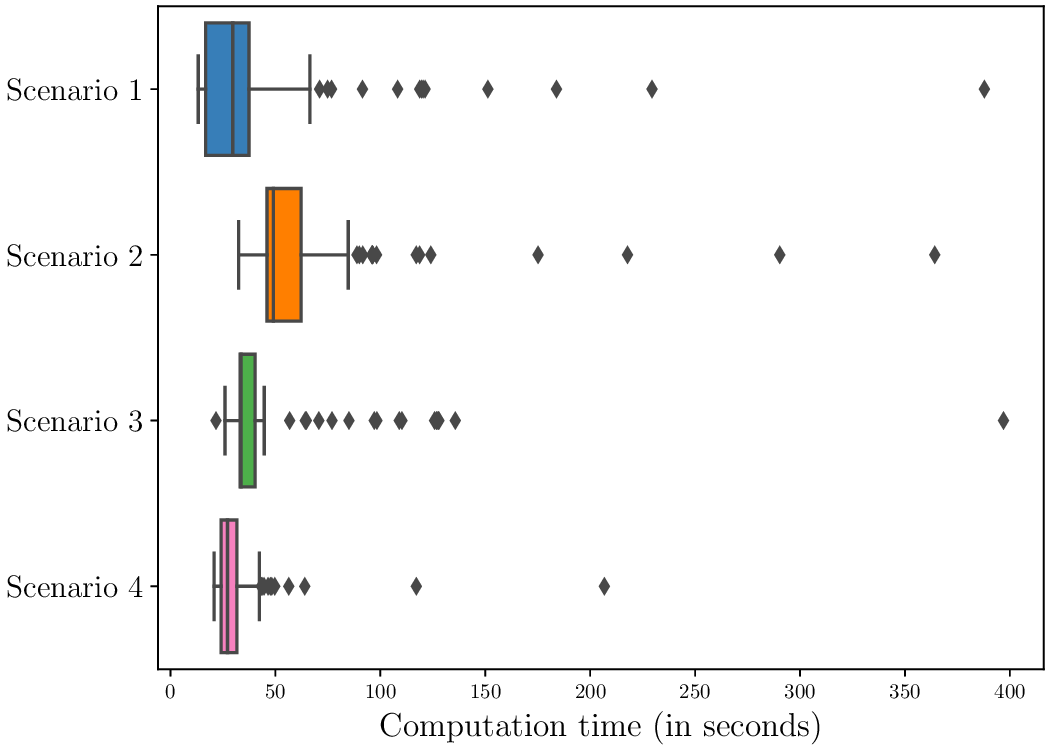}
	\caption{Computation times for all the scenarios from 100 experiments with $N = 1000$ for the Scenarios $1$, $2$ and $3$ and $N = 500$ for the Scenario $4$.}
	\label{fig:comptime}
\end{figure}

\subsection{Influence of $J^{(p)}$}

The  results in this section correspond to $500$ experiments with $N = 1000$, for all Scenarios. 
In Figure \ref{fig:ncomp_comparison} we present a comparison of the performance of the algorithm for different values of $J^{(p)}$. For Scenario 1, the performance does not improve as $J^{(p)}$ increases. For Scenario 2 and 3, larger $J^{(p)}$ leads to slightly larger \ARI{}. 

\begin{figure}
     \centering
     \begin{subfigure}[b]{0.3\textwidth}
         \centering
         \includegraphics[scale=0.35]{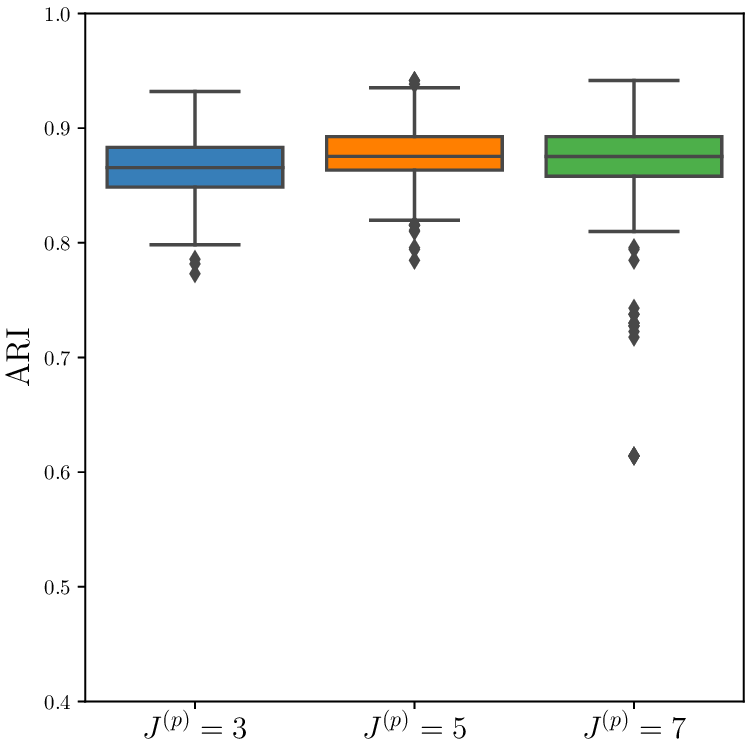}
         \caption{Scenario 1}
         \label{fig:influence_scenario_1}
     \end{subfigure}
     \begin{subfigure}[b]{0.3\textwidth}
         \centering
         \includegraphics[scale=0.35]{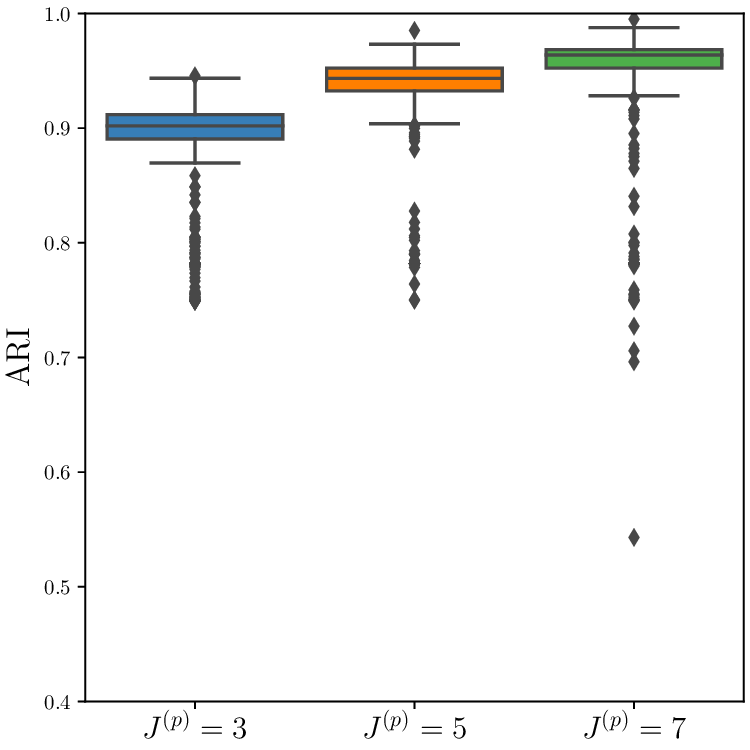}
         \caption{Scenario 2}
         \label{fig:influence_scenario_2}
     \end{subfigure}
     \begin{subfigure}[b]{0.3\textwidth}
         \centering
         \includegraphics[scale=0.35]{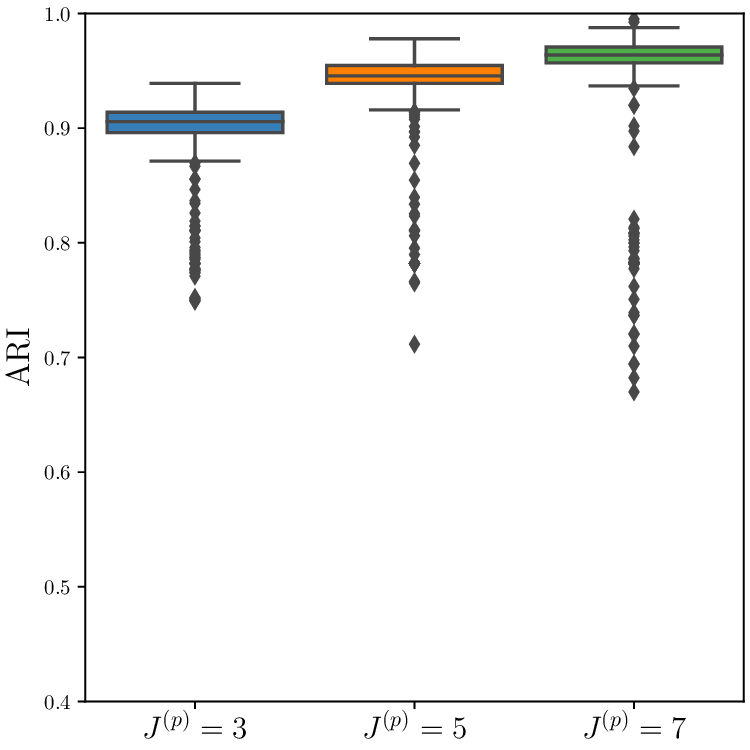}
         \caption{Scenario 3}
         \label{fig:influence_scenario_3}
     \end{subfigure}
     \caption{Estimation of \ARI{} for the influence of $J^{(p)} $}
    \label{fig:ncomp_comparison}
\end{figure}

Table \ref{tab:scenario2_ncomnode} presents the estimation of the number of clusters and \ARI{} when $J^{(p)}$, and thus $J$, varies between the nodes. We use $J^{(p)} = 4$ for the first split and we then reduce to $J^{(p)} = 2$ the rest of the construction of the tree. We remark that the number of clusters is well estimated. The \ARI{} does not really change compared to the case where we do not change $J^{(p)}$ through the tree. 
The investigation of an alternative data-driven choice of $J^{(p)} $ is left for future work.

\begin{table}
\begin{subtable}{\textwidth}
\centering
\begin{tabular}{lrrrr}
Number of clusters &      5 &      6 &      7 &      8 \\
\midrule
\texttt{fCUBT} &  95 &  3 &  1 &  1 \\
\end{tabular}
\caption{Scenario 2 -- Number of clusters selected for \texttt{fCUBT} for $500$ simulations as a percentage.}
\label{tab:n_cluster_2_ncompnode}
\end{subtable}%

\begin{subtable}{\textwidth}
\centering
\begin{tabular}{lrrrrrrrrrrr}
Quantile &       0.0 &       0.1 &       0.2 &       0.3 &       0.4 &       0.5 &       0.6 &       0.7 &       0.8 &       0.9 &       1.0 \\
\midrule
\texttt{fCUBT} &  0.59 &  0.87 &  0.89 &  0.90 &  0.90 &  0.90 &  0.91 &  0.91 &  0.92 &  0.92 &  0.94 \\
\end{tabular}
\caption{Scenario 2 -- Quantile of the ARI for $500$ simulations.}
\label{tab:n_cluster_2_ari_ncompnode}
\end{subtable}
\caption{Results for Scenario 2 with different $J^{(p)} $ per nodes}
\label{tab:scenario2_ncomnode}
\end{table}

\subsection{On the Gaussian assumption}

A Reviewer asked whether  the multivariate Gaussian distribution is a strict assumption. Our  Lemma \ref{lemma:c_j} crucially uses the Gaussian assumption. If the $\xi_j$ in (2.1) are not Gaussian, in general the change of basis does not preserve the distributions. If one assumes that the basis $\{\phi_j\}_{j \geq 1}$ is given, it could be possible to consider mixture models with other distributions for the $\xi_j$. This would be a different modeling approach, which will be considered elsewhere. 

 The Gaussian assumption could be tested using, for instance, the tests proposed in \cite{norm_test}. We investigated the effectiveness of such tests in our simulation framework. We consider six multivariate normality tests: Mardia's (MJB), Royston's, Henze-Zirkler's, Energy, Doornik-Hansen's and Lobato-Velasco's normality tests. The tests are run on the leaves of the grown trees for $100$ replications of each of the four  Scenarios considered above. As the number of leaves can be different for each tree, we test each leaf independently, and gather the results. Figure \ref{fig:gaussian_assumption} reports the $p$-values of the different tests under the null hypothesis for our different scenarios, with $N=1000$. For Scenario 1, only the Royston's multivariate normality test performs satisfactory, that is in most of the replications it does not reject the Gaussian scores hypothesis at the $5\%$ level. For Scenarios 2, 3 and 4, all the tests perform well and indicate the data are likely to our Gaussian assumption.  

\begin{figure}
     \centering
     \begin{subfigure}[b]{0.24\textwidth}
         \centering
         \includegraphics[scale=0.3]{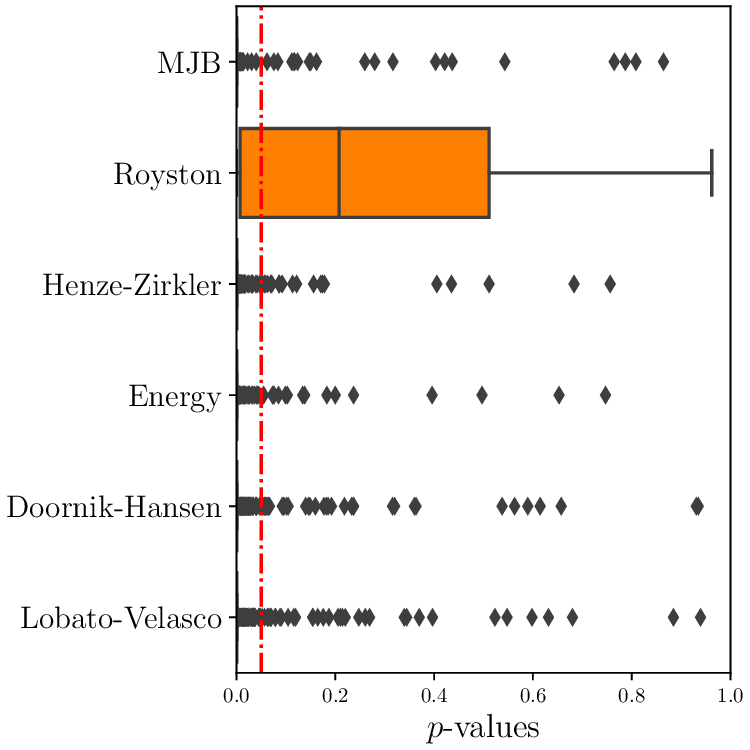}
         \caption{Scenario 1}
         \label{fig:gaussian_assumption_1}
     \end{subfigure}
     \begin{subfigure}[b]{0.24\textwidth}
         \centering
         \includegraphics[scale=0.3]{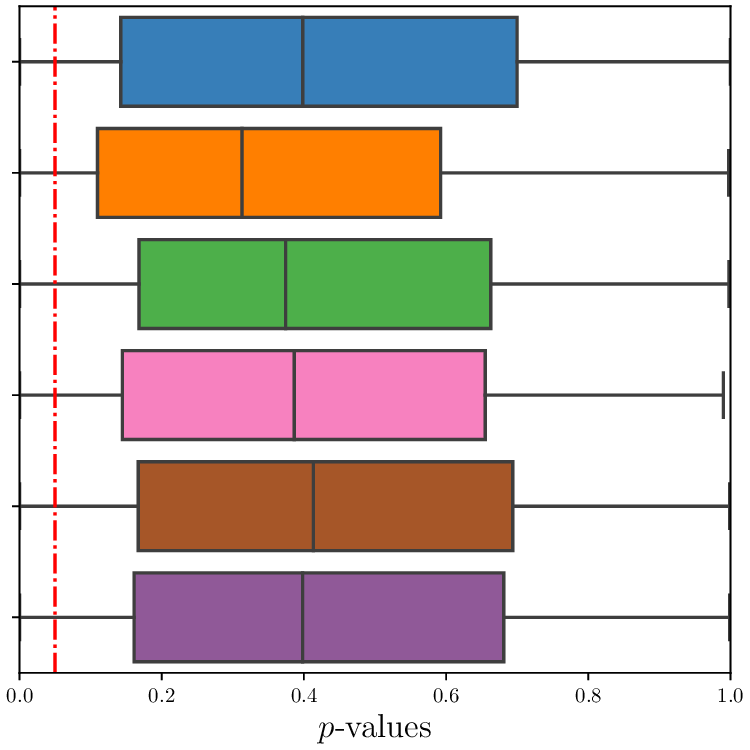}
         \caption{Scenario 2}
         \label{fig:gaussian_assumption_2}
     \end{subfigure}
     \begin{subfigure}[b]{0.24\textwidth}
         \centering
         \includegraphics[scale=0.3]{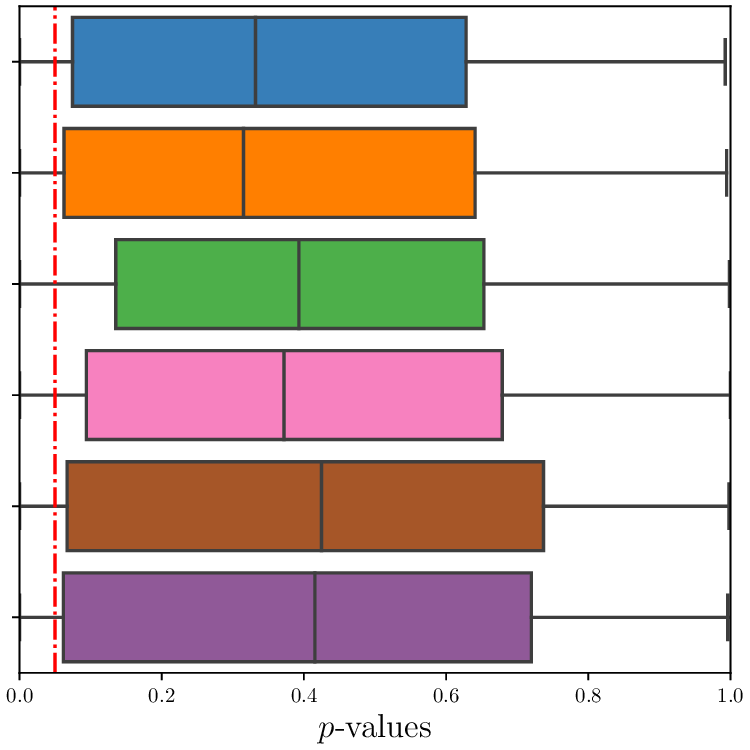}
         \caption{Scenario 3}
         \label{fig:gaussian_assumption_3}
     \end{subfigure}
     \begin{subfigure}[b]{0.24\textwidth}
         \centering
         \includegraphics[scale=0.3]{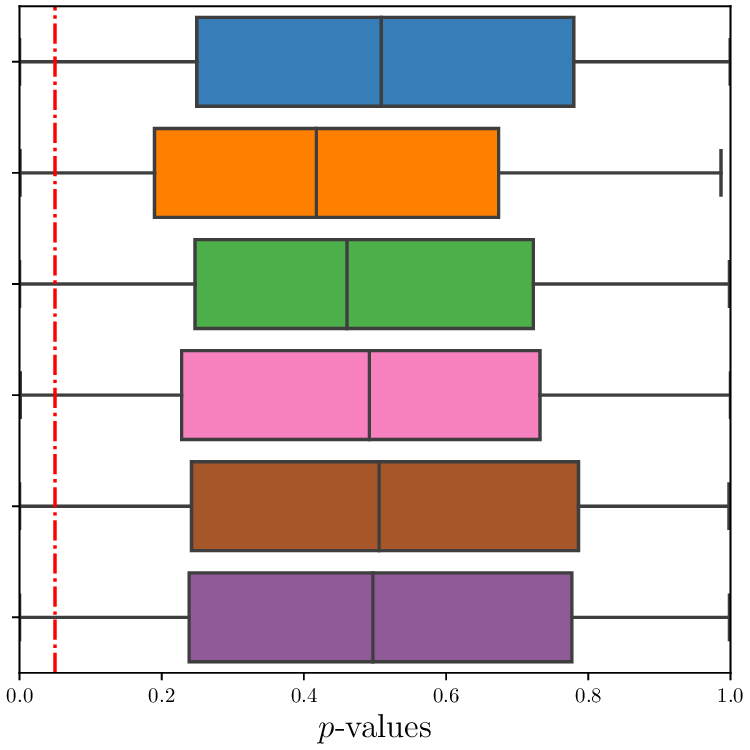}
         \caption{Scenario 4}
         \label{fig:gaussian_assumption_4}
     \end{subfigure}
     \caption{$p$-values for the different normality tests for different Scenarios. The red dashed line represents the $5\%$ level.}
    \label{fig:gaussian_assumption}
\end{figure}

Finally, in reply to one Reviewer's inquiry, we also investigate the robustness of our approach to departures from the Gaussian assumption. 
For this purpose, we reconsider Scenario 1 where the coefficients $c_{11}, c_{12},\ldots,c_{23}$ are generated according to a double exponential distribution with densities $f(x) = \exp(-|x/\beta|)/(2\beta)$, $x\in\mathbb R$, where the different values of the $\beta$ parameter are fixed in a such way as to preserve the variances considered in \eqref{c_law_s1}.   The results obtained from 500 replications are reported in Table \ref{tab:scenario1_gaussain}. 

We notice that our algorithm has a lesser performance for detecting the correct number of clusters, though it is able to do it in a majority of cases (53\%). The quantiles of \ARI{} are also worse than with the Gaussian coefficients $c_{11}, c_{12},\ldots,c_{23}$. However, the performance of the algorithm remains good and comparable to the performance of the competing approaches as we reported in Table \ref{tab:n_clusters}, in the Gaussian coefficients setup. In conclusion, our approach seems to be robust with respect to some departures from the Gaussian assumption. 

\begin{table}
\begin{subtable}{\textwidth}
\centering
\begin{tabular}{lrrrrr}
Number of clusters &      5 &      6 &     7 &      8 &      9 \\
\midrule
\texttt{fCUBT} &  53 &  35 &  8 &  3 &  1 \\
\end{tabular}
\caption{Scenario 1 -- Number of clusters selected for \texttt{fCUBT} for $500$ simulations as a percentage.}
\label{tab:n_cluster_1_laplace}
\end{subtable}%

\begin{subtable}{\textwidth}
\centering
\begin{tabular}{lrrrrrrrrrrr}
Quantile &       0.0 &       0.1 &       0.2 &       0.3 &      0.4 &       0.5 &       0.6 &       0.7 &       0.8 &       0.9 &       1.0 \\
\midrule
\texttt{fCUBT} &  0.49 &  0.65 &  0.68 &  0.69 &  0.71 &  0.72 &  0.73 &  0.74 &  0.75 &  0.77 &  0.82 \\
\end{tabular}
\caption{Scenario 1 -- Quantile of the \ARI{} for $500$ simulations.}
\label{tab:n_cluster_1_ari_laplace}
\end{subtable}
\caption{Results for Scenario 1 with double exponential coefficients}
\label{tab:scenario1_gaussain}

\qquad
\end{table}

\subsection{Another comparison with supervised methods}

This part is similar to Section \ref{sec:compa_sup}, except that, here, we only learn the clusters on the test subset for the \texttt{fCUBT} algorithm.  We performed the simulations $500$ times, and the results are plotted in Figure \ref{fig:simu_comparison}. Between parenthesis, we have written the number of times \texttt{fCUBT} gets the right number of clusters over the $500$ simulations, and so the number of simulations we examine for the computation of the \ARI{}. We point out that this retrieval percentage is much smaller than that in Section \ref{sec:compa_sup} but it is due to the fact that here, we only considers a dataset of size $N_1 = 330$ for applying our algorithms. We remark that our unsupervised method is as good as supervised ones when the true number of classes is found.

\begin{figure}
\qquad 

     \centering
     \begin{subfigure}[b]{0.3\textwidth}
         \centering
         \includegraphics[scale=0.35]{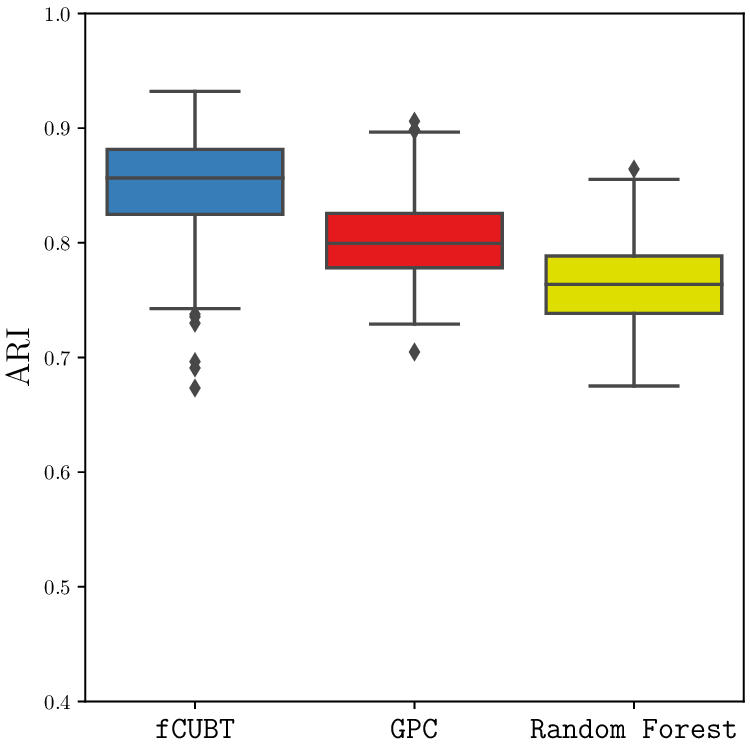}
         \caption{Scenario 1 $(257 / 500)$}
         \label{fig:comparison_scenario_1}
     \end{subfigure}
     \begin{subfigure}[b]{0.3\textwidth}
         \centering
         \includegraphics[scale=0.35]{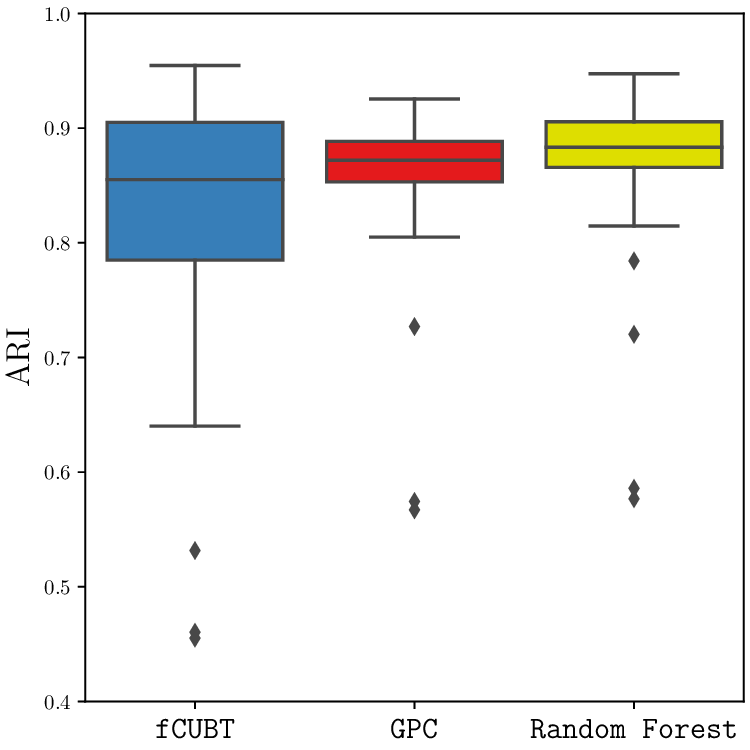}
         \caption{Scenario 2 $(172 / 500)$}
         \label{fig:comparison_scenario_2}
     \end{subfigure}
     \begin{subfigure}[b]{0.3\textwidth}
         \centering
         \includegraphics[scale=0.35]{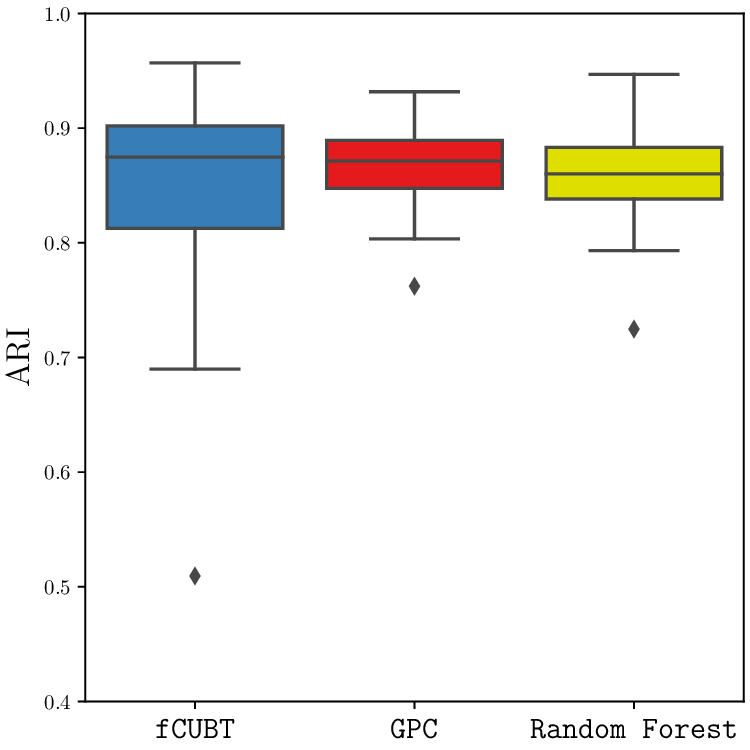}
         \caption{Scenario 3 $(188 / 500)$}
         \label{fig:comparison_scenario_3}
     \end{subfigure}
     \caption{Estimation of \ARI{} for the comparison with supervised models. The number of replications among the 500 where  \texttt{fCUBT} recovers the correct number of clusters $K$ is given between parenthesis}
    \label{fig:simu_comparison}
\end{figure}

\section{Real data analysis: the rounD dataset}\label{app:real_data}

In this section, we provided details on the rounD dataset.  The aerial view of the roundabout is presented in Figure \ref{fig:roundabout}.  
This dataset is part of a set of vehicle trajectory data provided by the Institute for Automotive Engineering (ika) in RWTH Aachen University. One may cite the highD dataset (about highways) \cite{krajewski_highd_2018} and the inD dataset (about intersections) \cite{bock_ind_2019}, such others produced by ika. These datasets are particularly useful for studying the behavior of road users in some specific situations. They start to replace the Next Generation Simulation (NGSIM) study \cite{fhwa_u.s._department_of_transportation_ngsimnext_2006}, widely used in traffic flow studies, as a benchmark for models about trajectory prediction or classification, because they provide more accurate data (see \emph{e.g.} \cite{izquierdo_vehicle_2020,messaoud_attention_2020,messaoud_non-local_2019,diehl_graph_2019,bhattacharyya_conditional_2020,wu_trajectory_2020} for some references). 

\begin{figure}
\centering
\includegraphics[scale=0.5]{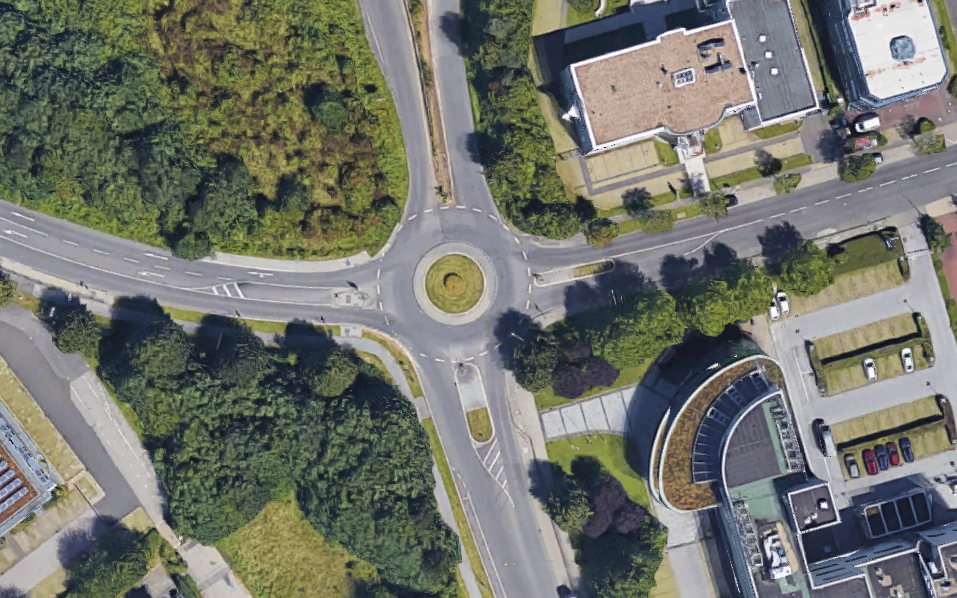}
\caption{rounD dataset -- the considered roundabout. Source: Google Maps}
\label{fig:roundabout}
\end{figure}
%

Figure \ref{fig:node_eigenfunction} presents the first eigenfunction for each coordinate $X^{(p)}$ in the first MFPCA at the beginning of the tree. 

\begin{figure}
    \centering
    \includegraphics[scale=0.4]{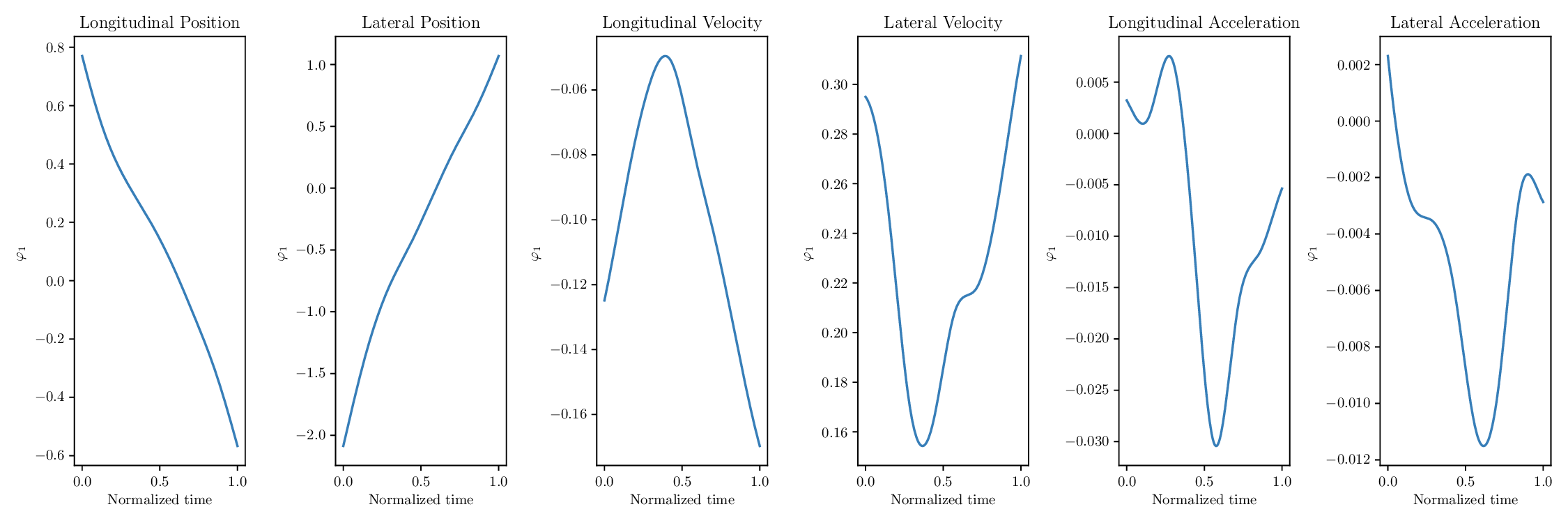}
    \caption{ The first eigenfunction for each coordinate $X^{(p)}$ of the first node of the tree. }
    \label{fig:node_eigenfunction}
\end{figure}

Figure \ref{fig:round_tree} presents the first three levels of the tree, starting from the root, obtained using a small subsample (that represents $311$ multivariate curves) of the rounD dataset. This sub-tree helps to understand how the clusters are built. The two colors represents the binary splitting at this node. We see that the different entry/exit combinations are already separated in the three steps. The split inside an entry/exit scenario is produced in the subsequent binary splits of the tree. 

We tested whether  the Gaussian assumption used for our model-based clustering is reasonable on the rounD dataset. We consider six multivariate normality tests: Mardia’s (MJB), Royston’s, Henze-Zirkler’s, Energy, Doornik-Hansen’s and Lobato- Velasco’s normality tests. The tests are performed on the leaves of the tree with more than $10$ observations. Figure \ref{fig:gaussian_round} presents the $p$-values computed for each test. In a majority of the cases, we do not reject the null hypothesis. The data are thus likely to have a mixed Gaussian distribution.

\begin{sidewaysfigure}[ht]
    \centering
    \includegraphics[width=\textwidth]{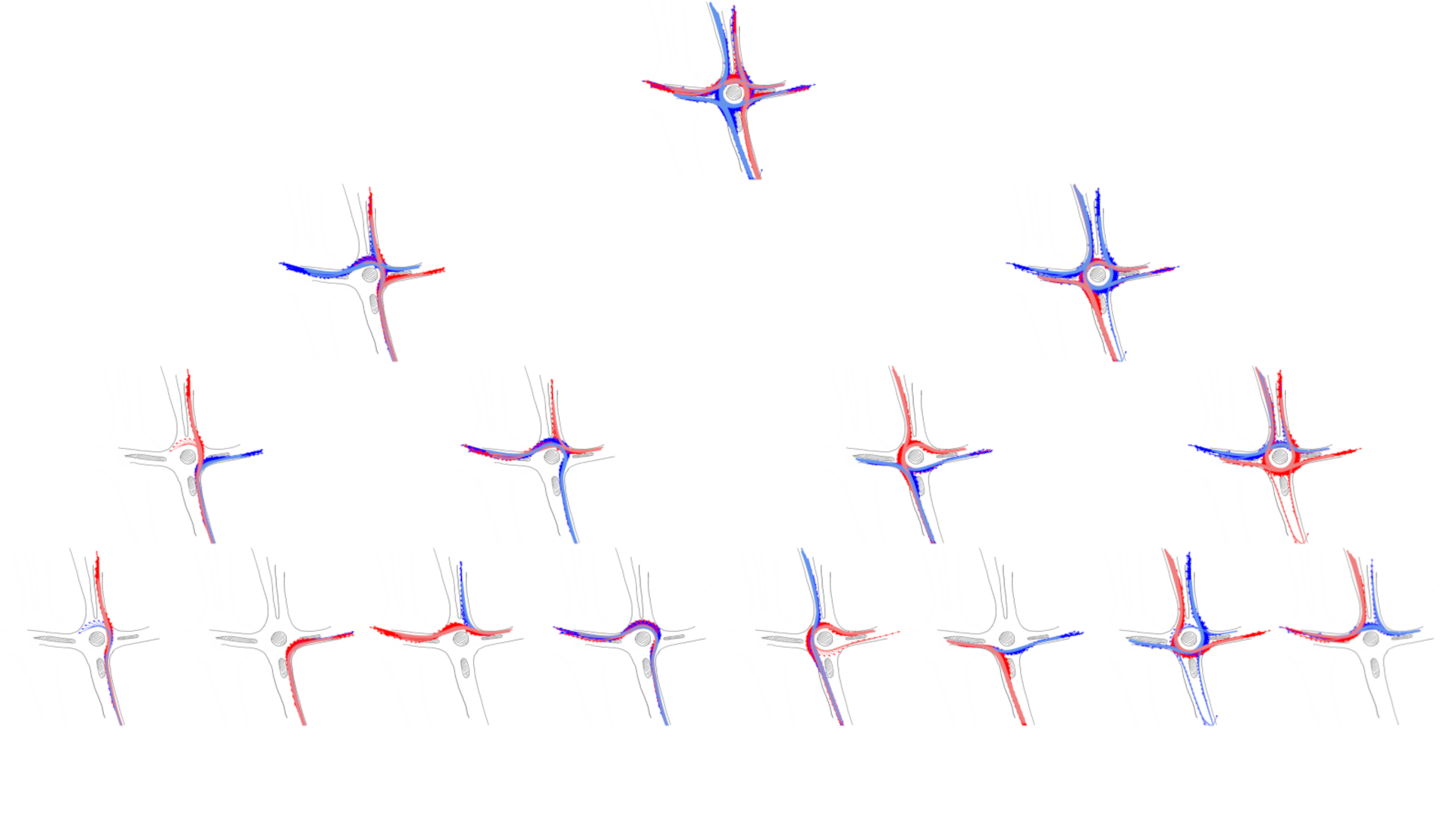}
    \caption{Beginning of the tree for the rounD dataset.}
    \label{fig:round_tree}
\end{sidewaysfigure}

\begin{figure}
    \centering
    \includegraphics[scale=0.45]{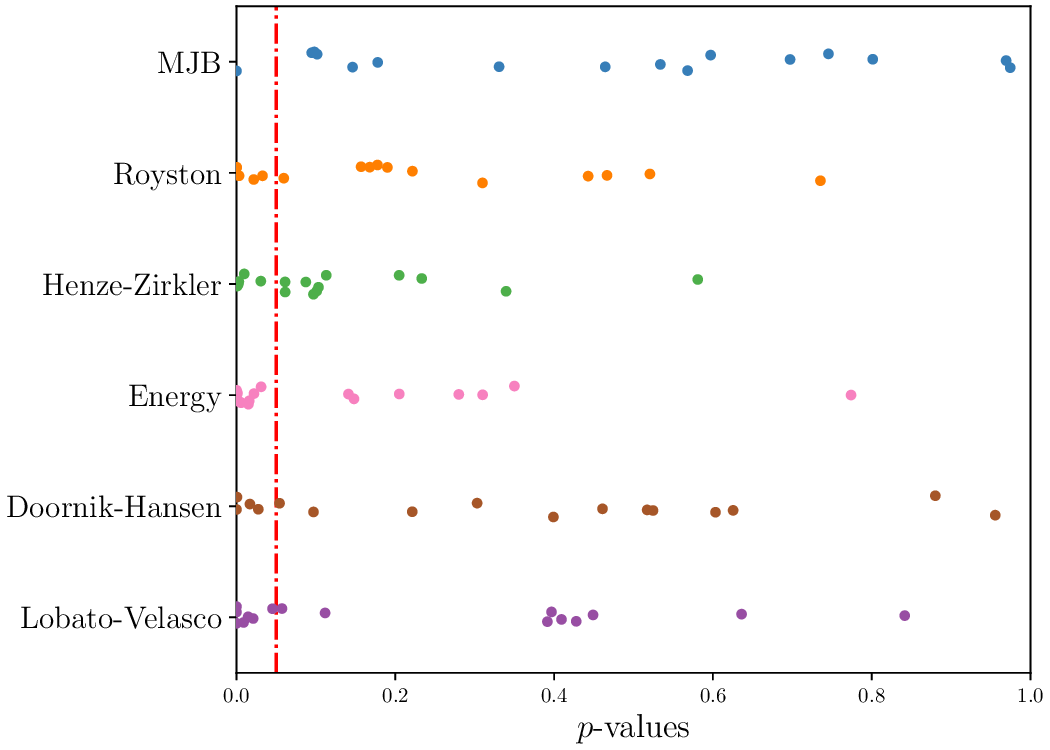}
    \caption{ $p$-values for the different normality tests for the rounD dataset. Each point represents one leaf of the tree with more than $10$ observations. The red dashed line represents the $5\%$ level. }
    \label{fig:gaussian_round}
\end{figure}

\bibliographystyle{abbrvnat}
\bibliography{biblio}

\end{document}